\documentclass[13pt]{article}
\usepackage{latexsym}
\usepackage{geometry}
\usepackage{graphicx}
\usepackage{amsmath, amssymb, amsthm}
\usepackage{booktabs}
\usepackage{algorithm}
\usepackage{algorithmic}

\usepackage[utf8]{inputenc} % allow utf-8 input
\usepackage[T1]{fontenc}    % use 8-bit T1 fonts

\usepackage{url}
\usepackage{natbib}
\usepackage{appendix}

\usepackage{amsmath}
\usepackage{amssymb}
\usepackage{mathtools}
\usepackage{amsthm}
\usepackage{placeins}
\usepackage{pifont}
\usepackage{multirow}
\usepackage{subcaption}

\usepackage{amsfonts}
\usepackage{multirow}
\usepackage{multicol}

\usepackage{color}

\usepackage{xcolor,colortbl}
\definecolor{LightCyan}{rgb}{0.88,1,1}

\usepackage{hyperref}
\usepackage{tcolorbox}

\newcommand{\R}{\mathbb{R}}
\newcommand{\E}{\mathbb{E}}
\newcommand{\norm}[1]{\lVert #1 \rVert}

\newtheorem{theorem}{Theorem}
\newtheorem{lemma}{Lemma}

\newtheorem{assumption}{Assumption}
\newtheorem{remark}{Remark}

\begin{document}
	
\title{ Federated Compositional Muon Optimizer for \\ Matrix-Wise Models }

\author{
Wang Yan\thanks{Wang Yan is with College of Computer Science and Technology, Nanjing University of Aeronautics and Astronautics, Nanjing, China.}, \	
Feihu Huang\thanks{Feihu Huang is with College of Computer Science and Technology,
Nanjing University of Aeronautics and Astronautics, Nanjing, China;
and also with MIIT Key Laboratory of Pattern Analysis and Machine Intelligence, Nanjing, China. Email: huangfeihu2018@gmail.com}
 }

\date{}
\maketitle

\begin{abstract}
 Muon, a more recently developed optimizer, is useful for matrix-wise models in AI areas.
 Although many works have studied Muon and its variants, these methods are still not particularly well-suited for hierarchical structured problems. 
 To fill this gap, we propose an effective federated compositional Muon (FedCoMuon) optimizer to solve distributed matrix-wise compositional optimization problems. Specifically, our FedCoMuon optimizer builds on 
 compositional gradient tracking and orthogonalized momentum. 
 Moreover, we propose a variance reduced variant of FedCoMuon (FedCoMuon-VR)
 based on a momentum-based variance reduced technique. 
 In theory, we analyze the convergence properties of our algorithms under the non-i.i.d. and non-convex settings. 
 In particular, we prove that our FedCoMuon-VR obtains a lower sample complexity of $O(\epsilon^{-3})$ for finding an $\epsilon$-stationary solution than the existing FedMuon algorithms.   
 Extensive numerical experiments on robust federated learning and task-distributed risk-sensitive meta learning show that our proposed methods are competitive with existing compositional baselines and achieve the best reported accuracy in several settings.
\end{abstract}

\section{Introduction}
\label{sec:intro}

Federated learning (FL)~\citep{mcmahan2017,kairouz2019advances,li2021survey} 
enables multiple clients to collaboratively train a shared model without 
centralizing their local data, making it attractive for privacy-sensitive 
and resource-constrained applications such as mobile intelligence, healthcare, 
recommendation systems, and edge learning. The first federated learning algorithm, i.e., FedAvg~\citep{mcmahan2017},
established the standard paradigm of alternating local stochastic updates
with periodic model aggregation, while subsequent methods such as 
FedProx~\citep{li2020federated}, SCAFFOLD~\citep{karimireddy2020}, and 
FedOpt~\citep{reddi2020adaptive} further improved federated training under
client heterogeneity and limited communication. Despite their success, 
these methods were designed for standard federated empirical risk minimization, 
where the global objective is typically a single-level weighted average of client losses. 

Beyond this standard single-level formulation, many federated applications 
involve more complex nested objectives~\citep{yang2026compositional}, especially when the learned model is 
expected to perform reliably across heterogeneous or adverse client distributions. 
In distributionally robust FL, for example, a nonlinear risk measure is 
applied to client-level expected losses, resulting in a nested stochastic 
compositional objective~\citep{huang2021compositional}. Similar nested 
structures arise in the task-distributed meta learning problem~\citep{huang2021compositional,wang2021memory} 
and federated reinforcement learning~\citep{jin2022fedrl,tarzanagh2022fednest}. 
Since the outer gradient depends on an inner stochastic expectation, 
directly applying local SGD generally yields biased gradient estimates. 
Several federated compositional optimization methods correct 
this bias, including ComFedL~\citep{huang2021compositional}, 
Local-MOML~\citep{wang2021memory}, Local-SCGDM~\citep{gao2022convergence}, 
and FedNest~\citep{tarzanagh2022fednest}.
Although these methods address the nested stochastic structure, their 
algorithmic formulations do not explicitly exploit the matrix structure of model parameters.

In fact, modern neural networks such as Convolutional Neural Networks (CNN)~\citep{lecun2015deep} and Transformers~\citep{vaswani2017attention}
are matrix-wise models with numerous matrix parameter blocks in attention and linear layers.
Since training such models is computationally expensive, designing optimizers that 
can effectively exploit their matrix structure has attracted increasing attention. 
Muon~\citep{jordan2024muon} updates the matrix parameter blocks of matrix-wise models using an
orthogonalized momentum direction, which is efficiently computed through 
Newton--Schulz iterations. It has demonstrated competitive empirical 
performance across a variety of large language models (LLMs)~\citep{liu2025scalable}. 
More recently, several studies have 
extended Muon to federated learning~\citep{takezawa2025fedmuon,liu2025fedmuon,zhang2025provable}. 
However, the existing Muon-type federated algorithms are primarily developed for 
single-level stochastic objectives, and it remains unclear how Muon 
can be 
effectively applied to federated stochastic compositional optimization, where the nested expectation introduces additional gradient estimation errors under heterogeneous client data. This raises a natural question: 
\textit{Could we develop effective Muon-type federated algorithms for
	matrix-wise compositional optimization under heterogeneous data?}

In this paper, we provide an affirmative answer to the above question and develop two effective Muon-type federated learning algorithms (i.e., FedCoMuon and FedCoMuon-VR) 
to solve distributed matrix-wise stochastic compositional problems. In summary, our main contributions are summarized as follows:

\begin{itemize}
	\item We develop a class of effective Muon-type federated compositional algorithms (i.e., FedCoMuon and FedCoMuon-VR) for 
	distributed matrix-wise stochastic compositional optimization. 
	Specifically, our FedCoMuon combines compositional gradient tracking with orthogonalized matrix momentum, 
	while FedCoMuon-VR further incorporates a momentum-based variance-reduction technique~\citep{cutkosky2019momentum}.
	
	\item We provide a solid convergence analysis for the proposed algorithms under non-convex 
	and non-i.i.d. settings. To find an $\epsilon$-stationary point, our FedCoMuon requires 
	a sample complexity of $\mathcal{O}(\epsilon^{-4})$ and a communication 
	complexity of $\mathcal{O}(\epsilon^{-3})$, while our FedCoMuon-VR has a lower sample complexity of  $\mathcal{O}(\epsilon^{-3})$ while retaining 
	the same communication complexity. In particular, our FedCoMuon-VR has a lower sample complexity than the existing Federated Muon algorithms~\citep{takezawa2025fedmuon,liu2025fedmuon,zhang2025provable,compressedgluon}.
	
	\item Experiments on the task-distributed meta learning problem and robust federated learning 
	demonstrate effectiveness of the proposed algorithms.
\end{itemize}

\section{Related Work}
\label{sec:related_work}

\subsection{Federated Learning}
Federated learning~\citep{mcmahan2017} is a popular distributed learning paradigm in machine learning. 
FedAvg~\citep{mcmahan2017} is the first federated learning algorithm, which reduces communication frequency by allowing 
each client to perform multiple local SGD steps before server aggregation. 
Under heterogeneous client data, however, repeated local updates may drift 
from the global descent direction and slow convergence. Subsequently, some effective variants of FedAvg have been developed. For example, FedProx~\citep{li2020federated} 
and SCAFFOLD~\citep{karimireddy2020} mitigate this issue through proximal 
regularization and control variances, respectively. Meanwhile, momentum-based, adaptive, and variance-reduced extensions~\citep{khanduri2021,cheng2024,padamfed2025} further improve
convergence and communication efficiency. 

\subsection{Compositional Optimization}
Compositional optimization~\citep{wang2017stochastic} is a class of effective nested structural optimization problems in machine learning. 
Since compositional optimization is widely used in many machine learning tasks such as robust learning and 
federated learning, many algorithms~\citep{wang2017stochastic,ghadimi2020single,chen2021solving,zhang2019multi} have recently been developed. 
For example, SCGD~\citep{wang2017stochastic} 
controls this bias by tracking the inner mapping with a moving average. 
Subsequently, several accelerated algorithms based on momentum or
variance-reduction techniques have been developed for non-convex
stochastic compositional optimization~\citep{
	ghadimi2020single,chen2021solving,zhang2019multi,jiang2022optimal}.

To solve distributed compositional optimization, 
some federated compositional algorithms have been developed. 
For example, 
ComFedL~\citep{huang2021compositional} introduced a federated
compositional framework for distributionally robust learning
and meta learning. Subsequently,   Local-MOML~\citep{wang2021memory} and
Local-SCGDM~\citep{gao2022convergence} improved sample and communication efficiency through local updates and 
momentum-based tracking.  FedDRO~\citep{khanduri2023feddro} 
further studies the interaction between compositional gradient 
bias and client heterogeneity in distributionally robust federated 
learning.  \cite{tarzanagh2022fednest,huang2026faster,gao2024doubly} proposed variance-reduced federated compositional algorithms for distributed non-convex stochastic composition optimization based on variance-reduction techniques. 
In fact, the federated setting is more challenging because compositional gradient 
bias and client drift should be controlled simultaneously. 

\subsection{Muon-Based Optimization}
In the last two years, Muon~\citep{jordan2024muon} has emerged as a promising optimizer, which directly updates the matrix parameter blocks of matrix-wise models. 
Specifically, it uses orthogonalization of the matrix-valued momentum by a few Newton--Schulz 
iterations, which can be interpreted as an LMO-based optimizer over a 
spectral-norm ball~\citep{pethick2025}. More recently, its convergence properties have been
studied in non-convex stochastic optimization~\citep{shen2025,li2025note,riabinin2025gluon,kovalev2025understanding,kim2026convergence}. Subsequently, its variance-reduced variants~\citep{sfyraki2025lions,huang2025limuon,qian2025muon,chang2025convergence} have also been studied. 

More recently, several works~\cite{takezawa2025fedmuon,liu2025fedmuon,zhang2025provable} have begun to study Federated Muon (i.e., FedMuon) algorithms for distributed matrix-wise optimization. 
Specifically, \cite{takezawa2025fedmuon} proposed the FedMuon algorithm based on bias-correction mechanism to 
address the bias induced by local linear minimization oracles.  \cite{liu2025fedmuon} presented the FedMuon algorithm by using  
momentum aggregation and local-global alignment to 
mitigate client drift under heterogeneous data. 
Meanwhile, \cite{zhang2025provable} developed the FedMuon algorithm with hyper-parameter 
choices independent of problem-specific constants, and established its convergence under 
both bounded-variance and heavy-tailed stochastic noise. Subsequently, \cite{compressedgluon} proposed communication-efficient federated Gluon algorithm based on gradient compression and error feedback. 
However, these methods focus on standard single-level objectives and do not 
provide algorithms or convergence guarantees for stochastic compositional optimization under heterogeneous client data.

\paragraph{Notation.}
Let $[K]=\{1,2,\ldots,K\}$ denote the set of clients. 
$\|\cdot\|$ denotes Euclidean and 
spectral norm for vector and matrix, respectively. For matrices
$A,B\in\mathbb R^{m\times n}$, we use
$\langle A,B\rangle=\operatorname{tr}(A^\top B)$ to denote the Frobenius
inner product, and use $\|A\|_F$ to denote the Frobenius norm. $A\otimes B$ denotes 
the Kronecker product of matrices $A$ and $B$.

\section{Preliminaries}
\label{sec:preliminaries}
In this paper, we study the Muon optimizer to solve the following distributed
matrix-wise compositional optimization problem:
\begin{align}\label{eq:main-problem}
	\min_{W\in\R^{m\times n}}
	\frac{1}{K}\sum_{k=1}^K
	\E_{\zeta\sim\mathcal D_f^k}
	\left[
	f^k
	\left(
	\E_{\xi\sim\mathcal D_g^k}[g^k(W;\xi)];
	\zeta
	\right)
	\right],
\end{align}
where the inner and outer expected mappings are defined as
$g^k(W)\triangleq \E_{\xi\sim\mathcal D_g^k}[g^k(W;\xi)]:
\R^{m\times n}\to\R^d$ and
$f^k(y)\triangleq \E_{\zeta\sim\mathcal D_f^k}[f^k(y;\zeta)]:
\R^d\to\R$, respectively. We write
$F(W)=K^{-1}\sum_{k=1}^K F^k(W)$, where
$F^k(W)=f^k(g^k(W))$. The inner and outer data distributions,
$\mathcal D_g^k$ and $\mathcal D_f^k$, may differ across clients. This formulation therefore captures both the nested compositional structure and the data heterogeneity inherent in federated settings.

Next, we introduce several mild assumptions for Problem~\eqref{eq:main-problem}.

\begin{assumption}[Lower-bounded objective]\label{assm:lower}
	The global objective $F(W)$ has a lower bound, i.e.,
	$F_*:=\inf_{W\in\R^{m\times n}}F(W)>-\infty$.
\end{assumption}

\begin{assumption}[Unbiased stochastic oracles and bounded variances]
	\label{assm:oracle}
	For any client $k\in[K]$, the stochastic oracles are unbiased:
	\begin{align}
		\E_{\xi^k}[g^k(W;\xi^k)]=g^k(W), \ \E_{\xi^k}[\nabla g^k(W;\xi^k)]=\nabla g^k(W), \ \E_{\zeta^k}[\nabla_y f^k(y;\zeta^k)]=\nabla_y f^k(y). \nonumber 
	\end{align}
	The samples are independent across clients and iterations, and $\xi^k$ is
	independent of $\zeta^k$. In addition, there exist constants
	$\sigma_g,\sigma_{\nabla g},\sigma_f>0$ such that, for any
	$W\in\R^{m\times n}$ and $y\in\R^d$,
	\begin{align}
		&\E_{\xi^k}\|g^k(W;\xi^k)-g^k(W)\|^2\le \sigma_g^2, \nonumber \\
		&\E_{\xi^k}\|\nabla g^k(W;\xi^k)-\nabla g^k(W)\|_F^2
		\le \sigma_{\nabla g}^2, \nonumber \\ 
		&\E_{\zeta^k}\|\nabla_y f^k(y;\zeta^k)-\nabla_y f^k(y)\|^2\le \sigma_f^2. \nonumber 
	\end{align}
\end{assumption}

\begin{assumption}[Bounded gradient moments]\label{assm:bounded-grad}
	For any client $k\in[K]$, there exist constants $C_g,C_f>0$ such that, for any
	$W\in\R^{m\times n}$ and $y\in\R^d$, the stochastic Jacobians and outer
	gradients have bounded second moments:
	\begin{align}
		\E_{\xi^k}\|\nabla g^k(W;\xi^k)\|_F^2\le C_g^2,  \ \E_{\zeta^k}\|\nabla_y f^k(y;\zeta^k)\|^2\le C_f^2. \nonumber
	\end{align}
\end{assumption}

Assumptions~\ref{assm:lower}--\ref{assm:bounded-grad} have been commonly used in the 
convergence analysis of stochastic compositional optimization algorithms~\citep{tarzanagh2022fednest,gao2022convergence,jiang2022optimal,gao2024doubly}.

\begin{algorithm}[t]
	\caption{FedCoMuon Algorithm}
	\label{alg:fedcomuon}
	\begin{algorithmic}[1]
		\STATE \textbf{Input:} $\eta>0$, $\alpha\in[0,1)$, $\beta\in[0,1)$, and $\tau>0$.
		\STATE \begin{minipage}[t]{0.92\linewidth}
			\textbf{Initialize:} For all $k\in[K]$, set
			$W_0^k=W_0\in\R^{m\times n}$,
			$u_0^k=g^k(W_0^k;\xi_0^k)$, and
			$M_0^k=\nabla g^k(W_0^k;\xi_0^k)
			\bigl(\nabla_y f^k(u_0^k;\zeta_0^k)\otimes I_n\bigr)$ for
			$\xi_0^k\sim\mathcal D_g^k$ and $\zeta_0^k\sim\mathcal D_f^k$;
		\end{minipage}
		\FOR{$t = 0, 1, \dots, T-1$}
		\FOR{each client $k \in [K]$ (\textbf{in parallel})}
		\STATE \begin{minipage}[t]{0.92\linewidth}
			$(U_t^k,\Sigma_t^k,V_t^k)=\mathrm{SVD}(M_t^k)$;
			\enspace // Orthonormalize
			$M_t^k$ with the Newton--Schulz approach
		\end{minipage}
		\STATE $W_{t+1}^k=W_t^k-\eta U_t^k(V_t^k)^\top$;
		\STATE Draw two independent samples $\xi_{t+1}^k\sim\mathcal D_g^k$ and $\zeta_{t+1}^k\sim\mathcal D_f^k$;
		\STATE $u_{t+1}^k=\alpha g^k(W_{t+1}^k;\xi_{t+1}^k)+(1-\alpha)u_t^k$;
		\STATE $Z_{t+1}^k=\nabla g^k(W_{t+1}^k;\xi_{t+1}^k)
		\bigl(\nabla_y f^k(u_{t+1}^k;\zeta_{t+1}^k)\otimes I_n\bigr)$;
		\STATE $M_{t+1}^k=\beta Z_{t+1}^k+(1-\beta)M_t^k$;
		\ENDFOR
		\IF{$\operatorname{mod}(t+1,\tau)=0$}
		\STATE Receive $\{W_{t+1}^k,M_{t+1}^k\}_{k=1}^K$ from all clients;
		\STATE $\bar W_{t+1}=K^{-1}\sum_k W_{t+1}^k$;
		\STATE $\bar M_{t+1}=K^{-1}\sum_k M_{t+1}^k$;
		\STATE Send $\bar W_{t+1}$ and $\bar M_{t+1}$ to each client;
		\ENDIF
		\ENDFOR
		\STATE \textbf{Output:} Sampling uniformly from $\{\bar W_t\}_{t=0}^{T-1}$ (in theory), and $\bar W_T$ (in practice).
	\end{algorithmic}
\end{algorithm}

\section{Federated Compositional Muon Methods}
\label{sec:algorithms}

In this section, we propose a class of efficient federated compositional Muon algorithms (i.e., FedCoMuon and FedCoMuon-VR) for large matrix-valued models. Specifically, our FedCoMuon builds on compositional gradient tracking and Muon, while our FedCoMuon-VR further builds on a momentum-based variance-reduction technique.

\begin{algorithm}[!t]
	\caption{FedCoMuon-VR Algorithm}
	\label{alg:fedcomuon-vr}
	\begin{algorithmic}[1]
		\STATE \textbf{Input:} $\eta>0$, $\alpha,\beta,\gamma,\rho\in[0,1)$,
		$\tau>0$, and $b>0$.
		\STATE \begin{minipage}[t]{0.92\linewidth}
			\textbf{Initialize:} For all $k\in[K]$, set
			$W_0^k=W_0\in\R^{m\times n}$, and draw
			$b$ i.i.d.\ samples $\{\xi_{0,j}^k\}_{j=1}^{b}$ from $\mathcal D_g^k$ and
			$b$ i.i.d.\ samples $\{\zeta_{0,j}^k\}_{j=1}^{b}$ from $\mathcal D_f^k$, and set
			$u_0^k=\frac1b\sum_{j=1}^{b}g^k(W_0^k;\xi_{0,j}^k)$,
			$H_0^k=\frac1b\sum_{j=1}^{b}\nabla g^k(W_0^k;\xi_{0,j}^k)$,
			and
			$v_0^k=\frac1b\sum_{j=1}^{b}\nabla_y f^k(u_0^k;\zeta_{0,j}^k)$,
			$M_0^k=H_0^k(v_0^k\otimes I_n)$.
			Set $\bar M_0=K^{-1}\sum_{k=1}^K M_0^k$ and $M_0^k=\bar M_0$ for all
			$k\in[K]$;
		\end{minipage}
		\FOR{$t=0,1,\ldots,T-1$}
		\FOR{each client $k\in[K]$ \textbf{(in parallel)}}
		\STATE $(U_t^k,\Sigma_t^k,V_t^k)=\operatorname{SVD}(M_t^k)$;
		\STATE $W_{t+1}^k=W_t^k-\eta U_t^k(V_t^k)^\top$;
		\ENDFOR
		\IF{$\operatorname{mod}(t+1,\tau)=0$}
		\STATE Receive $\{W_{t+1}^k\}_{k=1}^K$ from all clients;
		\STATE \begin{minipage}[t]{0.92\linewidth}
			$\bar W_{t+1}=\frac{1}{K}\sum_{k=1}^K W_{t+1}^k$;
		\end{minipage}
		\STATE Send $\bar W_{t+1}$ to each client;
		\ENDIF
		\FOR{each client $k\in[K]$ \textbf{(in parallel)}}
		\STATE Draw $\xi_{t+1}^k\sim\mathcal D_g^k$ and
		$\zeta_{t+1}^k\sim\mathcal D_f^k$;
		\STATE \begin{minipage}[t]{0.92\linewidth}
			$u_{t+1}^k=g^k(W_{t+1}^k;\xi_{t+1}^k)
			+(1-\alpha)\bigl(u_t^k-g^k(W_t^k;\xi_{t+1}^k)\bigr)$;
		\end{minipage}
		\STATE \begin{minipage}[t]{0.92\linewidth}
			$v_{t+1}^k=\Pi_{C_f}\bigl[\nabla_y f^k(u_{t+1}^k;\zeta_{t+1}^k)
			+(1-\beta)\bigl(v_t^k-\nabla_y f^k(u_t^k;\zeta_{t+1}^k)\bigr)\bigr]$;
		\end{minipage}
		\STATE \begin{minipage}[t]{0.92\linewidth}
			$H_{t+1}^k=\Pi_{C_g}\bigl[\nabla g^k(W_{t+1}^k;\xi_{t+1}^k)
			+(1-\gamma)\bigl(H_t^k-\nabla g^k(W_t^k;\xi_{t+1}^k)\bigr)\bigr]$;
		\end{minipage}
		\STATE $M_{t+1}^k=(1-\rho)M_t^k+\rho H_{t+1}^k(v_{t+1}^k\otimes I_n)$;
		\ENDFOR
		\IF{$\operatorname{mod}(t+1,\tau)=0$}
		\STATE Receive $\{M_{t+1}^k\}_{k=1}^K$ from all clients;
		\STATE \begin{minipage}[t]{0.92\linewidth}
			$\bar M_{t+1}=\frac{1}{K}\sum_{k=1}^K M_{t+1}^k$;
		\end{minipage}
		\STATE Send $\bar M_{t+1}$ to each client;
		\ENDIF
		\ENDFOR
		\STATE \textbf{Output:} Sampling uniformly from $\{\bar W_t\}_{t=0}^{T-1}$ (in
		theory), and $\bar W_T$ (in practice).
	\end{algorithmic}
\end{algorithm}

\subsection{FedCoMuon Algorithm}
\label{subsec:fedcomuon}
In this subsection, we provide an efficient 
federated compositional Muon (i.e., FedCoMuon) algorithm to solve the problem~(\ref{eq:main-problem}). 
Algorithm~\ref{alg:fedcomuon} shows 
the algorithmic framework for FedCoMuon.

In our FedCoMuon algorithm, at
the $t$-th iteration, each client $k$ uses its local stochastic samples to
update the matrix momentum as
\begin{equation}
	\label{eq:fedcomuon-momentum}
	M_{t+1}^k=\beta Z_{t+1}^k+(1-\beta)M_t^k ,
\end{equation}
where $\beta\in[0,1)$ is the momentum parameter. Here $Z_{t+1}^k$ denotes the
stochastic compositional gradient, defined as
\begin{equation}
	\label{eq:fedcomuon-stochastic-gradient}
	Z_{t+1}^k
	=
	\nabla g^k(W_{t+1}^k;\xi_{t+1}^k)
	\bigl(\nabla_y f^k(u_{t+1}^k;\zeta_{t+1}^k)\otimes I_n\bigr),
\end{equation}
where $\xi_{t+1}^k$ and $\zeta_{t+1}^k$ are fresh stochastic samples. In the above
estimator, $u_{t+1}^k$ tracks the inner function value $g^k(W_{t+1}^k)$,
thereby enabling $Z_{t+1}^k$ to form a matrix-form stochastic
compositional gradient estimate. The parameter $\alpha\in[0,1)$ controls the
moving-average update of $u_{t+1}^k$.

Following the Muon algorithm~\citep{jordan2024muon}, we use
Newton--Schulz iterations instead of an expensive exact SVD to obtain
the orthogonalized momentum direction. 

Every $\tau$ local iterations, clients communicate with the server. In each
communication round, the server receives
$\{W_{t+1}^k,M_{t+1}^k\}_{k=1}^K$, averages them to obtain $\bar W_{t+1}$ and
$\bar M_{t+1}$, and sends the averaged variables back to all clients.

\subsection{FedCoMuon-VR Algorithm}
\label{subsec:fedcomuon-vr}

In this subsection, we provide an efficient 
federated compositional variance-reduced Muon (i.e., FedCoMuon-VR) algorithm to solve the problem~(\ref{eq:main-problem}) based on the momentum-based variance-reduction technique. 
Algorithm~\ref{alg:fedcomuon-vr} provides 
the algorithmic framework for FedCoMuon-VR.

In Algorithm~\ref{alg:fedcomuon-vr}, after obtaining
$W_{t+1}^k$, each client $k$ draws fresh samples
$\xi_{t+1}^k$ and $\zeta_{t+1}^k$, and recursively updates
$u_{t+1}^k$, $v_{t+1}^k$, and $H_{t+1}^k$ to track the inner function
value, the outer gradient, and the inner Jacobian, respectively. Each 
recursive estimator evaluates its function on the same fresh sample 
at two consecutive iterates, which reduces the stochastic estimation error.

Let
$\mathcal B_f:=\{v:\|v\|\le C_f\}$ and
$\mathcal B_g:=\{H:\|H\|_F\le C_g\}$, and let $\Pi_{C_f}$ and
$\Pi_{C_g}$ denote the Euclidean projections onto $\mathcal B_f$ and
$\mathcal B_g$, respectively. These projection operators keep the
outer-gradient estimator and the Jacobian estimator bounded.
The resulting estimators form the variance-reduced compositional gradient
estimate $H_{t+1}^k(v_{t+1}^k\otimes I_n)$, which is used to update the
matrix momentum with parameter $\rho\in[0,1)$. Every $\tau$ local
iterations, the server averages the local model parameters and momentum
matrices, and broadcasts the averaged variables back to all clients.

\section{Convergence Analysis}
\label{sec:theory}
In this section, we study the convergence properties of our FedCoMuon and FedCoMuon-VR algorithm under some mild assumptions. All related proofs are provided in the Appendix.

\subsection{Convergence Properties of Our FedCoMuon}

\begin{assumption}[Smoothness]\label{assm:smooth}
	For any client $k\in[K]$, the population mappings $g^k$ and $f^k$ are
	$L_g$- and $L_f$-smooth, respectively. That is, for any
	$W_1,W_2\in\R^{m\times n}$ and $y_1,y_2\in\R^d$,
	\begin{align}
		\|\nabla g^k(W_1)-\nabla g^k(W_2)\|_F
		&\le	L_g\|W_1-W_2\|_F, \ 
		\|\nabla_y f^k(y_1)-\nabla_y f^k(y_2)\|\le
		L_f\|y_1-y_2\|. \nonumber
	\end{align}
\end{assumption}

\begin{assumption}[Gradient heterogeneity]\label{assm:hetero}
	There exists a constant $\delta\ge0$ such that, for all $W\in\R^{m\times n}$,
	\begin{align}
		\frac1K\sum_{k=1}^K
		\|\nabla F^k(W)-\nabla F(W)\|_F^2
		\le
		\delta^2. \nonumber
	\end{align}
\end{assumption}

\begin{theorem}\label{thm:fedcomuon}
	Suppose Assumptions~\ref{assm:lower}--\ref{assm:hetero} hold. Let
	$\{W_t^k,u_t^k,M_t^k\}$ be generated by
	Algorithm~\ref{alg:fedcomuon} with $0<\alpha,\beta<1$, and
	$\alpha\tau\le1$. Then, we have
	{\small
		\begin{align}
			\frac1T\sum_{t=0}^{T-1}
			\mathbb E\|\nabla F(\bar W_t)\|_F
			&\le
			\frac{F(\bar W_0)-F_*}{\eta T} +
			\frac{2\sqrt n C_gL_f\sigma_g}{\beta T}
			+ 2\sqrt n
			\bigl(\frac{1}{\beta T}+2\beta\tau\bigr)
			\bigl(C_g\sigma_f+C_f\sigma_{\nabla g}\bigr)
			\notag\\
			&\quad
			+
			2\sqrt n C_gL_f(1+2\beta\tau)
			\biggl(
			\frac{4\sigma_g^2}{\alpha T}
			+\frac{86C_g^2n}{\alpha^2}\eta^2
			+86\alpha\sigma_g^2
			\biggr)^{1/2} \notag\\
			&\quad
			+\eta nL_F
			\bigl(
			\frac12+4\tau+\frac{6}{\beta}+8\beta\tau^2
			\bigr)+
			2\sqrt n\,\beta\tau\delta
			+	\frac{2\sqrt{n\beta}}{\sqrt K}
			\sqrt{C_g^2\sigma_f^2+C_f^2\sigma_{\nabla g}^2}.
			\label{eq:fedcomuon-bound}
		\end{align}
	}
\end{theorem}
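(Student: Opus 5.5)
We follow the standard LMO/Muon descent template applied to the virtual averaged iterate $\bar W_t=K^{-1}\sum_k W_t^k$. This is the quantity the server returns at synchronization steps; we take it to be defined at every $t$. Write $O_t^k=U_t^k(V_t^k)^\top$, so that $\bar W_{t+1}=\bar W_t-\frac{\eta}{K}\sum_k O_t^k$ and $\|O_t^k\|=1$, $\|O_t^k\|_F\le\sqrt n$. Assumptions~\ref{assm:bounded-grad} and~\ref{assm:smooth} make $F$ $L_F$-smooth with $L_F$ built from $C_g,C_f,L_g,L_f$. Then
\begin{align*}
F(\bar W_{t+1})\le F(\bar W_t)-\frac{\eta}{K}\sum_k\langle\nabla F(\bar W_t),O_t^k\rangle+\frac{\eta^2 nL_F}{2}.
\end{align*}
Since $\langle M_t^k,O_t^k\rangle=\|M_t^k\|_*\ge\|M_t^k\|_F$ and $|\langle A,O\rangle|\le\|A\|_*\le\sqrt n\|A\|_F$, we get
\begin{align*}
-\langle\nabla F(\bar W_t),O_t^k\rangle\le-\|\nabla F(\bar W_t)\|_F+2\sqrt n\,\|M_t^k-\nabla F(\bar W_t)\|_F.
\end{align*}
Telescoping and dividing by $\eta T$ yields the first term and the $\eta nL_F/2$ term of~\eqref{eq:fedcomuon-bound}. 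The whole proof then reduces to bounding $\frac1T\sum_t\frac1K\sum_k\E\|M_t^k-\nabla F(\bar W_t)\|_F$, which accounts for the prefactor $2\sqrt n$ on the remaining terms.

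\textbf{Decomposing the momentum error.} Split $M_t^k-\nabla F(\bar W_t)=(M_t^k-\bar M_t)+(\bar M_t-\nabla F(\bar W_t))$, where $\bar M_t$ is the virtual average of the $M_t^k$.

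For the averaged part, unroll $\bar M_{t+1}=(1-\beta)\bar M_t+\beta\bar Z_{t+1}$ and write
\begin{align*}
\bar Z_{t+1}-\nabla F(\bar W_{t+1})=\underbrace{(\bar Z_{t+1}-\E\bar Z_{t+1})}_{\text{noise}}+\underbrace{\text{(bias from }u)}_{\text{$\le C_gL_f\|u^k-g^k(W^k)\|$}}+\underbrace{\tfrac1K\textstyle\sum_k(\nabla F^k(W^k_{t+1})-\nabla F^k(\bar W_{t+1}))}_{\text{drift}}.
\end{align*}
The remaining piece is the drift of the gradient, $\nabla F(\bar W_t)-\nabla F(\bar W_{t+1})$, which has size $\le L_F\eta\sqrt n$.
\begin{itemize}
\item The noise terms are martingale differences, independent across clients. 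Their geometrically weighted sum has second moment of order $\beta\,(C_g^2\sigma_f^2+C_f^2\sigma_{\nabla g}^2)/K$; this gives the last term. The initial error decays like $(1-\beta)^t$, and summing over $t$ gives the $1/(\beta T)$ terms.
\item The gradient-drift piece accumulates to $\eta L_F\sqrt n/\beta$, which produces the $6/\beta$ contribution.
\end{itemize}

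\textbf{Inner-tracking and consensus errors.} For the inner tracking error $e_t^k=u_t^k-g^k(W_t^k)$, use the standard SCGD recursion
\begin{align*}
\E\|e_{t+1}\|^2\le(1-\alpha)\E\|e_t\|^2+\frac{2}{\alpha}\,C_g^2\,n\eta^2+2\alpha^2\sigma_g^2,
\end{align*}
where we use $\|W_{t+1}^k-W_t^k\|_F\le\eta\sqrt n$ and $g^k$ being $C_g$-Lipschitz. Summing gives $\frac1T\sum\E\|e_t\|^2\lesssim\frac{\sigma_g^2}{\alpha T}+\frac{C_g^2n\eta^2}{\alpha^2}+\alpha\sigma_g^2$. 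Jensen's inequality then produces the square-root term.

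The consensus parts come from the $\tau$ local steps. For the models, $\|W_t^k-\bar W_t\|_F\le2\eta\tau\sqrt n$, which gives the $4\tau$ and $8\beta\tau^2$ terms through $L_F$. For the momenta, since $M$ is reset to $\bar M$ at every synchronization, $M_t^k-\bar M_t$ accumulates at most $\tau$ steps of $\beta(Z^k-\bar Z)$. This contributes $\beta\tau$ times the heterogeneity $\delta$ (via Assumption~\ref{assm:hetero}), plus $\beta\tau$ times the noise and $u$-bias terms. This explains the factors $2\beta\tau$ and $(1+2\beta\tau)$. The condition $\alpha\tau\le1$ is used so that local $u$ drift within a round stays controlled.

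\textbf{Main obstacle.} The main obstacle is bookkeeping the coupling between the $u$-tracking error and momentum averaging. The $u$-bias enters $M$ nonlinearly through $\nabla_y f^k(u;\zeta)$ and is not a martingale. I would handle it by passing to first moments with Cauchy--Schwarz against the $C_g$ bound, before averaging over $t$. The other delicate point is keeping the exponentially weighted sums consistent across synchronization boundaries; for this I would work with virtual averages throughout.
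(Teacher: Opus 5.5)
Your overall architecture matches the paper's: Muon descent on $\bar W_t$ via $\langle A,U_BV_B^\top\rangle\ge\|A\|_F-2\sqrt n\|A-B\|_F$; a split into momentum disagreement ($\beta\tau$ times oracle error, model drift and $\delta$) plus averaged momentum error; an inner-tracking bound; and the $2\eta\tau\sqrt n$ model drift. Comparing $\bar M_t$ directly with $\nabla F(\bar W_t)$, instead of with $K^{-1}\sum_k\nabla F^k(W_t^k)$ as the paper does, is a harmless and cleaner variant. Because $\|\bar W_{t+1}-\bar W_t\|_F\le\eta\sqrt n$ always holds, you get $2/\beta$ rather than $6/\beta$.

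\textbf{The gap is in the inner-tracking step.} Your recursion uses $\|W_{t+1}^k-W_t^k\|_F\le\eta\sqrt n$ for every $t$. At a synchronization step, however, the local model is overwritten by $\bar W_{t+1}$ while $u^k$ is left unchanged. The resulting jump is of order $\eta\tau\sqrt n$, not $\eta\sqrt n$. If you insert this jump into the per-step Young inequality (parameter $\alpha$), each boundary costs about $\frac{C_g^2}{\alpha}\cdot 4\eta^2\tau^2 n$ once per round, i.e.\ $4C_g^2n\eta^2\tau/\alpha$ per step. After dividing by the contraction rate this gives $C_g^2 n\eta^2\tau/\alpha^2$, not the $\tau$-free $86C_g^2n\eta^2/\alpha^2$ in the statement. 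With $\eta=T^{-3/4}$, $\alpha=T^{-1/2}$, $\tau=T^{1/4}$, this term is $T^{-1/4}$ under the square root, so the rate degrades to $T^{-1/8}$.

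The missing idea is where $\alpha\tau\le1$ is really used; it is not about drift within a round.
\begin{enumerate}
\item Iterate the clean one-step recursion across a whole block.
\item Apply Young's inequality with parameter $\alpha\tau/2$ only at the block boundary.
\item Use $1+2/(\alpha\tau)\le 3/(\alpha\tau)$, so the boundary costs $O(\tau C_g^2 n\eta^2/\alpha)$ per block, the same order as the in-block steps.
\item Use $(1+\alpha\tau/2)(1-\alpha)^\tau\le 1-\alpha\tau/3$ to get a block contraction.
\end{enumerate}
Dividing the per-block cost by the contraction $\alpha\tau/3$ gives the stated $O(C_g^2n\eta^2/\alpha^2)$ term.

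\textbf{A smaller issue is the noise.} Defining it as $\bar Z_{t+1}$ minus its conditional mean does not give second moment $\beta(C_g^2\sigma_f^2+C_f^2\sigma_{\nabla g}^2)/K$. The reason is that $u_{t+1}^k$ depends on $\xi_{t+1}^k$. So the centered part of $\nabla g^k(W;\xi_{t+1}^k)\bigl((\nabla_y f^k(u_{t+1}^k)-\nabla_y f^k(g^k(W)))\otimes I_n\bigr)$ sits inside your noise. Its second moment would need a joint fourth-moment bound that the assumptions do not provide.

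Use the paper's split instead. Put this whole $u$-dependent piece into the bias and bound it in first moment by Cauchy--Schwarz, as your last paragraph already suggests. Keep as martingale noise only two pieces:
\begin{itemize}
\item $\nabla g^k(W;\xi)\bigl((\nabla_y f^k(u;\zeta)-\nabla_y f^k(u))\otimes I_n\bigr)$, which is centered given $\xi$;
\item $(\nabla g^k(W;\xi)-\nabla g^k(W))\bigl(\nabla_y f^k(g^k(W))\otimes I_n\bigr)$, which is centered given the past.
\end{itemize}
Their second moments are at most $C_g^2\sigma_f^2$ and $C_f^2\sigma_{\nabla g}^2$, which gives the claimed $\sqrt{\beta/K}$ term.
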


\begin{remark}[Parameter choice and complexity]
	By choosing $\eta=T^{-3/4}$, $\alpha=\beta=T^{-1/2}$, and
	$\tau=T^{1/4}$, Theorem~\ref{thm:fedcomuon} yields
	$T^{-1}\sum_{t=0}^{T-1}\mathbb E\|\nabla F(\bar W_t)\|_F
	=O(T^{-1/4})$.
	Thus, achieving an $\epsilon$-stationary point requires
	$T=O(\epsilon^{-4})$. Since each local iteration uses a constant
	number of stochastic samples, the total sample complexity per client is
	$O(T)=O(\epsilon^{-4})$. Moreover, the number of communication rounds is
	$T/\tau=T^{3/4}=O(\epsilon^{-3})$.
\end{remark}

\subsection{Convergence Properties of Our FedCoMuon-VR}

\begin{assumption}[Mean-square sample smoothness]\label{assm:vr-sample-smooth}
	For any client $k\in[K]$, the sample mappings are mean-square Lipschitz. That is,
	for any $W_1,W_2\in\R^{m\times n}$ and $y_1,y_2\in\R^d$,
	\begin{align}
		&\E_{\xi^k}\|g^k(W_1;\xi^k)-g^k(W_2;\xi^k)\|^2
		\le C_g^2\|W_1-W_2\|_F^2, \nonumber \\
		&\E_{\xi^k}\|\nabla g^k(W_1;\xi^k)-\nabla g^k(W_2;\xi^k)\|_F^2
		\le L_g^2\|W_1-W_2\|_F^2, \nonumber \\
		&\E_{\zeta^k}\|\nabla_y f^k(y_1;\zeta^k)-\nabla_y f^k(y_2;\zeta^k)\|^2
		\le L_f^2\|y_1-y_2\|^2. \nonumber
	\end{align}
\end{assumption}

\begin{assumption}[Client heterogeneity]\label{assm:vr-hetero}
	There exist constants $\Delta_g,\Delta_{\nabla g},\Delta_f\ge0$ such that, for
	any $k,j\in[K]$, $W\in\R^{m\times n}$ and $y\in\R^d$,
	\begin{align}
		&\|\nabla_y f^k(y)-\nabla_y f^j(y)\|\le\Delta_f, \nonumber \\
		&\|\nabla g^k(W)-\nabla g^j(W)\|_F\le\Delta_{\nabla g}, \nonumber \\
		&\|g^k(W)-g^j(W)\|\le\Delta_g. \nonumber
	\end{align}
\end{assumption}
Assumption~\ref{assm:vr-hetero} imposes a common bounded-heterogeneity condition for compositional federated learning under non-i.i.d. setting ~\citep{tarzanagh2022fednest,huang2026faster}.

\begin{theorem}
	\label{thm:fedcomuon-vr}
	Suppose Assumptions~\ref{assm:lower}--\ref{assm:bounded-grad},
	\ref{assm:vr-sample-smooth}, 
	and~\ref{assm:vr-hetero} hold. Let
	$\{W_t^k,u_t^k,v_t^k,H_t^k,M_t^k\}$ be generated by
	Algorithm~\ref{alg:fedcomuon-vr} with
	$\eta>0$, $0<\alpha,\beta,\gamma,\rho<1$, $b\ge1$, and $\tau>0$.
	Then, for any $T\ge1$, we have
	{\small
		\begin{align}
			\frac1T\sum_{t=0}^{T-1}
			\mathbb E\|\nabla F(\bar W_t)\|_F
			&\le
			\frac{F(\bar W_0)-F_*}{\eta T}
			+\frac{L_Fn\eta}{2}
			+2\sqrt n
			\biggl[
			\frac{4C_gC_f}{\rho T}
			+\frac{L_F\sqrt n\,\eta}{\rho}
			\notag\\
			&\qquad
			+
			(2+\sqrt6\,\rho\tau)
			\biggl(
			C_f^2\sigma_{\nabla g}^2
			\left(\frac1{\gamma Tb}+2\gamma\right)
			+
			2C_g^2\sigma_f^2
			\Bigl(\frac1{\beta Tb}
			\notag\\
			&\qquad
			+2\beta\Bigr)
			+
			2C_g^2L_f^2\sigma_g^2
			\left[
			\frac1{\alpha Tb}
			+2\alpha
			+
			\frac{8\alpha}{\beta Tb}
			+
			\frac{8\alpha^2}{\beta}(1+2\alpha)
			\right]
			\notag\\
			&\qquad
			+
			4n\eta^2\tau
			\Bigl(
			\frac{2C_g^4L_f^2}{\alpha}
			+
			\frac{4C_g^4L_f^2}{\beta}
			+
			\frac{16C_g^4L_f^2\alpha}{\beta}
			\notag\\
			&\qquad
			+
			\frac{C_f^2L_g^2}{\gamma}
			\Bigr)
			+
			n\eta^2\tau^2
			\left(
			C_f^2L_g^2
			+
			C_g^4L_f^2
			+
			\frac{L_F^2}{2}
			\right)
			\biggr)^{1/2}
			\notag\\
			&\qquad
			+
			\sqrt6\,\rho\tau
			\left(
			C_f^2\Delta_{\nabla g}^2
			+
			2C_g^2\Delta_f^2
			+
			2C_g^2L_f^2\Delta_g^2
			\right)^{1/2}
			\biggr].
			\label{eq:fedcomuon-vr-bound}
		\end{align}
	}
\end{theorem}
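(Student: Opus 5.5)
The proof follows the standard LMO/Muon descent template applied to the virtual averaged iterate $\bar W_t=K^{-1}\sum_k W_t^k$. This average is well defined at every $t$, not only at communication rounds, and it coincides with the server average when $\operatorname{mod}(t,\tau)=0$.

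\textbf{Descent step.} Since every update has the form $W_{t+1}^k=W_t^k-\eta O_t^k$ with $O_t^k=U_t^k(V_t^k)^\top$ and $\|O_t^k\|=1$, we have $\bar W_{t+1}=\bar W_t-\eta K^{-1}\sum_k O_t^k$ and $\|\bar W_{t+1}-\bar W_t\|_F\le\sqrt n\,\eta$. The $L_F$-smoothness of $F$ (implied by Assumptions~\ref{assm:bounded-grad} and \ref{assm:vr-sample-smooth}, with $L_F=C_g^2L_f+C_fL_g$) gives
\[
F(\bar W_{t+1})\le F(\bar W_t)-\frac{\eta}{K}\sum_k\langle\nabla F(\bar W_t),O_t^k\rangle+\frac{L_Fn\eta^2}{2}.
\]
For each $k$, the identity $\langle M_t^k,O_t^k\rangle=\|M_t^k\|_*$ and the duality $\langle A,O\rangle\le\|A\|_*\le\sqrt n\|A\|_F$ yield
\[
-\langle\nabla F(\bar W_t),O_t^k\rangle\le-\|\nabla F(\bar W_t)\|_F+2\sqrt n\,\|M_t^k-\nabla F(\bar W_t)\|_F.
\]
Summing over $t$ and telescoping with Assumption~\ref{assm:lower} produces the first two terms of \eqref{eq:fedcomuon-vr-bound} plus $2\sqrt n\,T^{-1}\sum_t K^{-1}\sum_k\mathbb E\|M_t^k-\nabla F(\bar W_t)\|_F$. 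Everything then reduces to bounding this momentum error, and by Jensen it suffices to bound its mean square.

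\textbf{Error decomposition.} Write $G_t^k=H_t^k(v_t^k\otimes I_n)$ and $\nabla F^k(W)=\nabla g^k(W)(\nabla f^k(g^k(W))\otimes I_n)$. Then
\[
G_t^k-\nabla F^k(W_t^k)=(H_t^k-\nabla g^k)(v_t^k\otimes I_n)+\nabla g^k\bigl((v_t^k-\nabla f^k(u_t^k))\otimes I_n\bigr)+\nabla g^k\bigl((\nabla f^k(u_t^k)-\nabla f^k(g^k(W_t^k)))\otimes I_n\bigr).
\]
The projections guarantee $\|v_t^k\|\le C_f$ and $\|H_t^k\|_F\le C_g$, so the squared error is bounded by $C_f^2e_H+2C_g^2e_v+2C_g^2L_f^2e_u$, where $e_u,e_v,e_H$ are the mean-square tracking errors of $u,v,H$.

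\textbf{STORM recursions.} For each tracker I would derive the usual STORM-type recursion. Using Assumption~\ref{assm:vr-sample-smooth}, nonexpansiveness of the projection (the projected set contains the true quantity), and $\|W_{t+1}^k-W_t^k\|_F\le\sqrt n\eta$:
\begin{itemize}
\item $e_u^{t+1}\le(1-\alpha)e_u^t+2\alpha^2\sigma_g^2+2C_g^2n\eta^2$;
\item $e_H^{t+1}\le(1-\gamma)e_H^t+2\gamma^2\sigma_{\nabla g}^2+2L_g^2n\eta^2$;
\item $e_v^{t+1}\le(1-\beta)e_v^t+2\beta^2\sigma_f^2+2L_f^2\,\mathbb E\|u_{t+1}^k-u_t^k\|^2$, where $\|u_{t+1}^k-u_t^k\|^2\lesssim C_g^2n\eta^2+\alpha^2 e_u^t+\alpha^2\sigma_g^2$.
\end{itemize}
Unrolling from the batch-$b$ initialization (initial error $\sigma^2/b$) and averaging over $T$ gives the terms $\frac1{\alpha Tb}+2\alpha$, $\frac{8\alpha}{\beta Tb}+\frac{8\alpha^2}{\beta}(1+2\alpha)$, and $n\eta^2/\alpha,\ n\eta^2/\beta,\ n\eta^2/\gamma$ that appear in the bracket. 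The extra factor $\tau$ on those $\eta^2$ terms comes from consensus drift, handled next.

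\textbf{Momentum and drift; the main obstacle.} Since $M$ is averaged every $\tau$ steps and $M_0^k=\bar M_0$, I would split
\[
M_t^k-\nabla F(\bar W_t)=(\bar M_t-\nabla F(\bar W_t))+(M_t^k-\bar M_t).
\]
\begin{itemize}
\item \emph{Averaged part.} $\bar M_t$ is an exponential average with rate $\rho$ of $K^{-1}\sum_k G_s^k$. Its deviation from $\nabla F(\bar W_t)$ is controlled by three pieces: the initial error $2C_gC_f(1-\rho)^t$, which sums to $\frac{4C_gC_f}{\rho T}$; the lag $\|\nabla F(\bar W_s)-\nabla F(\bar W_t)\|\le L_F\sqrt n\eta(t-s)$, which gives $\frac{L_F\sqrt n\eta}{\rho}$; and the estimator errors, together with the model drift $\|W_t^k-\bar W_t\|_F\le 2\sqrt n\eta\tau$ mapped through Lipschitzness of $\nabla F^k$ and $g^k$. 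The drift piece yields the $n\eta^2\tau^2(\cdot)$ terms.
\item \emph{Disagreement part.} Since the last synchronization at most $\tau$ steps ago, $M_t^k-\bar M_t=\rho\sum_{s}(1-\rho)^{t-1-s}\bigl(G_{s}^k-K^{-1}\sum_jG_{s}^j\bigr)$ over at most $\tau$ terms, so it is at most $\rho\tau\max_s\|G_s^k-\bar G_s\|$. Bounding $\|G_s^k-G_s^j\|$ via the same three-term split, now adding the heterogeneity of Assumption~\ref{assm:vr-hetero} ($\Delta_{\nabla g},\Delta_f,L_f\Delta_g$) and the tracking errors, produces the $\sqrt6\,\rho\tau$ factors, where the $\sqrt6$ comes from a three-way $(a+b+c)^2$ split applied twice.
\end{itemize}
I expect this step to be the main obstacle. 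The difficulty is coupling the within-round disagreement, the non-averaged trackers $u,v,H$ (which are never synchronized and so drift toward client-specific values), and the model drift without circularity. I would first try crude deterministic bounds using projection boundedness and $\|O\|=1$, which is why $\eta^2\tau$ and $\eta^2\tau^2$ appear rather than sharper variance-level terms. Finally, I would collect all pieces under one square root via Jensen to obtain \eqref{eq:fedcomuon-vr-bound}.
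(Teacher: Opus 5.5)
Your overall architecture matches the paper's. You use the Muon descent inequality at $\bar W_t$, then split $M_t^k-\nabla F(\bar W_t)$ into $\bar M_t-\nabla F(\bar W_t)$, handled by a first-order $\rho$-recursion that gives the $\frac{4C_gC_f}{\rho T}$ and $\frac{L_F\sqrt n\,\eta}{\rho}$ terms outside the square root, and $M_t^k-\bar M_t$, unrolled within a round to give the factor $\rho\tau$. You also use STORM recursions for $u,v,H$ from the batch-$b$ initialization, and product-estimator and disagreement bounds at $\bar W_t$ with the heterogeneity constants.

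\textbf{The gap is in the tracker recursion.} You use $\|W_{t+1}^k-W_t^k\|_F\le\sqrt n\,\eta$ for every $t$. At a communication round, however, the client model is overwritten by $\bar W_{t+1}$ before the fresh samples are drawn. Then $W_{t+1}^k-W_t^k=(\bar W_t-W_t^k)-\eta K^{-1}\sum_j U_t^j(V_t^j)^\top$, whose norm can be of order $\sqrt n\,\eta\tau$. The STORM corrections $g^k(W_{t+1}^k;\xi_{t+1}^k)-g^k(W_t^k;\xi_{t+1}^k)$ and $\nabla g^k(W_{t+1}^k;\xi_{t+1}^k)-\nabla g^k(W_t^k;\xi_{t+1}^k)$ are taken across exactly this jump, and so is the $v$-correction through $u_{t+1}^k-u_t^k$. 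Your one-step inequalities are therefore false once every $\tau$ steps.

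Saying the extra $\tau$ in the $n\eta^2\tau/\alpha$, $n\eta^2\tau/\beta$, $n\eta^2\tau/\gamma$ terms ``comes from consensus drift, handled next'' does not repair this. The drift $\|W_t^k-\bar W_t\|_F$ you treat later enters only when the tracking targets are moved from $W_t^k$ to $\bar W_t$. It yields the $n\eta^2\tau^2(\cdot)$ terms, never anything divided by $\alpha,\beta,\gamma$. This is the coupling between unsynchronized trackers and synchronization that you leave open as the ``main obstacle.''

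\textbf{The missing idea is the paper's time-averaged movement bound.} Non-communication steps give $\frac1K\sum_k\|W_{i+1}^k-W_i^k\|_F^2\le n\eta^2$. Each of the $\lfloor T/\tau\rfloor$ communication steps gives at most $\frac2K\sum_k\|W_i^k-\bar W_i\|_F^2+2n\eta^2\le2n\eta^2\bigl((\tau-1)^2+1\bigr)$. Hence $\frac1{KT}\sum_{i=0}^{T-1}\sum_k\mathbb E\|W_{i+1}^k-W_i^k\|_F^2\le2n\eta^2\tau$.

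Each one-step tracker inequality is a linear contraction, e.g.\ $e_u^{t+1}\le(1-\alpha)e_u^t+2C_g^2\,\mathbb E\|W_{t+1}^k-W_t^k\|_F^2+2\alpha^2\sigma_g^2$. Keep the true squared movement in it, sum over $t$ first, and only then substitute the averaged bound. That is exactly where the $\tau/\alpha$, $\tau/\beta$, $\tau/\gamma$ factors come from. There is no circularity, because all movement and drift bounds follow deterministically from $\|U(V)^\top\|_F\le\sqrt n$.

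Two smaller points:
\begin{itemize}
\item A three-term split cannot give the coefficients $C_f^2,2C_g^2,2C_g^2L_f^2$. The paper applies two-term splits twice, giving an overall factor $4$ that becomes the $2$ in $2+\sqrt6\,\rho\tau$. The $6=3\times2$ comes from a three-way split around $\nabla F(\bar W_t)$ using optimality of the mean, followed by the two-way product split.
\item Replace the $\max_s$ in your disagreement bound by the observation that each index occurs at most $\tau$ times in the double sum. A maximum inside the expectation is not controlled by time-averaged mean-square bounds.
\end{itemize}
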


\begin{remark}[Parameter choice and complexity]
	\label{rem:parameter-complexity}
	By setting $\eta=\alpha=\beta=\gamma=T^{-2/3}$,
	$\rho=T^{-1/3}$, $b=T^{2/3}$, and $\tau=O(1)$,
	Theorem~\ref{thm:fedcomuon-vr} yields
	$T^{-1}\sum_{t=0}^{T-1}\mathbb E\|\nabla F(\bar W_t)\|_F
	=O(T^{-1/3})$. Therefore, FedCoMuon-VR achieves an
	$\epsilon$-stationary point in
	$T=O(\epsilon^{-3})$ iterations.
	The corresponding per-client sample complexity is
	$O(b+T)=O(T)=O(\epsilon^{-3})$, and the communication complexity is
	$O(T/\tau)=O(\epsilon^{-3})$.
\end{remark}

\begin{figure*}[t]
	\centering
	\includegraphics[width=\textwidth]{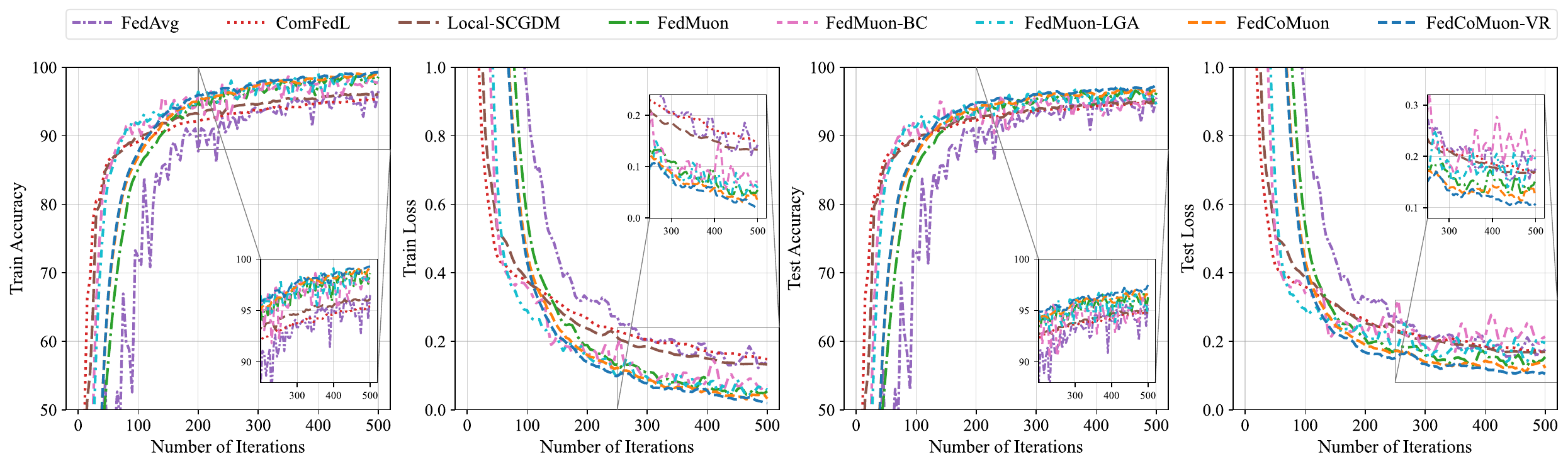}
	\caption{Training and test performance on the imbalanced MNIST
		robust federated learning task.}
	\label{fig:robust-fl}
\end{figure*}

\begin{figure}[t]
	\centering
	\resizebox{0.80\textwidth}{!}{
	\includegraphics[width=\columnwidth]{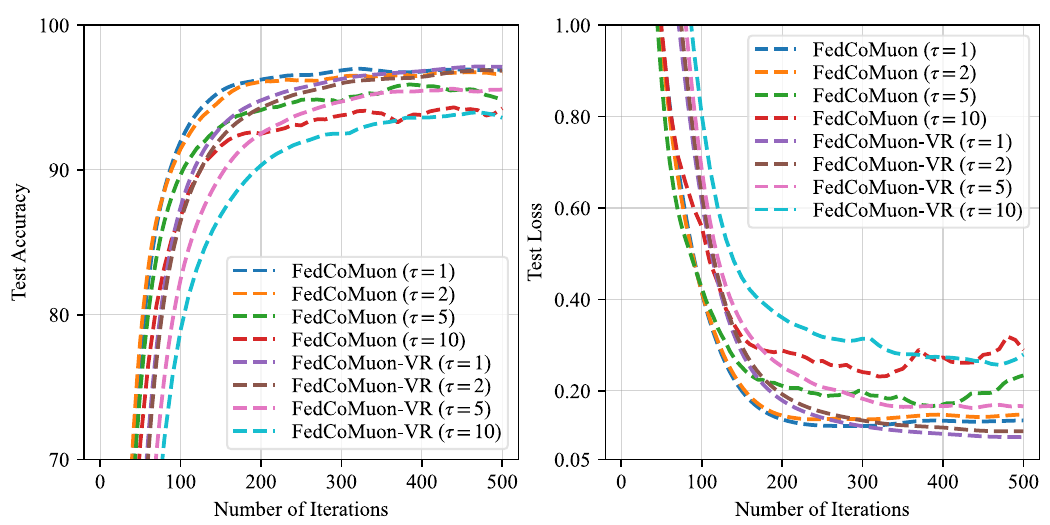}
}
	\caption{Effect of the synchronization gap $\tau$ on our algorithm for the robust federated learning task.}
	\label{fig:robust-sensitivity}
\end{figure}

\section{Numerical Experiments}
\label{sec:experiments}

In the section, we evaluate FedCoMuon and FedCoMuon-VR on robust federated
learning and task-distributed meta learning. In the experiment, we compare our methods with
task-specific standard federated baselines: FedAvg~\citep{mcmahan2017} for
robust federated learning and FedMAML~\citep{fallah2020personalized} for
task-distributed meta learning. We also consider three recent Muon-based federated methods. 
Since all three methods are named FedMuon in their original papers, 
we distinguish them as FedMuon~\citep{zhang2025provable}, FedMuon-LGA~\citep{liu2025fedmuon}, and 
FedMuon-BC~\citep{takezawa2025fedmuon}. In addition, we
include the federated compositional methods
ComFedL~\citep{huang2021compositional} and
Local-SCGDM~\citep{gao2022convergence}. For all methods, the learning rates
and method-specific hyper-parameters are selected via grid search, and we
report the best-performing configurations. 
% All experiments are run on a machine with a 24-vCPU 13th Gen Intel Core i9-13900KF CPU and two NVIDIA RTX 4090 GPUs.

\subsection{Robust Federated Learning}
\label{sec:exp-robust}

In this experiment, we evaluate FedCoMuon and FedCoMuon-VR on robust
federated learning, which can be formulated as the following distributed
compositional optimization problem:
\begin{equation}
	\label{eq:robust-fl}
	\min_{W\in\mathbb{R}^{m\times n}}
	\frac{1}{K}\sum_{k=1}^K
	f\left(g^k(W)/\lambda\right),
\end{equation}
where $f(\cdot)=\exp(\cdot)$ and $\lambda>0$ is a regularization parameter.
Other monotonically increasing functions may also be used as $f$.
We implement image classification on the MNIST~\citep{lecun1998gradient} dataset 
and language modeling on the WikiText-2~\citep{merity2016pointer} dataset. 
Specifically, we train a 4-layer CNN on MNIST and a Transformer on WikiText-2.

\begin{figure}[t]
	\centering
	\resizebox{0.80\textwidth}{!}{
	\includegraphics[width=\columnwidth]{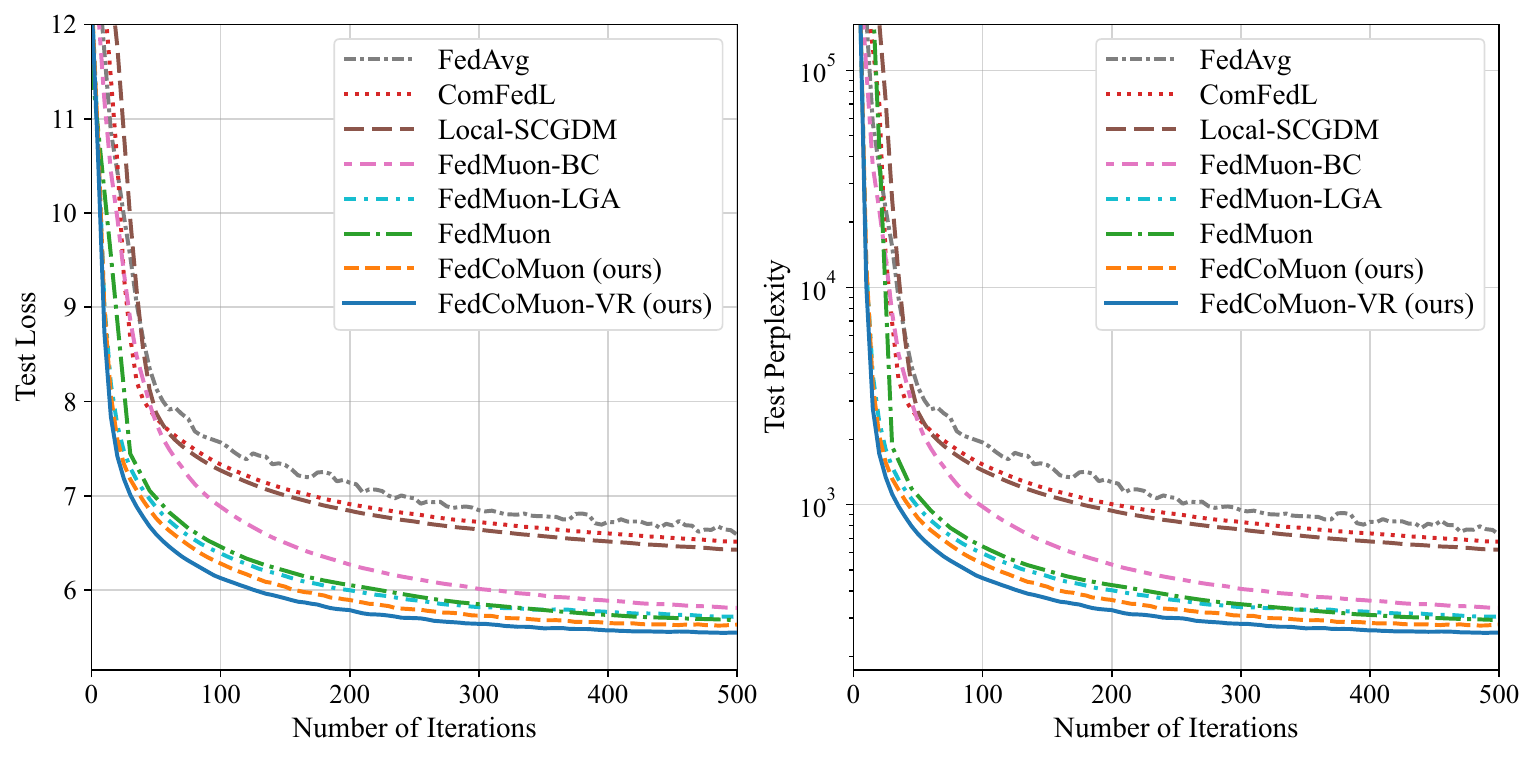}
}
	\caption{Test loss and perplexity of each method on the WikiText-2 language-modeling task.}
	\label{fig:wikitext}
\end{figure}

\subsubsection{Image Classification on MNIST}
\label{sec:exp-robust-mnist}

For image classification, we train a 4-layer CNN on MNIST in a
federated system with 10 clients. To create an imbalanced partition,
we assign 5000 training images to one client and 20 images to each of
the remaining clients. For FedCoMuon and FedCoMuon-VR, the learning
rate is set to $0.01$. The total number of training iterations is set
to $500$, and the synchronization gap is set to $\tau=5$ unless
otherwise specified. Additional implementation details and complete hyperparameter settings are
provided in the Appendix~\ref{app:experimental-settings}.

As shown in Figure~\ref{fig:robust-fl}, FedCoMuon and FedCoMuon-VR
outperform the baseline methods in terms of both training and test
performance under the highly imbalanced data partition. Our methods outperform 
the federated compositional baselines, demonstrating the effectiveness of Muon-based 
updates for robust compositional optimization.
In particular, FedCoMuon-VR exhibits more stable convergence and achieves
the best overall performance, showing the benefit of the variance-reduction
mechanism. Figure~\ref{fig:robust-sensitivity} further shows that our methods
perform consistently under different synchronization gaps, with $\tau=1$
yielding the best performance.

\subsubsection{Language Modeling on WikiText-2}
\label{sec:exp-robust-wikitext}

For language modeling, we conduct experiments on WikiText-2 with an 8-layer
Transformer language model. The Transformer has a hidden dimension of 768,
8 attention heads, a feed-forward dimension of 1024, sinusoidal positional
encodings, and a sequence length of 128. We split the training data across
10 clients in an imbalanced manner, where one client holds about 50\% of the
training blocks and the remaining clients equally share the rest. For FedCoMuon 
and FedCoMuon-VR, the learning rates are set to \(0.02\) and \(0.03\), respectively. 
The total
number of training iterations is set to $500$. We report test loss and
perplexity (PPL) as the evaluation metrics. Additional implementation details
and complete hyper-parameter settings are provided in
Appendix~\ref{app:experimental-settings}.

As shown in Figure~\ref{fig:wikitext}, both FedCoMuon and FedCoMuon-VR
converge faster and achieve lower test loss and perplexity than the baseline
methods. These results demonstrate the effectiveness of the proposed
Muon-based compositional updates for Transformer language models with many
matrix-valued parameters. In particular, FedCoMuon-VR achieves the best
overall performance.

\begin{figure}[t]
	\centering
	\begin{subfigure}[b]{0.48\textwidth}
	\includegraphics[width=\columnwidth]{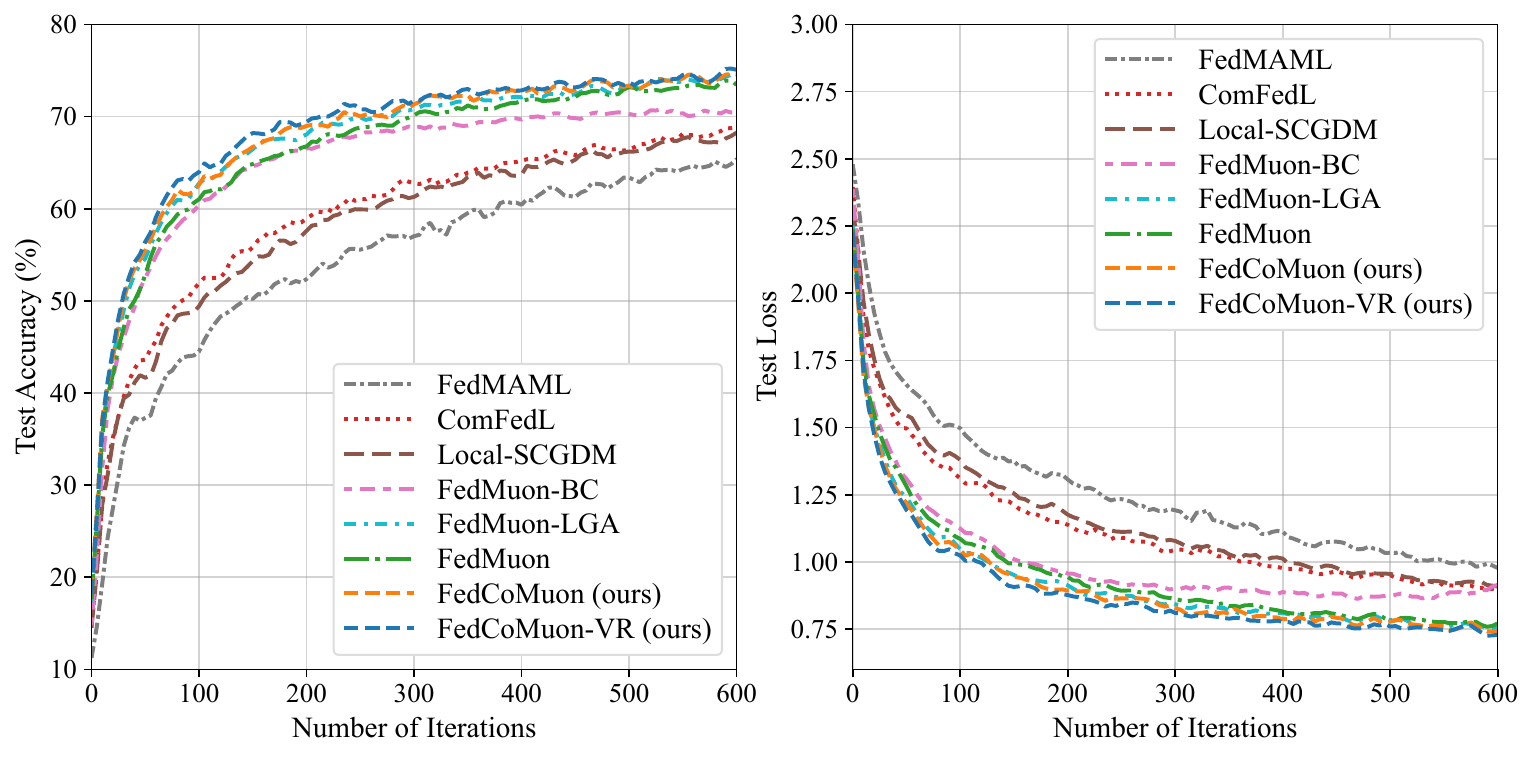}
	\caption{$\chi=0.3$}
	\end{subfigure}
	\begin{subfigure}[b]{0.48\textwidth}
	\includegraphics[width=\columnwidth]{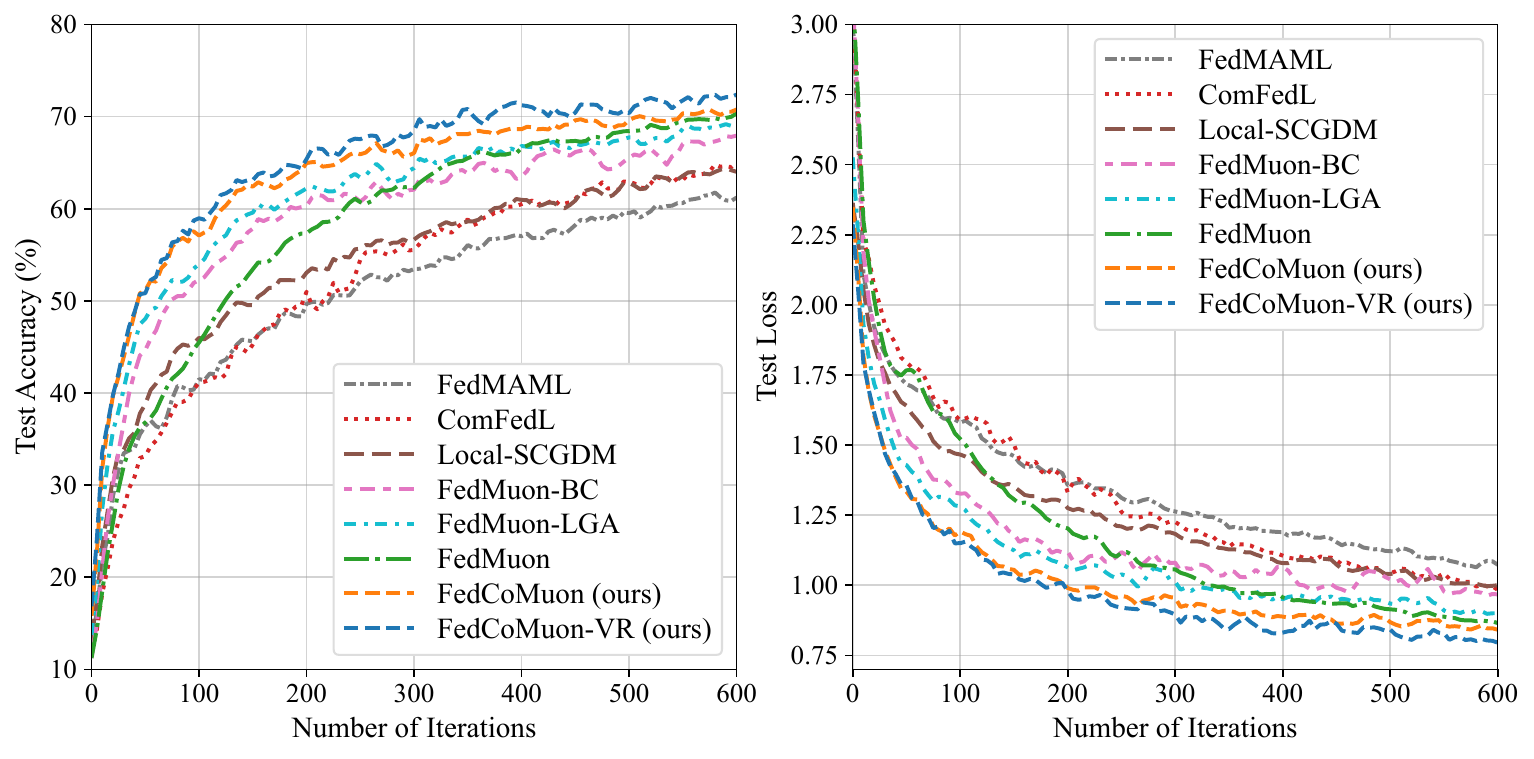}
	\caption{$\chi=0.5$}
	\end{subfigure}
	\begin{subfigure}[b]{0.48\textwidth}
	\includegraphics[width=\columnwidth]{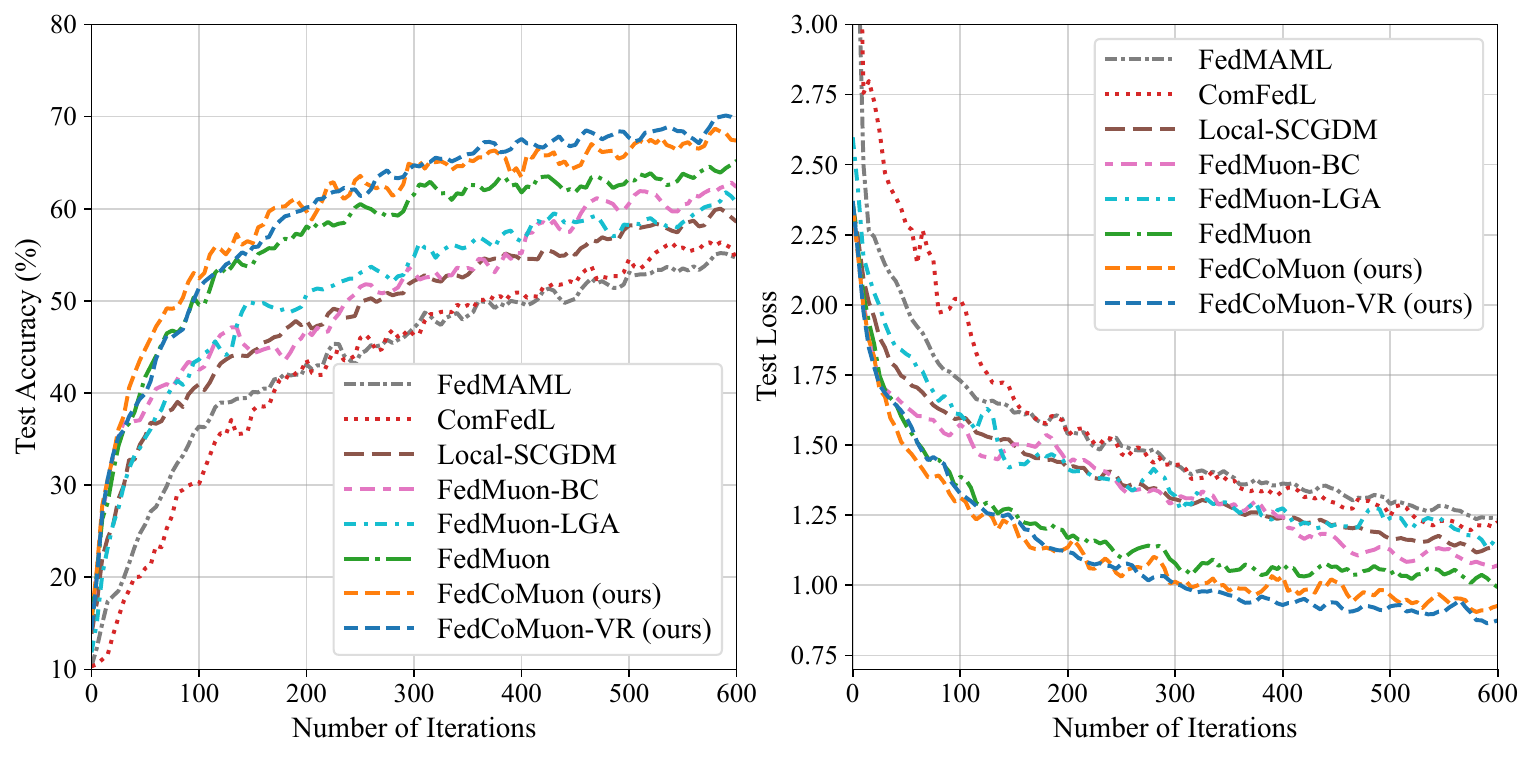}	
	\caption{$\chi=0.7$}
	\end{subfigure}
	\caption{Test accuracy and loss of each method on the task-distributed CNN
		meta-learning task with heterogeneous CIFAR-10 data for
		$\chi\in\{0.3,0.5,0.7\}$.}
	\label{fig:maml-cnn}
\end{figure}

\begin{figure}[t]
	\centering
	\resizebox{0.80\textwidth}{!}{
	\includegraphics[width=\columnwidth]{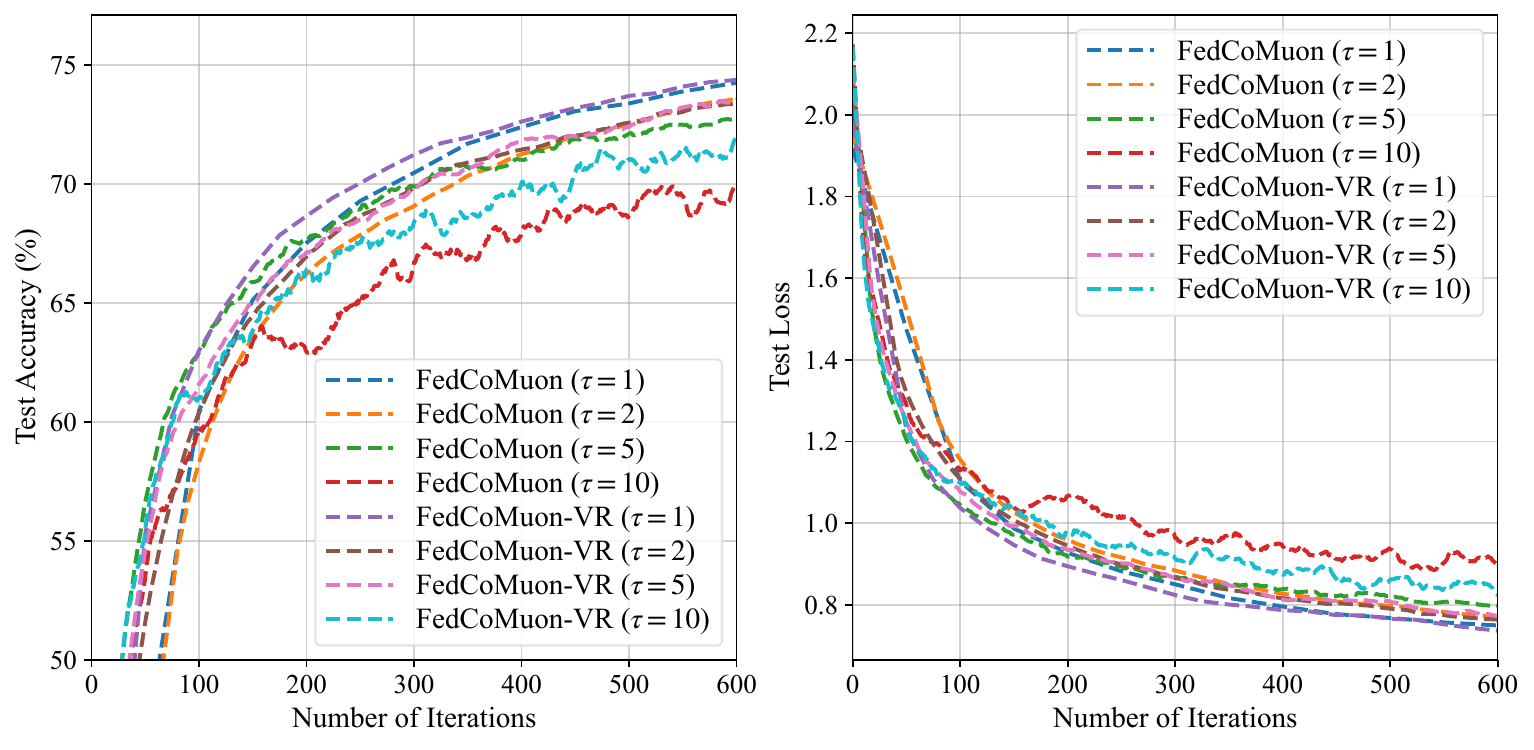}
}
	\caption{Effect of the synchronization gap $\tau$ on task-distributed CNN meta learning with heterogeneous CIFAR-10 data.}
	\label{fig:maml-tau}
\end{figure}

\begin{figure}[t]
	\centering
	\resizebox{0.80\textwidth}{!}{
	\includegraphics[width=\columnwidth]{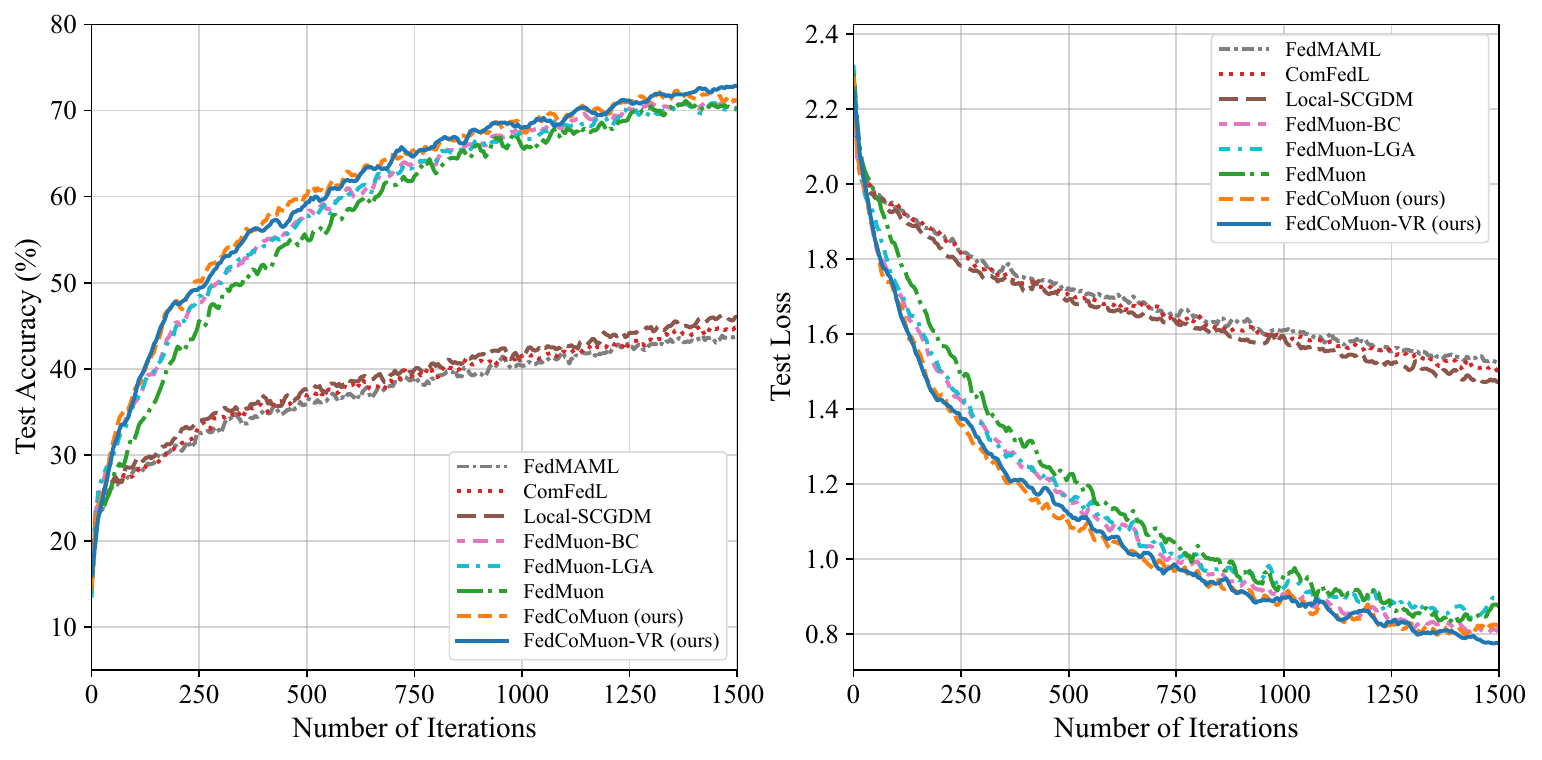}
}
	\caption{Test accuracy and loss of each method on the task-distributed ViT-Tiny
		meta learning task with heterogeneous CIFAR-10 for $\chi=0.3$.}
	\label{fig:maml-vit}
\end{figure}

\subsection{Task-Distributed Meta Learning Problem}
\label{sec:exp-maml}

In this experiment, we conduct task-distributed meta learning experiments 
on the CIFAR-10~\citep{krizhevsky2009learning} dataset. We first consider a 7-layer CNN
as a standard setting for comparing federated compositional optimization
methods under different levels of data heterogeneity. We further extend the
evaluation to ViT-Tiny~\citep{touvron2021training}, a matrix-wise vision
model based on the Transformer architecture and containing many
matrix-valued parameter blocks. Specifically, we optimize the following
risk-sensitive compositional MAML objective:
\begin{equation}
	\label{eq:maml-prelim}
	\min_{W \in \mathbb{R}^{m\times n}}
	\frac{1}{K}\sum_{k=1}^K
	\exp\left(
	\ell^k\left(W-\eta\nabla \ell^k(W)\right)/\lambda
	\right),
\end{equation}
where $\ell^k$ denotes the loss function on client~$k$, $\eta>0$ is the
inner-loop learning rate, and $\lambda>0$ is a regularization parameter
controlling the degree of risk sensitivity.

\subsubsection{CNN-Based Meta Learning}
\label{sec:exp-maml-cnn}

For CNN-based meta learning, we conduct experiments on CIFAR-10 using a
7-layer CNN in a federated system with 10 clients and one central server.
Each client is assigned a distinct dominant class, where a $\chi$ fraction
of its local samples belongs to the dominant class and a $(1-\chi)/9$
fraction belongs to each of the remaining classes. We evaluate
$\chi\in\{0.3,0.5,0.7\}$. For FedCoMuon and FedCoMuon-VR, the inner- and
outer-loop learning rates are set to $0.03$ and $0.1$ respectively. The
regularization parameter is set to $\lambda=0.5$, the synchronization gap
is set to $\tau=5$, and the total number of training iterations is set to
$600$. Additional implementation details and complete hyperparameter
settings are provided in the Appendix~\ref{app:experimental-settings}.

As shown in Figure~\ref{fig:maml-cnn}, FedCoMuon and FedCoMuon-VR
achieve better overall performance than the baseline methods under
different levels of data heterogeneity. As $\chi$ increases, the
optimization problem becomes more challenging, while the relative
advantage of our methods over the baseline methods becomes more
pronounced. Figure~\ref{fig:maml-tau} further
shows the sensitivity of our methods to different synchronization gaps,
with $\tau=1$ yielding the best performance.

\subsubsection{ViT-Tiny-Based Meta Learning}
\label{sec:exp-maml-vit}

For task-distributed meta learning with ViT-Tiny, we conduct experiments on
CIFAR-10 using a ViT-Tiny model with 12 Transformer blocks, a hidden
dimension of 192, three attention heads, and a patch size of 4. We adopt the
same federated setting and dominant-class data partition as in the CNN
experiments and set $\chi=0.3$. For both FedCoMuon and FedCoMuon-VR, 
the inner- and outer-loop learning rates are set to $0.005$ and $0.01$ respectively. 
The total number of training iterations is
set to $1500$. Additional implementation details and complete hyperparameter
settings are provided in the Appendix~\ref{app:experimental-settings}.

As shown in Figure~\ref{fig:maml-vit}, FedCoMuon and FedCoMuon-VR achieve 
substantially higher test accuracy and lower test loss than FedMAML and the 
federated compositional baselines. They also remain competitive with the three 
FedMuon baselines throughout training. These results demonstrate the effectiveness 
of our compositional Muon methods for task-distributed meta learning with the 
Transformer-based ViT-Tiny model.

\section{Conclusion}
\label{sec:conclusion}
In this paper, we studied the matrix-wise composition optimization, and proposed a class of effective federated compositional Muon algorithms (i.e., FedCoMuon and FedCoMuon-VR), which build on  compositional gradient tracking and
orthogonalized momentum. In theory, we established convergence guarantees
under non-i.i.d. and non-convex settings. In particular, our FedCoMuon-VR algorithm achieves
a lower sample complexity of $O(\epsilon^{-3})$ for finding an
$\epsilon$-stationary solution than the existing FedMuon algorithms. Extensive experiments on robust federated
learning and the task-distributed meta learning problem demonstrate the
effectiveness of our proposed algorithms.

\small

\bibliographystyle{plainnat}

\bibliography{FeCoMuon}

\newpage

\appendix
\onecolumn

\setcounter{theorem}{0}
\renewcommand{\thetheorem}{\arabic{theorem}}

\section{Convergence Analysis of our FedCoMuon Algorithm}
\label{app:fedcomuon-proof}

In this section, we provide the detailed convergence analysis of
FedCoMuon under the assumptions stated in the main paper. We first
introduce the following notation: $\bar W_t=\frac{1}{K}\sum_{k=1}^K W_t^k$
and $\bar M_t=\frac{1}{K}\sum_{k=1}^K M_t^k$. Moreover,
$F(W)=\frac{1}{K}\sum_{k=1}^K f^k(g^k(W))$,
and
\[
\nabla F(W)=\frac{1}{K}\sum_{k=1}^K
\nabla g^k(W)
\bigl(\nabla_y f^k(g^k(W))\otimes I_n\bigr).
\]
We next establish several auxiliary lemmas used in the convergence
analysis.

\begin{lemma}
	\label{lem:smooth-compositional}
	Given Assumptions~\ref{assm:bounded-grad}, \ref{assm:oracle},
	and~\ref{assm:smooth}, for each
	client $k\in[K]$, the local compositional objective
	$F^k(W)=f^k(g^k(W))$ is $L_F$-smooth, i.e., for any
	$W_1,W_2\in\mathbb R^{m\times n}$,
	\begin{equation}
		\norm{
			\nabla F^k(W_1)-\nabla F^k(W_2)
		}_F
		\le
		L_F\|W_1-W_2\|_F,
	\end{equation}
	where $L_F=C_fL_g+C_g^2L_f$. Consequently,
	$F(W)=K^{-1}\sum_{k=1}^K F^k(W)$ is also $L_F$-smooth.
\end{lemma}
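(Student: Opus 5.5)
The plan is the standard add-and-subtract argument for the smoothness of a composition. We use the gradient formula $\nabla F^k(W)=\nabla g^k(W)\bigl(\nabla_y f^k(g^k(W))\otimes I_n\bigr)$ and split the difference of gradients at $W_1$ and $W_2$ by inserting the mixed term $\nabla g^k(W_2)\bigl(\nabla_y f^k(g^k(W_1))\otimes I_n\bigr)$. The triangle inequality then gives
\begin{align*}
\|\nabla F^k(W_1)-\nabla F^k(W_2)\|_F
&\le \bigl\|\bigl(\nabla g^k(W_1)-\nabla g^k(W_2)\bigr)\bigl(\nabla_y f^k(g^k(W_1))\otimes I_n\bigr)\bigr\|_F\\
&\quad+\bigl\|\nabla g^k(W_2)\bigl(\bigl(\nabla_y f^k(g^k(W_1))-\nabla_y f^k(g^k(W_2))\bigr)\otimes I_n\bigr)\bigr\|_F .
\end{align*}

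First we record the deterministic bounds implied by Assumption~\ref{assm:bounded-grad} together with Assumption~\ref{assm:oracle}. Since the stochastic oracles are unbiased, Jensen's inequality gives
\[
\|\nabla g^k(W)\|_F=\|\E\nabla g^k(W;\xi)\|_F\le(\E\|\nabla g^k(W;\xi)\|_F^2)^{1/2}\le C_g,
\]
and likewise $\|\nabla_y f^k(y)\|\le C_f$ for all $y$. Second, we show that $g^k$ is $C_g$-Lipschitz: by the mean value theorem in integral form,
\[
g^k(W_1)-g^k(W_2)=\int_0^1 \nabla g^k\bigl(W_2+s(W_1-W_2)\bigr)[W_1-W_2]\,ds ,
\]
so $\|g^k(W_1)-g^k(W_2)\|\le C_g\|W_1-W_2\|_F$, using operator norm $\le$ Frobenius norm of the Jacobian.

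Third, we handle the Kronecker structure. For a matrix $A$ of the appropriate shape and $v\in\R^d$, we need $\|A(v\otimes I_n)\|_F\le\|A\|_F\|v\|$. This follows because $v\otimes I_n$ has orthogonal columns of norm $\|v\|$, so its spectral norm equals $\|v\|$, and $\|AB\|_F\le\|A\|_F\|B\|$. Applying this to the two terms gives the bound
\[
L_g\|W_1-W_2\|_F\cdot C_f+C_g\cdot L_f\|g^k(W_1)-g^k(W_2)\|\le (C_fL_g+C_g^2L_f)\|W_1-W_2\|_F .
\]
Finally, the global claim follows because $F$ is an average of the $F^k$: the triangle inequality gives the same constant $L_F$.

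The main obstacle is bookkeeping rather than analysis. The paper's Jacobian convention, with $\nabla g^k(W)$ an $m\times dn$ matrix multiplied by $v\otimes I_n$, must be matched carefully so that the Kronecker spectral-norm identity and the Lipschitz bound on $g^k$ through $\nabla g^k$ are applied consistently. I would fix the convention $\nabla g^k(W)=[\nabla g^k_1(W),\dots,\nabla g^k_d(W)]$, under which $\nabla g^k(W)(v\otimes I_n)=\sum_i v_i\nabla g^k_i(W)$, and verify both facts in that form.
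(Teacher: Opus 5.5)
Your proposal is correct and matches the paper's proof step for step. Both use Jensen's inequality with unbiasedness to get $\|\nabla g^k(W)\|_F\le C_g$ and $\|\nabla_y f^k(y)\|\le C_f$, the resulting $C_g$-Lipschitz property of $g^k$, an add-and-subtract split bounded via the triangle inequality and $\|A(v\otimes I_n)\|_F\le\|A\|_F\|v\|$, and averaging over clients for $F$.

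The differences are cosmetic. You insert the mixed term $\nabla g^k(W_2)\bigl(\nabla_y f^k(g^k(W_1))\otimes I_n\bigr)$ instead of the paper's $\nabla g^k(W_1)\bigl(\nabla_y f^k(g^k(W_2))\otimes I_n\bigr)$. You also explicitly justify the Lipschitz bound on $g^k$ (integral mean value theorem) and the Kronecker norm inequality, which the paper simply asserts.
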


\begin{proof}
	Jensen's inequality and Assumptions~\ref{assm:oracle}
	and~\ref{assm:bounded-grad} imply
	\(\|\nabla g^k(W)\|_F\le C_g\) and
	\(\|\nabla_y f^k(y)\|\le C_f\).
	Consequently, $g^k$ is $C_g$-Lipschitz, i.e.,
	\(\|g^k(W_1)-g^k(W_2)\|
	\le C_g\|W_1-W_2\|_F\).
	For any $W_1,W_2\in\mathbb R^{m\times n}$, we have
	\begin{align}
		&\norm{
			\nabla F^k(W_1)-\nabla F^k(W_2)
		}_F \notag\\
		&=
		\norm{
			\nabla g^k(W_1)
			\left(
			\nabla_y f^k(g^k(W_1))\otimes I_n
			\right)
			-
			\nabla g^k(W_2)
			\left(
			\nabla_y f^k(g^k(W_2))\otimes I_n
			\right)
		}_F \notag\\
		&=
		\norm{
			\nabla g^k(W_1)
			\left(
			\nabla_y f^k(g^k(W_1))\otimes I_n
			\right)
			-
			\nabla g^k(W_1)
			\left(
			\nabla_y f^k(g^k(W_2))\otimes I_n
			\right)
			\notag\\
			&\quad+
			\nabla g^k(W_1)
			\left(
			\nabla_y f^k(g^k(W_2))\otimes I_n
			\right)
			-
			\nabla g^k(W_2)
			\left(
			\nabla_y f^k(g^k(W_2))\otimes I_n
			\right)
		}_F \notag\\
		&\le
		\norm{
			\nabla g^k(W_1)
			\left(
			\left(
			\nabla_y f^k(g^k(W_1))
			-
			\nabla_y f^k(g^k(W_2))
			\right)\otimes I_n
			\right)
		}_F \notag\\
		&\quad+
		\norm{
			\left(
			\nabla g^k(W_1)-\nabla g^k(W_2)
			\right)
			\left(
			\nabla_y f^k(g^k(W_2))\otimes I_n
			\right)
		}_F \notag\\
		&\le
		\|\nabla g^k(W_1)\|_F
		\norm{
			\nabla_y f^k(g^k(W_1))
			-
			\nabla_y f^k(g^k(W_2))
		} \notag\\
		&\quad+
		\|\nabla g^k(W_1)-\nabla g^k(W_2)\|_F
		\norm{
			\nabla_y f^k(g^k(W_2))
		} \notag\\
		&\le
		C_gL_f\|g^k(W_1)-g^k(W_2)\|
		+
		C_fL_g\|W_1-W_2\|_F \notag\\
		&\le
		\left(C_g^2L_f+C_fL_g\right)
		\|W_1-W_2\|_F .
	\end{align}
	
	Moreover,
	\begin{align}
		\|\nabla F(W_1)-\nabla F(W_2)\|_F
		&=
		\norm{
			\frac{1}{K}\sum_{k=1}^K
			\left(
			\nabla F^k(W_1)-\nabla F^k(W_2)
			\right)
		}_F \notag\\
		&\le
		\frac{1}{K}\sum_{k=1}^K
		\norm{
			\nabla F^k(W_1)-\nabla F^k(W_2)
		}_F \notag\\
		&\le
		L_F\|W_1-W_2\|_F ,
	\end{align}
	where we used Assumptions~\ref{assm:bounded-grad} and~\ref{assm:smooth},
	the triangle inequality, and the submultiplicativity of matrix norms.
\end{proof}

\begin{lemma}
	\label{lem:muon-properties}
	Let $M\in\mathbb R^{m\times n}$ have compact singular value decomposition
	$M=U\Sigma V^\top$. Then
	\begin{equation}
		\|UV^\top\|_F\le \sqrt n,
		\qquad
		\langle M,UV^\top\rangle=\|M\|_* .
	\end{equation}
	Moreover, for any $A,B\in\mathbb R^{m\times n}$, if
	$B=U_B\Sigma_BV_B^\top$ is a compact singular value decomposition, then
	\begin{equation}
		\langle A,U_BV_B^\top\rangle
		\ge
		\|A\|_F-2\sqrt n\,\|A-B\|_F .
	\end{equation}
\end{lemma}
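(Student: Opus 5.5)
Treat the three claims one at a time, using only the compact SVD and standard norm facts. Write $r=\operatorname{rank}(M)\le\min(m,n)$, so that $U\in\R^{m\times r}$ and $V\in\R^{n\times r}$ have orthonormal columns. For the first bound, compute $\|UV^\top\|_F^2=\operatorname{tr}(VU^\top UV^\top)=\operatorname{tr}(V^\top V)=r\le n$. For the identity, write $\langle M,UV^\top\rangle=\operatorname{tr}(V\Sigma U^\top UV^\top)=\operatorname{tr}(\Sigma V^\top V)=\operatorname{tr}(\Sigma)=\|M\|_*$, using cyclicity of the trace. If $M=0$ the compact SVD is empty, and both statements hold trivially with $UV^\top=0$.

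For the inequality, split $\langle A,U_BV_B^\top\rangle=\langle B,U_BV_B^\top\rangle+\langle A-B,U_BV_B^\top\rangle$. The first term equals $\|B\|_*$ by the identity just proved. Bound the second below by Cauchy--Schwarz in the Frobenius inner product together with the first bound:
\[
\langle A-B,U_BV_B^\top\rangle\ge-\|A-B\|_F\,\|U_BV_B^\top\|_F\ge-\sqrt n\,\|A-B\|_F .
\]
Next relate $\|B\|_*$ to $\|A\|_F$. Since the nuclear norm dominates the Frobenius norm ($\sum_i\sigma_i\ge(\sum_i\sigma_i^2)^{1/2}$), we get $\|B\|_*\ge\|B\|_F\ge\|A\|_F-\|A-B\|_F$ by the triangle inequality. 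Combining,
\[
\langle A,U_BV_B^\top\rangle\ge\|A\|_F-(1+\sqrt n)\|A-B\|_F\ge\|A\|_F-2\sqrt n\,\|A-B\|_F,
\]
since $n\ge1$ gives $1\le\sqrt n$.

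No step is a real obstacle. The only point needing care is the direction of the norm comparison: one must pass through $\|B\|_*\ge\|B\|_F$ rather than try to compare with $\|A\|_*$, which would not yield a Frobenius-norm lower bound. It also helps to state explicitly that the rank is at most $\min(m,n)\le n$, so the constant $\sqrt n$ is justified. The constant $2\sqrt n$ is slack, and the argument in fact gives $1+\sqrt n$.
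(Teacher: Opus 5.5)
Your proof is correct and follows the paper's own argument essentially line by line. You use the trace identities $\|UV^\top\|_F^2=r\le n$ and $\langle M,UV^\top\rangle=\operatorname{tr}(\Sigma)=\|M\|_*$, then split $\langle A,U_BV_B^\top\rangle$ through $B$ and combine $\|B\|_*\ge\|B\|_F\ge\|A\|_F-\|A-B\|_F$ with Cauchy--Schwarz to get $(1+\sqrt n)\|A-B\|_F\le 2\sqrt n\,\|A-B\|_F$; your remarks on $r\le\min(m,n)\le n$ and on the case $M=0$ are small clarifications that the paper leaves implicit.
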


\begin{proof}
	Let $r=\operatorname{rank}(M)$. Then
	\begin{align}
		\|UV^\top\|_F^2
		&=
		\operatorname{tr}\!\left((UV^\top)^\top UV^\top\right)
		=
		\operatorname{tr}\!\left(VU^\top UV^\top\right)
		=
		\operatorname{tr}\!\left(VV^\top\right)
		=
		\operatorname{tr}\!\left(V^\top V\right)
		=
		r
		\le n,
	\end{align}
	and
	\begin{align}
		\langle M,UV^\top\rangle
		&=
		\operatorname{tr}\!\left(M^\top UV^\top\right) \notag\\
		&=
		\operatorname{tr}\!\left((U\Sigma V^\top)^\top UV^\top\right) \notag\\
		&=
		\operatorname{tr}\!\left(V\Sigma U^\top UV^\top\right)
		=
		\operatorname{tr}\!\left(V\Sigma V^\top\right) \notag\\
		&=
		\operatorname{tr}\!\left(\Sigma V^\top V\right)
		=
		\operatorname{tr}(\Sigma)
		=
		\|M\|_* .
	\end{align}
	For any $A,B\in\mathbb R^{m\times n}$,
	\begin{align}
		\langle A,U_BV_B^\top\rangle
		&=
		\langle B,U_BV_B^\top\rangle
		+
		\langle A-B,U_BV_B^\top\rangle \notag\\
		&\ge
		\|B\|_*
		-
		\|A-B\|_F\|U_BV_B^\top\|_F \notag\\
		&\ge
		\|B\|_F
		-
		\sqrt n\,\|A-B\|_F \notag\\
		&\ge
		\|A\|_F
		-
		\|A-B\|_F
		-
		\sqrt n\,\|A-B\|_F \notag\\
		&=
		\|A\|_F
		-
		(1+\sqrt n)\|A-B\|_F \notag\\
		&\ge
		\|A\|_F
		-
		2\sqrt n\,\|A-B\|_F ,
	\end{align}
	where we used the compact SVD, the Cauchy--Schwarz inequality, and the
	standard nuclear--Frobenius norm relations.
\end{proof}

\begin{lemma}
	\label{lem:app-inner-tracking}
	Let $\{W_t^k,u_t^k,M_t^k\}$ be generated by
	Algorithm~\ref{alg:fedcomuon}.
	Given Assumptions~\ref{assm:oracle} and~\ref{assm:bounded-grad}, for any
	$T,\tau>0$ and $0<\alpha<1$ satisfying $\alpha\tau\le1$, we have
	\begin{align}
		&\frac1T\sum_{t=0}^{T-1}\frac1K\sum_{k=1}^K
		\mathbb E\|u_t^k-g^k(W_t^k)\|^2 \le
		\frac{4}{\alpha T}\frac1K\sum_{k=1}^K
		\mathbb E\|u_0^k-g^k(W_0^k)\|^2
		+\frac{86C_g^2n}{\alpha^2}\eta^2
		+86\alpha\sigma_g^2.
		\label{eq:app-inner-average}
	\end{align}
\end{lemma}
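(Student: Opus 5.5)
We will derive a one-step recursion for the tracking error $e_t^k:=u_t^k-g^k(W_t^k)$ and then sum it. The inner estimator $u_t^k$ is updated only locally; averaging touches $W$ but not $u$. We must therefore account for the jump of $W$ at communication rounds, which is where the condition $\alpha\tau\le 1$ will be needed.

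\textbf{One-step recursion.} From line 8 of Algorithm~\ref{alg:fedcomuon},
\[
e_{t+1}^k=(1-\alpha)\bigl(e_t^k+g^k(W_t^k)-g^k(W_{t+1}^k)\bigr)+\alpha\bigl(g^k(W_{t+1}^k;\xi_{t+1}^k)-g^k(W_{t+1}^k)\bigr),
\]
where $W_{t+1}^k$ denotes the iterate at which the sample is actually evaluated. The last term has zero conditional mean given the past and $W_{t+1}^k$ (Assumption~\ref{assm:oracle}), so its cross term vanishes. Applying Young's inequality with parameter $\alpha$ to the first bracket gives
\[
\mathbb E\|e_{t+1}^k\|^2\le(1-\alpha)\mathbb E\|e_t^k\|^2+\frac{2}{\alpha}\mathbb E\|g^k(W_{t+1}^k)-g^k(W_t^k)\|^2+\alpha^2\sigma_g^2 .
\]
By Jensen and Assumption~\ref{assm:bounded-grad}, as in the proof of Lemma~\ref{lem:smooth-compositional}, $g^k$ is $C_g$-Lipschitz in Frobenius norm. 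On local steps Lemma~\ref{lem:muon-properties} gives $\|W_{t+1}^k-W_t^k\|_F\le\eta\sqrt n$, so the drift term is at most $2C_g^2n\eta^2/\alpha$.

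\textbf{Communication steps (main obstacle).} At a round where $\operatorname{mod}(t+1,\tau)=0$, the sample is evaluated at $\bar W_{t+1}$, and $\|\bar W_{t+1}-W_t^k\|_F$ may be as large as roughly $\eta\sqrt n\,\tau$. This is because local models drift by at most $\eta\sqrt n$ per step since the last synchronization: both $W^k$ and $\bar W$ move by at most $\eta\sqrt n$ per step, so they differ by at most $2\eta\sqrt n\tau$. Such a jump contributes a term of order $C_g^2n\eta^2\tau^2/\alpha$, but only once every $\tau$ steps. Averaged over time this is $C_g^2 n\eta^2\tau/\alpha$, which is at most $C_g^2n\eta^2/\alpha^2$ by $\alpha\tau\le1$. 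This explains the $\eta^2/\alpha^2$ scaling in \eqref{eq:app-inner-average}.

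I would first try to handle this cleanly by unrolling the recursion over each block of $\tau$ steps. Because $(1-\alpha)^\tau\ge 1-\alpha\tau\ge 0$, the contraction factors stay controlled. The generous constants (86, 4) suggest the authors used crude Young splittings, for instance $(1+\alpha/2)$ and $(1+2/\alpha)$ factors. The result is a uniform bound of the form
\[
\mathbb E\|e_{t+1}^k\|^2\le(1-\tfrac{\alpha}{2})\mathbb E\|e_t^k\|^2+c_1\frac{C_g^2n\eta^2}{\alpha}\bigl(1+\tau^2\mathbb 1_{\text{comm}}\bigr)+c_2\alpha^2\sigma_g^2 .
\]

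\textbf{Summation.} Sum this over $t=0,\dots,T-1$ and over $k$, telescope, and divide by $\alpha T/2$. The initial error then gives the $\frac{4}{\alpha T}\frac1K\sum_k\mathbb E\|u_0^k-g^k(W_0^k)\|^2$ term. The per-step drift becomes $O(C_g^2n\eta^2/\alpha^2)$, using $\tau/\alpha\cdot\alpha\tau\le\dots$, that is, $\tau\le1/\alpha$. The noise becomes $O(\alpha\sigma_g^2)$. Finally, we collect the constants and bound them loosely by 86.
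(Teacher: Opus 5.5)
Your one-step recursion on local steps is fine. The synchronization jump, which you rightly flag as the main obstacle, is not handled well enough to give the stated bound, and the loss is a factor of $\tau$, not a constant.

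\textbf{Where the bound fails.} In your uniform recursion with contraction $1-\alpha/2$, the squared jump $\lesssim C_g^2n\eta^2\tau^2$ carries a Young coefficient of order $1/\alpha$. It therefore feeds in, on average, $C_g^2n\eta^2\tau/\alpha$ per step. Telescoping divides by $\alpha$ once more. Summing your displayed recursion and dividing by $\alpha T/2$ gives the drift term $2c_1C_g^2n\eta^2(1+\tau)/\alpha^2$, not $O(C_g^2n\eta^2/\alpha^2)$. Using $\tau\le1/\alpha$ only turns it into $O(\eta^2/\alpha^3)$. You have identified the per-step input to the recursion, $C_g^2n\eta^2\tau/\alpha\le C_g^2n\eta^2/\alpha^2$, with its output.

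This matters. Under the choice $\eta=T^{-3/4}$, $\alpha=T^{-1/2}$, $\tau=T^{1/4}$, the term is $T^{-1/4}$ rather than $T^{-1/2}$, which would degrade the $O(T^{-1/4})$ rate of Theorem~\ref{thm:fedcomuon}. It also cannot be repaired inside a uniform per-step contraction. A Young parameter $\epsilon$ at the synchronization step must satisfy $(1-\alpha)^2(1+\epsilon)<1$, which forces $\epsilon\lesssim\alpha$ and hence a $1/\alpha$ penalty on the jump.

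\textbf{The missing idea: absorb the jump at block level.} This is what the paper does. Let $E_t:=\frac1K\sum_k\mathbb E\|u_t^k-g^k(W_t^k)\|^2$ and $c:=C_g^2n\eta^2/\alpha+\alpha^2\sigma_g^2$.
\begin{itemize}
\item Unrolling $E_{t+1}\le(1-\alpha)E_t+c$ over a block bounds the pre-synchronization error by $(1-\alpha)^\tau E_s+\tau c$.
\item At synchronization, $W$ is replaced by $\bar W$ while $u$ is kept. Apply Young there with parameter $\alpha\tau/2$. The error factor becomes $(1+\alpha\tau/2)(1-\alpha)^\tau\le(1+\alpha\tau/2)e^{-\alpha\tau}\le1-\alpha\tau/3$.
\item This is where $\alpha\tau\le1$ is really used, through the upper bound $(1-\alpha)^\tau\le e^{-\alpha\tau}$. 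Your lower bound $(1-\alpha)^\tau\ge1-\alpha\tau$ does not help.
\item The jump $4C_g^2n\eta^2\tau^2$ is multiplied only by $1+2/(\alpha\tau)\le3/(\alpha\tau)$. It therefore costs $12\tau C_g^2n\eta^2/\alpha$ per block, the same order as that block's local drift.
\item The resulting block recursion $E_{s+\tau}\le(1-\alpha\tau/3)E_s+14\tau c$ has steady state $42c/\alpha$.
\item Combining this with the crude in-block sum $\sum_{\ell<\tau}E_{s+\ell}\le\tau E_s+\tau^2c$ and $\tau\le1/\alpha$ gives exactly the constants $4$ and $86$.
\end{itemize}

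\textbf{Minor point.} In Algorithm~\ref{alg:fedcomuon}, the fresh sample at a communication round is drawn at the local pre-averaging iterate, not at $\bar W_{t+1}$. The jump therefore enters as an additive term $g^k(W_{t+1}^k)-g^k(\bar W_{t+1})$ outside the $(1-\alpha)$ factor. Its order is unchanged.
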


\begin{proof}
	Within each communication block, the update rule,
	Assumptions~\ref{assm:oracle} and~\ref{assm:bounded-grad},
	Lemma~\ref{lem:muon-properties}, and Young's inequality give
	\begin{align}
		&\mathbb E\norm{u_{t+1}^k-g^k(W_{t+1}^k)}^2 \notag\\
		&=
		\mathbb E\norm{
			(1-\alpha)
			\left(u_t^k-g^k(W_t^k)\right)
			+(1-\alpha)
			\left(g^k(W_t^k)-g^k(W_{t+1}^k)\right) +
			\alpha
			\left(g^k(W_{t+1}^k;\xi_{t+1}^k)-g^k(W_{t+1}^k)\right)
		}^2 \notag\\
		&\le
		(1-\alpha)\mathbb E\norm{u_t^k-g^k(W_t^k)}^2
		+\frac1\alpha
		\mathbb E\norm{g^k(W_t^k)-g^k(W_{t+1}^k)}^2
		+\alpha^2\sigma_g^2 \notag\\
		&\le
		(1-\alpha)\mathbb E\norm{u_t^k-g^k(W_t^k)}^2
		+\frac{C_g^2}{\alpha}
		\mathbb E\norm{W_{t+1}^k-W_t^k}_F^2
		+\alpha^2\sigma_g^2 \notag\\
		&=
		(1-\alpha)\mathbb E\norm{u_t^k-g^k(W_t^k)}^2
		+\frac{C_g^2\eta^2}{\alpha}
		\mathbb E\norm{U_t^k(V_t^k)^\top}_F^2
		+\alpha^2\sigma_g^2 \notag\\
		&\le
		(1-\alpha)\mathbb E\norm{u_t^k-g^k(W_t^k)}^2
		+\frac{C_g^2n}{\alpha}\eta^2
		+\alpha^2\sigma_g^2 .
		\label{eq:app-inner-one-step}
	\end{align}
	Let $s=q\tau$ be the beginning of a block. Iterating
	\eqref{eq:app-inner-one-step} within the block gives
	\begin{align}
		&\sum_{\ell=0}^{\tau-1}\frac1K\sum_{k=1}^K
		\mathbb E\|u_{s+\ell}^k-g^k(W_{s+\ell}^k)\|^2 \notag\\
		&\le
		\sum_{\ell=0}^{\tau-1}
		\Bigg[
		(1-\alpha)^\ell\frac1K\sum_{k=1}^K
		\mathbb E\|u_s^k-g^k(W_s^k)\|^2
		+\sum_{i=0}^{\ell-1}(1-\alpha)^i
		\left(
		\frac{C_g^2n}{\alpha}\eta^2+\alpha^2\sigma_g^2
		\right)
		\Bigg] \notag\\
		&\le
		\tau\frac1K\sum_{k=1}^K
		\mathbb E\|u_s^k-g^k(W_s^k)\|^2
		+\tau^2
		\left(
		\frac{C_g^2n}{\alpha}\eta^2+\alpha^2\sigma_g^2
		\right),
		\label{eq:app-inner-block-sum}
		\\[0.3em]
		&\frac1K\sum_{k=1}^K
		\mathbb E\|u_{s+\tau}^k-g^k(W_{s+\tau}^k)\|^2 \notag\\
		&\le
		(1-\alpha)^\tau
		\frac1K\sum_{k=1}^K
		\mathbb E\|u_s^k-g^k(W_s^k)\|^2
		+\sum_{i=0}^{\tau-1}(1-\alpha)^i
		\left(
		\frac{C_g^2n}{\alpha}\eta^2+\alpha^2\sigma_g^2
		\right) \notag\\
		&\le
		(1-\alpha)^\tau
		\frac1K\sum_{k=1}^K
		\mathbb E\|u_s^k-g^k(W_s^k)\|^2
		+\tau
		\left(
		\frac{C_g^2n}{\alpha}\eta^2+\alpha^2\sigma_g^2
		\right).
		\label{eq:app-inner-block-end}
	\end{align}
	At the block end, the server replaces $W_{s+\tau}^k$ with
	$\bar W_{s+\tau}$ while leaving $u_{s+\tau}^k$ unchanged. Young's inequality
	with parameter $\alpha\tau/2$ gives
	\begin{align}
		&\frac1K\sum_{k=1}^K
		\mathbb E\|u_{s+\tau}^k-g^k(\bar W_{s+\tau})\|^2 \notag\\
		&\le
		\left(1+\frac{\alpha\tau}{2}\right)
		\frac1K\sum_{k=1}^K
		\mathbb E\|u_{s+\tau}^k-g^k(W_{s+\tau}^k)\|^2 +
		\left(1+\frac{2}{\alpha\tau}\right)
		\frac1K\sum_{k=1}^K
		\mathbb E\|g^k(W_{s+\tau}^k)-g^k(\bar W_{s+\tau})\|^2
		\notag\\
		&\le
		\left(1+\frac{\alpha\tau}{2}\right)(1-\alpha)^\tau
		\frac1K\sum_{k=1}^K
		\mathbb E\|u_s^k-g^k(W_s^k)\|^2
		+\frac32\tau
		\left(
		\frac{C_g^2n}{\alpha}\eta^2+\alpha^2\sigma_g^2
		\right)
		\notag\\
		&\quad+
		\left(1+\frac{2}{\alpha\tau}\right)C_g^2
		\frac{\eta^2}{K}\sum_{k=1}^K
		\mathbb E\norm{
			\sum_{i=s}^{s+\tau-1}
			\left(
			U_i^k(V_i^k)^\top-\frac1K\sum_{j=1}^K U_i^j(V_i^j)^\top
			\right)
		}_F^2
		\notag\\
		&\le
		\left(1+\frac{\alpha\tau}{2}\right)(1-\alpha)^\tau
		\frac1K\sum_{k=1}^K
		\mathbb E\|u_s^k-g^k(W_s^k)\|^2
		+\frac32\tau
		\left(
		\frac{C_g^2n}{\alpha}\eta^2+\alpha^2\sigma_g^2
		\right)
		+4\left(1+\frac{2}{\alpha\tau}\right)
		C_g^2\eta^2\tau^2n
		\notag\\
		&\le
		\left(1+\frac{\alpha\tau}{2}\right)e^{-\alpha\tau}
		\frac1K\sum_{k=1}^K
		\mathbb E\|u_s^k-g^k(W_s^k)\|^2
		+\frac32\tau
		\left(
		\frac{C_g^2n}{\alpha}\eta^2+\alpha^2\sigma_g^2
		\right)
		+\frac{12\tau C_g^2n}{\alpha}\eta^2
		\notag\\
		&\le
		\left(1-\frac{\alpha\tau}{3}\right)
		\frac1K\sum_{k=1}^K
		\mathbb E\|u_s^k-g^k(W_s^k)\|^2
		+14\tau
		\left(
		\frac{C_g^2n}{\alpha}\eta^2+\alpha^2\sigma_g^2
		\right),
		\label{eq:app-inner-block-recursion}
	\end{align}
	Let $Q=\left\lfloor T/\tau\right\rfloor$ and
	$R=T-Q\tau<\tau$. Since
	$\sum_{q=0}^{Q-1}\left(1-\frac{\alpha\tau}{3}\right)^q
	\le \frac{3}{\alpha\tau}$,
	combining \eqref{eq:app-inner-block-sum},
	\eqref{eq:app-inner-block-recursion}, and
	\eqref{eq:app-inner-one-step}, we have
	\begin{align}
		&\frac1T\sum_{t=0}^{T-1}\frac1K\sum_{k=1}^K
		\mathbb E\|u_t^k-g^k(W_t^k)\|^2
		\notag\\
		&=
		\frac1T
		\sum_{q=0}^{Q-1}\sum_{\ell=0}^{\tau-1}
		\frac1K\sum_{k=1}^K
		\mathbb E
		\|u_{q\tau+\ell}^k-g^k(W_{q\tau+\ell}^k)\|^2
		+
		\frac1T
		\sum_{\ell=0}^{R-1}
		\frac1K\sum_{k=1}^K
		\mathbb E
		\|u_{Q\tau+\ell}^k-g^k(W_{Q\tau+\ell}^k)\|^2
		\notag\\
		&\le
		\frac1T\Bigg[
		\tau\sum_{q=0}^{Q-1}
		\frac1K\sum_{k=1}^K
		\mathbb E\|u_{q\tau}^k-g^k(W_{q\tau}^k)\|^2
		+
		Q\tau^2
		\left(
		\frac{C_g^2n}{\alpha}\eta^2+\alpha^2\sigma_g^2
		\right)
		\notag\\
		&\qquad\qquad+
		R\frac1K\sum_{k=1}^K
		\mathbb E\|u_{Q\tau}^k-g^k(W_{Q\tau}^k)\|^2
		+
		R^2
		\left(
		\frac{C_g^2n}{\alpha}\eta^2+\alpha^2\sigma_g^2
		\right)
		\Bigg]
		\notag\\
		&\le
		\frac1T\Bigg[
		\tau\Bigg(
		\frac{3}{\alpha\tau}
		\frac1K\sum_{k=1}^K
		\mathbb E\|u_0^k-g^k(W_0^k)\|^2
		+
		\frac{42Q}{\alpha}
		\left(
		\frac{C_g^2n}{\alpha}\eta^2+\alpha^2\sigma_g^2
		\right)
		\Bigg)
		+
		Q\tau^2
		\left(
		\frac{C_g^2n}{\alpha}\eta^2+\alpha^2\sigma_g^2
		\right)
		\notag\\
		&\qquad\qquad+
		R\Bigg(
		\frac1K\sum_{k=1}^K
		\mathbb E\|u_0^k-g^k(W_0^k)\|^2
		+
		\frac{42}{\alpha}
		\left(
		\frac{C_g^2n}{\alpha}\eta^2+\alpha^2\sigma_g^2
		\right)
		\Bigg)
		+
		R^2
		\left(
		\frac{C_g^2n}{\alpha}\eta^2+\alpha^2\sigma_g^2
		\right)
		\Bigg]
		\notag\\
		&\le
		\frac1T\Bigg[
		\frac3\alpha
		\frac1K\sum_{k=1}^K
		\mathbb E\|u_0^k-g^k(W_0^k)\|^2
		+
		43T
		\left(
		\frac{C_g^2n}{\alpha^2}\eta^2+\alpha\sigma_g^2
		\right)
		\notag\\
		&\qquad\qquad+
		\frac1\alpha
		\frac1K\sum_{k=1}^K
		\mathbb E\|u_0^k-g^k(W_0^k)\|^2
		+
		43T
		\left(
		\frac{C_g^2n}{\alpha^2}\eta^2+\alpha\sigma_g^2
		\right)
		\Bigg]
		\notag\\
		&=
		\frac{4}{\alpha T}\frac1K\sum_{k=1}^K
		\mathbb E\|u_0^k-g^k(W_0^k)\|^2
		+
		\frac{86C_g^2n}{\alpha^2}\eta^2
		+
		86\alpha\sigma_g^2.
	\end{align}
	The second inequality follows by iterating
	\eqref{eq:app-inner-block-recursion}, while the last inequality uses
	$Q\tau\le T$, $R<\tau\le1/\alpha$, and $R\le T$.
	
\end{proof}

\begin{lemma}
	\label{lem:app-oracle-error}
	Let $\{W_t^k,u_t^k,M_t^k\}$ be generated by Algorithm~\ref{alg:fedcomuon}.
	For each client $k\in[K]$ and iteration $t\ge0$, define the stochastic
	compositional gradient estimator
	\begin{equation}
		Z_t^k
		=
		\nabla g^k(W_t^k;\xi_t^k)
		\left(
		\nabla_y f^k(u_t^k;\zeta_t^k)\otimes I_n
		\right).
		\label{eq:app-Z-definition}
	\end{equation}
	Given Assumptions~\ref{assm:smooth}, \ref{assm:bounded-grad},
	and~\ref{assm:oracle}, for any $k\in[K]$ and $t\ge0$, we have
	\begin{equation}
		\mathbb E
		\norm{
			Z_t^k-\nabla F^k(W_t^k)
		}_F
		\le
		C_gL_f
		\sqrt{
			\mathbb E
			\norm{
				u_t^k-g^k(W_t^k)
			}^2
		}
		+C_g\sigma_f+C_f\sigma_{\nabla g}.
		\label{eq:app-oracle-error}
	\end{equation}
\end{lemma}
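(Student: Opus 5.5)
We will bound the error by telescoping through two intermediate matrices that differ from $Z_t^k$ in only one factor each, and then apply the triangle inequality. Writing $\hat J=\nabla g^k(W_t^k;\xi_t^k)$, $J=\nabla g^k(W_t^k)$, $\hat v=\nabla_y f^k(u_t^k;\zeta_t^k)$, $v_u=\nabla_y f^k(u_t^k)$ and $v_g=\nabla_y f^k(g^k(W_t^k))$, we use the exact decomposition
\begin{align*}
Z_t^k-\nabla F^k(W_t^k)
&=\hat J\bigl((\hat v-v_u)\otimes I_n\bigr)
+(\hat J-J)\bigl(v_u\otimes I_n\bigr)
+J\bigl((v_u-v_g)\otimes I_n\bigr),
\end{align*}
which is checked by expanding and cancelling the intermediate terms. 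Taking Frobenius norms and expectations then gives three terms to control separately.

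For each product we use the inequality $\|A(x\otimes I_n)\|_F\le\|A\|_F\|x\|$, exactly as in the proof of Lemma~\ref{lem:smooth-compositional}.

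\emph{Third term (tracking error).} Here $\|J\|_F\le C_g$ by Jensen and Assumption~\ref{assm:bounded-grad}, and $\|v_u-v_g\|\le L_f\|u_t^k-g^k(W_t^k)\|$ by Assumption~\ref{assm:smooth}. A final Jensen step, $\mathbb E X\le\sqrt{\mathbb E X^2}$, yields the $C_gL_f\sqrt{\mathbb E\|u_t^k-g^k(W_t^k)\|^2}$ term.

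\emph{Second term (Jacobian noise).} Since $\|v_u\|\le C_f$ deterministically, conditioning on $(W_t^k,u_t^k)$ and applying Jensen gives $\mathbb E\|\hat J-J\|_F\le\sigma_{\nabla g}$. This produces $C_f\sigma_{\nabla g}$.

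\emph{First term (outer-gradient noise).} We bound $\mathbb E\bigl[\|\hat J\|_F\|\hat v-v_u\|\bigr]$ by conditioning on the past $\mathcal F$ generated by everything before $\xi_t^k,\zeta_t^k$, so that $W_t^k$ and $u_t^k$ are fixed. By Assumption~\ref{assm:oracle}, $\xi_t^k$ and $\zeta_t^k$ are independent of each other. The conditional expectation therefore factorizes as
\[
\mathbb E[\|\hat J\|_F\mid\mathcal F]\,\mathbb E[\|\hat v-v_u\|\mid\mathcal F]\le C_g\sigma_f,
\]
where each factor is bounded by Jensen together with the second-moment and variance bounds of Assumptions~\ref{assm:bounded-grad} and~\ref{assm:oracle}.

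The main subtlety is the measurability and independence used in the first term. In Algorithm~\ref{alg:fedcomuon}, $u_t^k$ itself depends on $\xi_t^k$ through $u_t^k=\alpha g^k(W_t^k;\xi_t^k)+(1-\alpha)u_{t-1}^k$. The factorization must therefore be done by conditioning on $\xi_t^k$ together with the past, not only on $\mathcal F$. Given that $\sigma$-field, $u_t^k$ and $\hat J$ are fixed, and $\zeta_t^k$ is still fresh and independent. This gives
\[
\mathbb E\bigl[\|\hat v-v_u\|\;\big|\;\mathcal F,\xi_t^k\bigr]\le\sigma_f
\]
for the $\zeta$-variance at the fixed point $u_t^k$. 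It remains to take the outer expectation of $\|\hat J\|_F$, which is at most $C_g$ by Jensen. With this ordering of conditioning, the three bounds sum to the claim.
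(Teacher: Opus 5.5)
Your proof is correct and is essentially the paper's argument. Both split $Z_t^k-\nabla F^k(W_t^k)$ into outer-gradient noise, Jacobian noise, and tracking error, and bound each piece using $\|\nabla g^k(W)\|_F\le C_g$, $\|\nabla_y f^k(y)\|\le C_f$, the variance bounds, and $L_f$-smoothness, arriving at identical constants. The only difference in the decomposition is the pivot: the paper goes through $\hat J\bigl((v_u-v_g)\otimes I_n\bigr)$ and $(\hat J-J)\bigl(v_g\otimes I_n\bigr)$ rather than your $J\bigl((v_u-v_g)\otimes I_n\bigr)$ and $(\hat J-J)\bigl(v_u\otimes I_n\bigr)$.

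The paper bounds the product terms by Cauchy--Schwarz, e.g.\ $\mathbb E\bigl[\|\hat J\|_F\|\hat v-v_u\|\bigr]\le\sqrt{\mathbb E\|\hat J\|_F^2}\,\sqrt{\mathbb E\|\hat v-v_u\|^2}$. This needs no independence between the two factors (only that $\zeta_t^k$ is fresh given $u_t^k$), so your careful ordering of conditioning is unnecessary there. Likewise, in your second term you should condition only on the past $\mathcal F$, not on $u_t^k$ (which depends on $\xi_t^k$); this is harmless because $\|v_u\|\le C_f$ holds uniformly.
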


\begin{proof}
	For any $k\in[K]$, we have
	\begin{align}
		&\mathbb E
		\norm{
			Z_t^k-\nabla F^k(W_t^k)
		}_F \notag\\
		&=
		\mathbb E
		\norm{
			\nabla g^k(W_t^k;\xi_t^k)
			\left(
			\nabla_y f^k(u_t^k;\zeta_t^k)\otimes I_n
			\right)
			\notag
			-
			\nabla g^k(W_t^k)
			\left(
			\nabla_y f^k(g^k(W_t^k))\otimes I_n
			\right)
		}_F \notag\\
		&\le
		\mathbb E
		\norm{
			\nabla g^k(W_t^k;\xi_t^k)
			\left(
			\left(
			\nabla_y f^k(u_t^k;\zeta_t^k)
			-
			\nabla_y f^k(u_t^k)
			\right)\otimes I_n
			\right)
		}_F \notag\\
		&\quad+
		\mathbb E
		\norm{
			\nabla g^k(W_t^k;\xi_t^k)
			\left(
			\left(
			\nabla_y f^k(u_t^k)
			-
			\nabla_y f^k(g^k(W_t^k))
			\right)\otimes I_n
			\right)
		}_F \notag\\
		&\quad+
		\mathbb E
		\norm{
			\left(
			\nabla g^k(W_t^k;\xi_t^k)
			-
			\nabla g^k(W_t^k)
			\right)
			\left(
			\nabla_y f^k(g^k(W_t^k))\otimes I_n
			\right)
		}_F \notag\\
		&\le
		\sqrt{
			\mathbb E
			\norm{
				\nabla g^k(W_t^k;\xi_t^k)
			}_F^2
		}
		\sqrt{
			\mathbb E
			\norm{
				\nabla_y f^k(u_t^k;\zeta_t^k)
				-
				\nabla_y f^k(u_t^k)
			}^2
		}
		\notag\\
		&\quad+
		L_f
		\sqrt{
			\mathbb E
			\norm{
				\nabla g^k(W_t^k;\xi_t^k)
			}_F^2
		}
		\sqrt{
			\mathbb E
			\norm{
				u_t^k-g^k(W_t^k)
			}^2
		}
		\notag\\
		&\quad+
		C_f
		\sqrt{
			\mathbb E
			\norm{
				\nabla g^k(W_t^k;\xi_t^k)
				-
				\nabla g^k(W_t^k)
			}_F^2
		}
		\notag\\
		&\le
		C_g\sigma_f
		+
		C_gL_f
		\sqrt{
			\mathbb E
			\norm{
				u_t^k-g^k(W_t^k)
			}^2
		}
		+
		C_f\sigma_{\nabla g},
	\end{align}
	where we used Assumptions~\ref{assm:smooth}, \ref{assm:oracle},
	and~\ref{assm:bounded-grad}, together with the Cauchy--Schwarz inequality.
\end{proof}

\begin{lemma}
	\label{lem:app-drift}
	Let $\bar W_t=K^{-1}\sum_{k=1}^K W_t^k$ and
	$\bar M_t=K^{-1}\sum_{k=1}^K M_t^k$, and let
	$s_t=\tau\lfloor t/\tau\rfloor$ denote the beginning of the
	communication block containing iteration $t$. Then, for every $t\ge0$,
	\begin{equation}
		\frac1K\sum_{k=1}^K
		\|W_t^k-\bar W_t\|_F
		\le
		2\eta\tau\sqrt n .
		\label{eq:app-model-drift}
	\end{equation}
	Moreover, under Assumptions~\ref{assm:smooth}, \ref{assm:oracle},
	\ref{assm:bounded-grad}, and~\ref{assm:hetero}, for any $T\ge1$,
	\begin{align}
		&\frac1T\sum_{t=0}^{T-1}\frac1K\sum_{k=1}^K
		\mathbb E\|M_t^k-\bar M_t\|_F \notag\\
		&\le
		\beta\tau
		\Bigg[
		2C_gL_f
		\left(
		\frac1T\sum_{t=0}^{T-1}\frac1K\sum_{k=1}^K
		\mathbb E\|u_t^k-g^k(W_t^k)\|^2
		\right)^{1/2}
		+2C_g\sigma_f+2C_f\sigma_{\nabla g}
		+4\eta\tau\sqrt n\,L_F+\delta
		\Bigg].
		\label{eq:app-momentum-drift}
	\end{align}
\end{lemma}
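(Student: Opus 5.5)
Both bounds follow from one structural fact. Right after each synchronization at $s_t$, all clients share $\bar W_{s_t}$ and $\bar M_{s_t}$. Hence $W_{s_t}^k-\bar W_{s_t}=0$ and $M_{s_t}^k-\bar M_{s_t}=0$, and every deviation at time $t$ is produced by the at most $\tau-1$ local steps since $s_t$. We track these deviations from $s_t$, using the convention that after averaging the local copies are reset to the averages.

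\emph{Model drift.} Unrolling line 6 of Algorithm~\ref{alg:fedcomuon} gives
$W_t^k-\bar W_t=-\eta\sum_{i=s_t}^{t-1}\bigl(U_i^k(V_i^k)^\top-\frac1K\sum_j U_i^j(V_i^j)^\top\bigr)$.
The triangle inequality and $\|UV^\top\|_F\le\sqrt n$ from Lemma~\ref{lem:muon-properties} bound each summand by $2\sqrt n$. There are at most $\tau$ summands, so $\|W_t^k-\bar W_t\|_F\le 2\eta\tau\sqrt n$ for every $k$, which gives \eqref{eq:app-model-drift} after averaging over $k$. This step is deterministic.

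\emph{Momentum drift.} Line 10 is linear, so $\bar M_{t+1}=\beta\bar Z_{t+1}+(1-\beta)\bar M_t$ with $\bar Z=\frac1K\sum_k Z^k$. Unrolling from $s_t$ gives
$M_t^k-\bar M_t=\beta\sum_{i=s_t+1}^{t}(1-\beta)^{t-i}(Z_i^k-\bar Z_i)$.
Hence $\frac1K\sum_k\|M_t^k-\bar M_t\|_F\le\beta\sum_{i=s_t+1}^t\frac1K\sum_k\|Z_i^k-\bar Z_i\|_F$. We then bound $\frac1K\sum_k\|Z_i^k-\bar Z_i\|_F\le\frac2K\sum_k\|Z_i^k-\nabla F^k(\bar W_i)\|_F+\frac1K\sum_k\|\nabla F^k(\bar W_i)-\nabla F(\bar W_i)\|_F$. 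The factor $2$ comes from the triangle inequality $\|\bar Z_i-\nabla F(\bar W_i)\|_F\le\frac1K\sum_j\|Z_i^j-\nabla F^j(\bar W_i)\|_F$.

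Next we split $Z_i^k-\nabla F^k(\bar W_i)$ through $\nabla F^k(W_i^k)$. Lemma~\ref{lem:app-oracle-error} controls $\mathbb E\|Z_i^k-\nabla F^k(W_i^k)\|_F$ by $C_gL_f\sqrt{\mathbb E\|u_i^k-g^k(W_i^k)\|^2}+C_g\sigma_f+C_f\sigma_{\nabla g}$. Lemma~\ref{lem:smooth-compositional} and the model-drift bound give $\|\nabla F^k(W_i^k)-\nabla F^k(\bar W_i)\|_F\le 2L_F\eta\tau\sqrt n$. For the heterogeneity term, Jensen and Assumption~\ref{assm:hetero} give $\frac1K\sum_k\|\nabla F^k(\bar W_i)-\nabla F(\bar W_i)\|_F\le\delta$. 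After the factor $2$ this produces $2C_gL_f\sqrt{\cdot}+2C_g\sigma_f+2C_f\sigma_{\nabla g}+4\eta\tau\sqrt nL_F+\delta$, which matches the bracket in \eqref{eq:app-momentum-drift}.

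\emph{Main obstacle: averaging over time.} Each index $i$ appears in the inner sum for at most $\tau$ values of $t$ (those in the same block with $t\ge i$). Therefore $\frac1T\sum_t\sum_{i=s_t+1}^t a_i\le\frac{\tau}{T}\sum_{i}a_i$, with $i$ ranging over $1,\dots,T-1\subseteq\{0,\dots,T-1\}$. For the tracking-error term, we apply Jensen/Cauchy--Schwarz to the average over $(i,k)$: $\frac1T\sum_i\frac1K\sum_k\sqrt{\mathbb E\|u_i^k-g^k(W_i^k)\|^2}\le\bigl(\frac1T\sum_i\frac1K\sum_k\mathbb E\|u_i^k-g^k(W_i^k)\|^2\bigr)^{1/2}$. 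Collecting terms yields \eqref{eq:app-momentum-drift}.

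The subtle point is the indexing at block boundaries. The $Z$ sample used at $i=s_t$ is absorbed into the averaged $\bar M_{s_t}$, so it contributes no drift and the counting stays within $\tau$. We also need $W_i^k$ in Lemma~\ref{lem:app-oracle-error} to be the pre-averaging local iterate that $Z_i^k$ was evaluated at. At a boundary index, $Z$ is computed before averaging but then discarded into the mean, so this causes no difficulty.
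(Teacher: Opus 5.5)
Your proposal is essentially the paper's proof: the same unrolling from the last synchronization point $s_t$, the same $(1-\beta)^{t-i}\le 1$ step and ``each $i$ is counted at most $\tau$ times'' exchange, and the same use of Lemma~\ref{lem:app-oracle-error}, $L_F$-smoothness, the drift bound and Jensen. Your only departure is cosmetic: you center $Z_i^k-\bar Z_i$ at $\nabla F^k(\bar W_i)$, whereas the paper centers at $\nabla F^k(W_i^k)$ and then splits $\nabla F^k(W_i^k)-\frac1K\sum_j\nabla F^j(W_i^j)$ three ways, and both yield exactly $2C_gL_f\sqrt{\cdot}+2C_g\sigma_f+2C_f\sigma_{\nabla g}+4\eta\tau\sqrt n\,L_F+\delta$.

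There is one caveat, which you share with the paper's appeal to its ``synchronization convention'': Algorithm~\ref{alg:fedcomuon} never averages the initial momenta $M_0^k=Z_0^k$. Consequently $M_{s_t}^k=\bar M_{s_t}$, and your claim that the sample at $i=s_t$ is absorbed into the mean, both fail for the first block $s_t=0$ unless you impose $M_0^k=\bar M_0$ as Algorithm~\ref{alg:fedcomuon-vr} does. Without that convention an extra term $(1-\beta)^t(M_0^k-\bar M_0)$ appears that is not $O(\beta\tau)$, and for instance $T=1$ with $\beta\to0$ violates \eqref{eq:app-momentum-drift}.
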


\begin{proof}
	For any $t\ge0$, since the last synchronization occurs at
	$s_t=\tau\lfloor t/\tau\rfloor$, we have $W_{s_t}^k=\bar W_{s_t}$ and
	\begin{align}
		\frac1K\sum_{k=1}^K
		\|W_t^k-\bar W_t\|_F
		&=
		\frac1K\sum_{k=1}^K
		\norm{
			\bar W_{s_t}
			-\eta\sum_{i=s_t}^{t-1}\frac1K\sum_{j=1}^K U_i^j(V_i^j)^\top
			-W_{s_t}^k
			+\eta\sum_{i=s_t}^{t-1}U_i^k(V_i^k)^\top
		}_F \notag\\
		&\le
		\eta\frac1K\sum_{k=1}^K
		\norm{
			\sum_{i=s_t}^{t-1}U_i^k(V_i^k)^\top
			-
			\sum_{i=s_t}^{t-1}\frac1K\sum_{j=1}^K U_i^j(V_i^j)^\top
		}_F \notag\\
		&\le
		\eta\frac1K\sum_{k=1}^K
		\norm{
			\sum_{i=s_t}^{t-1}U_i^k(V_i^k)^\top
		}_F
		+
		\eta\frac1K\sum_{k=1}^K
		\norm{
			\sum_{i=s_t}^{t-1}\frac1K\sum_{j=1}^K U_i^j(V_i^j)^\top
		}_F \notag\\
		&\le
		\eta\frac1K\sum_{k=1}^K
		\sum_{i=s_t}^{t-1}\|U_i^k(V_i^k)^\top\|_F
		+
		\eta\frac1K\sum_{k=1}^K
		\sum_{i=s_t}^{t-1}\frac1K\sum_{j=1}^K\|U_i^j(V_i^j)^\top\|_F \notag\\
		&\le
		2\eta(t-s_t)\sqrt n
		\le
		2\eta\tau\sqrt n .
	\end{align}
	
	Under the synchronization convention above,
	$M_{s_t}^k=\bar M_{s_t}$. For $t=s_t$, the momentum disagreement is zero.
	For $t>s_t$, averaging the momentum recursion over the clients and unrolling
	it from $s_t$ give
	\begin{align}
		M_t^k-\bar M_t
		&=
		(1-\beta)^{t-s_t}
		\left(M_{s_t}^k-\bar M_{s_t}\right)
		+
		\sum_{i=s_t+1}^{t}
		\beta(1-\beta)^{t-i}
		\left(
		Z_i^k-\frac1K\sum_{j=1}^K Z_i^j
		\right)
		\notag\\
		&=
		\sum_{i=s_t+1}^{t}
		\beta(1-\beta)^{t-i}
		\left(
		Z_i^k-\frac1K\sum_{j=1}^K Z_i^j
		\right).
	\end{align}
	Consequently, the triangle inequality gives, for every $t\ge0$,
	\begin{align}
		&\frac1K\sum_{k=1}^K
		\mathbb E\|M_t^k-\bar M_t\|_F
		\notag\\
		&\le
		\frac1K\sum_{k=1}^K
		\mathbb E\norm{
			\sum_{i=s_t+1}^{t}
			\beta(1-\beta)^{t-i}
			\left(
			Z_i^k-\frac1K\sum_{j=1}^K Z_i^j
			\right)
		}_F
		\notag\\
		&\le
		\sum_{i=s_t+1}^{t}
		\beta(1-\beta)^{t-i}
		\frac1K\sum_{k=1}^K
		\mathbb E\norm{
			Z_i^k-\frac1K\sum_{j=1}^K Z_i^j
		}_F .
		\label{eq:app-momentum-block}
	\end{align}
	For any $i$,
	\begin{align}
		&\frac1K\sum_{k=1}^K
		\mathbb E
		\norm{
			Z_i^k-\frac1K\sum_{j=1}^K Z_i^j
		}_F \notag\\
		&\le
		\frac2K\sum_{k=1}^K
		\mathbb E
		\norm{
			Z_i^k-\nabla F^k(W_i^k)
		}_F 
		+
		\frac1K\sum_{k=1}^K
		\mathbb E
		\norm{
			\nabla F^k(W_i^k)
			-
			\frac1K\sum_{j=1}^K
			\nabla F^j(W_i^j)
		}_F \notag\\
		&\le
		2C_gL_f
		\frac1K\sum_{k=1}^K
		\sqrt{
			\mathbb E
			\|u_i^k-g^k(W_i^k)\|^2
		}
		+
		2C_g\sigma_f
		+
		2C_f\sigma_{\nabla g} 
		+
		\frac1K\sum_{k=1}^K
		\mathbb E
		\norm{
			\nabla F^k(W_i^k)
			-
			\frac1K\sum_{j=1}^K
			\nabla F^j(W_i^j)
		}_F \notag\\
		&\le
		2C_gL_f
		\frac1K\sum_{k=1}^K
		\sqrt{
			\mathbb E
			\|u_i^k-g^k(W_i^k)\|^2
		}
		+
		2C_g\sigma_f
		+
		2C_f\sigma_{\nabla g} 
		+
		\frac1K\sum_{k=1}^K
		\mathbb E
		\norm{
			\nabla F^k(W_i^k)-\nabla F^k(\bar W_i)
		}_F 
		\notag\\
		&\quad+
		\frac1K\sum_{k=1}^K
		\mathbb E
		\norm{
			\nabla F^k(\bar W_i)-\nabla F(\bar W_i)
		}_F 
		+
		\mathbb E
		\norm{
			\nabla F(\bar W_i)
			-
			\frac1K\sum_{j=1}^K
			\nabla F^j(W_i^j)
		}_F \notag\\
		&\le
		2C_gL_f
		\frac1K\sum_{k=1}^K
		\sqrt{
			\mathbb E
			\|u_i^k-g^k(W_i^k)\|^2
		}
		+
		2C_g\sigma_f
		+
		2C_f\sigma_{\nabla g} +
		L_F
		\frac1K\sum_{k=1}^K
		\mathbb E\|W_i^k-\bar W_i\|_F
		\notag\\
		&\quad+
		\left(
		\frac1K\sum_{k=1}^K
		\mathbb E
		\|\nabla F^k(\bar W_i)-\nabla F(\bar W_i)\|_F^2
		\right)^{1/2} +
		L_F
		\frac1K\sum_{j=1}^K
		\mathbb E\|\bar W_i-W_i^j\|_F \notag\\
		&\le
		2C_gL_f
		\frac1K\sum_{k=1}^K
		\sqrt{
			\mathbb E
			\|u_i^k-g^k(W_i^k)\|^2
		}
		+
		2C_g\sigma_f
		+
		2C_f\sigma_{\nabla g} \notag\\
		&\quad+
		2L_F
		\frac1K\sum_{k=1}^K
		\mathbb E\|W_i^k-\bar W_i\|_F
		+
		\left(
		\frac1K\sum_{k=1}^K
		\mathbb E
		\|\nabla F^k(\bar W_i)-\nabla F(\bar W_i)\|_F^2
		\right)^{1/2} \notag\\
		&\le
		2C_gL_f
		\frac1K\sum_{k=1}^K
		\sqrt{
			\mathbb E
			\|u_i^k-g^k(W_i^k)\|^2
		}
		+
		2C_g\sigma_f
		+
		2C_f\sigma_{\nabla g}
		+
		4\eta\tau\sqrt n\,L_F
		+
		\delta .
		\label{eq:app-Z-drift}
	\end{align}
	Combining \eqref{eq:app-momentum-block} and \eqref{eq:app-Z-drift}, we obtain
	\begin{align}
		&\frac1T\sum_{t=0}^{T-1}\frac1K\sum_{k=1}^K
		\mathbb E\|M_t^k-\bar M_t\|_F \notag\\
		&\le
		\frac1T\sum_{t=0}^{T-1}
		\sum_{i=s_t+1}^{t}
		\beta(1-\beta)^{t-i}
		\Bigg[
		2C_gL_f
		\frac1K\sum_{k=1}^K
		\sqrt{
			\mathbb E
			\|u_i^k-g^k(W_i^k)\|^2
		}
		+2C_g\sigma_f+2C_f\sigma_{\nabla g}
		+4\eta\tau\sqrt n\,L_F+\delta
		\Bigg] \notag\\
		&\le
		\frac{\beta}{T}\sum_{t=0}^{T-1}
		\sum_{i=s_t+1}^{t}
		2C_gL_f
		\frac1K\sum_{k=1}^K
		\sqrt{
			\mathbb E
			\|u_i^k-g^k(W_i^k)\|^2
		}
		+
		\frac{\beta}{T}\sum_{t=0}^{T-1}
		\sum_{i=s_t+1}^{t}
		\left(
		2C_g\sigma_f+2C_f\sigma_{\nabla g}
		+4\eta\tau\sqrt n\,L_F+\delta
		\right) \notag\\
		&\le
		\frac{2\beta C_gL_f}{T}
		\sum_{i=0}^{T-1}
		\sum_{t=i}^{\min\{T-1,\tau\lfloor i/\tau\rfloor+\tau-1\}}
		\frac1K\sum_{k=1}^K
		\sqrt{
			\mathbb E
			\|u_i^k-g^k(W_i^k)\|^2
		} 
		+\frac{\beta}{T}\sum_{t=0}^{T-1}
		(t-s_t)
		\left(
		2C_g\sigma_f+2C_f\sigma_{\nabla g}
		+4\eta\tau\sqrt n\,L_F+\delta
		\right) \notag\\
		&\le
		\frac{2\beta\tau C_gL_f}{T}
		\sum_{i=0}^{T-1}
		\frac1K\sum_{k=1}^K
		\sqrt{
			\mathbb E
			\|u_i^k-g^k(W_i^k)\|^2
		} 
		+\beta\tau
		\left(
		2C_g\sigma_f+2C_f\sigma_{\nabla g}
		+4\eta\tau\sqrt n\,L_F+\delta
		\right) \notag\\
		&=
		\beta\tau
		\Bigg[
		\frac{2C_gL_f}{T}
		\sum_{t=0}^{T-1}
		\frac1K\sum_{k=1}^K
		\sqrt{
			\mathbb E
			\|u_t^k-g^k(W_t^k)\|^2
		}
		+2C_g\sigma_f+2C_f\sigma_{\nabla g}
		+4\eta\tau\sqrt n\,L_F+\delta
		\Bigg] \notag\\
		&\le
		\beta\tau
		\Bigg[
		2C_gL_f
		\left(
		\frac1T\sum_{t=0}^{T-1}
		\left(
		\frac1K\sum_{k=1}^K
		\sqrt{
			\mathbb E
			\|u_t^k-g^k(W_t^k)\|^2
		}
		\right)^2
		\right)^{1/2}
		+2C_g\sigma_f+2C_f\sigma_{\nabla g}
		+4\eta\tau\sqrt n\,L_F+\delta
		\Bigg] \notag\\
		&\le
		\beta\tau
		\Bigg[
		2C_gL_f
		\left(
		\frac1T\sum_{t=0}^{T-1}
		\frac1K\sum_{k=1}^K
		\mathbb E\|u_t^k-g^k(W_t^k)\|^2
		\right)^{1/2}
		+
		2C_g\sigma_f
		+
		2C_f\sigma_{\nabla g}
		+
		4\eta\tau\sqrt n\,L_F
		+
		\delta
		\Bigg],
	\end{align}
	where we used Lemmas~\ref{lem:app-oracle-error} and~\ref{lem:app-drift},
	Assumption~\ref{assm:hetero}, and the Cauchy--Schwarz inequality.
\end{proof}

\begin{lemma}
	\label{lem:app-descent}
	Let $\bar W_t=K^{-1}\sum_{k=1}^K W_t^k$
	and $\bar M_t=K^{-1}\sum_{k=1}^K M_t^k$. Given
	Assumptions~\ref{assm:smooth}, \ref{assm:oracle}, and~\ref{assm:bounded-grad},
	for every $t\ge0$, we have
	\begin{align}
		F(\bar W_{t+1})
		&\le
		F(\bar W_t)
		-\eta\|\nabla F(\bar W_t)\|_F
		+\frac{\eta^2nL_F}{2} 
		+2\eta\sqrt n\,\frac1K\sum_{k=1}^K\|M_t^k-\bar M_t\|_F
		\notag\\
		&\quad
		+2\eta\sqrt n\,L_F\frac1K\sum_{k=1}^K\|W_t^k-\bar W_t\|_F 
		+2\eta\sqrt n
		\norm{
			\frac1K\sum_{k=1}^K
			\left(\nabla F^k(W_t^k)-M_t^k\right)
		}_F .
		\label{eq:app-descent}
	\end{align}
\end{lemma}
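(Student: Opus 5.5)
The plan is to apply the $L_F$-smoothness of $F$ (Lemma~\ref{lem:smooth-compositional}) to the averaged iterate. Note first that $\bar W_{t+1}=\bar W_t-\eta\frac1K\sum_{k=1}^K U_t^k(V_t^k)^\top$. This holds at every step, including synchronization steps, because averaging the models does not change their mean. Write $O_t^k=U_t^k(V_t^k)^\top$ and $\bar O_t=\frac1K\sum_k O_t^k$. The descent lemma gives
\begin{align*}
F(\bar W_{t+1})\le F(\bar W_t)-\eta\langle\nabla F(\bar W_t),\bar O_t\rangle+\frac{L_F\eta^2}{2}\|\bar O_t\|_F^2 .
\end{align*}
By Lemma~\ref{lem:muon-properties} and convexity of $\|\cdot\|_F^2$, $\|\bar O_t\|_F^2\le \frac1K\sum_k\|O_t^k\|_F^2\le n$. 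This produces the term $\eta^2nL_F/2$.

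The main work is lower bounding the inner product $\frac1K\sum_k\langle\nabla F(\bar W_t),O_t^k\rangle$. For each $k$, apply the last inequality of Lemma~\ref{lem:muon-properties} with $A=\nabla F(\bar W_t)$ and $B=M_t^k$. This gives
\[
\langle\nabla F(\bar W_t),O_t^k\rangle\ge\|\nabla F(\bar W_t)\|_F-2\sqrt n\,\|\nabla F(\bar W_t)-M_t^k\|_F .
\]
Averaging over $k$ reduces the task to bounding $\frac1K\sum_k\|\nabla F(\bar W_t)-M_t^k\|_F$.

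Next, split each error by the triangle inequality:
\[
\nabla F(\bar W_t)-M_t^k=\bigl(\bar M_t-M_t^k\bigr)+\Bigl(\nabla F(\bar W_t)-\tfrac1K\textstyle\sum_j\nabla F^j(W_t^j)\Bigr)+\tfrac1K\textstyle\sum_j\bigl(\nabla F^j(W_t^j)-M_t^j\bigr).
\]
This works because $\bar M_t=\frac1K\sum_j M_t^j$.
\begin{itemize}
\item The first piece, averaged over $k$, gives the momentum-disagreement term $\frac1K\sum_k\|M_t^k-\bar M_t\|_F$.
\item The second piece equals $\frac1K\sum_j(\nabla F^j(\bar W_t)-\nabla F^j(W_t^j))$. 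By per-client $L_F$-smoothness it is at most $L_F\frac1K\sum_j\|W_t^j-\bar W_t\|_F$.
\item The third piece is exactly $\bigl\|\frac1K\sum_j(\nabla F^j(W_t^j)-M_t^j)\bigr\|_F$. It does not depend on $k$, so it passes through the average unchanged.
\end{itemize}
Multiplying by $2\eta\sqrt n$ yields the three error terms in \eqref{eq:app-descent} with the stated constants.

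The step I expect to be the only subtle one is the choice of $B=M_t^k$ client by client, rather than comparing to the average direction. Lemma~\ref{lem:muon-properties} applies only to a single orthogonalized matrix, so the sum over clients must be split before the lemma is invoked. The rest is a deterministic, pathwise inequality; no expectations are needed.
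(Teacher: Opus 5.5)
Your proposal is correct and follows essentially the same route as the paper's proof. Both apply $L_F$-smoothness to the averaged iterate and bound the quadratic term by $\eta^2 nL_F/2$. Both then invoke Lemma~\ref{lem:muon-properties} client by client with $A=\nabla F(\bar W_t)$ and $B=M_t^k$. Finally, both use the triangle inequality together with per-client $L_F$-smoothness; your one-shot three-piece split of $\nabla F(\bar W_t)-M_t^k$ is just a compressed version of the paper's two successive triangle-inequality steps.
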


\begin{proof}
	\begin{align}
		F(\bar W_{t+1})
		&\le
		F(\bar W_t)
		+
		\left\langle
		\nabla F(\bar W_t),
		\bar W_{t+1}-\bar W_t
		\right\rangle
		+
		\frac{L_F}{2}
		\|\bar W_{t+1}-\bar W_t\|_F^2 \notag\\
		&=
		F(\bar W_t)
		-\eta
		\left\langle
		\nabla F(\bar W_t),
		\frac1K\sum_{k=1}^K U_t^k(V_t^k)^\top
		\right\rangle
		+
		\frac{\eta^2L_F}{2}
		\norm{
			\frac1K\sum_{k=1}^K U_t^k(V_t^k)^\top
		}_F^2 \notag\\
		&\le
		F(\bar W_t)
		-\eta\frac1K\sum_{k=1}^K
		\left\langle
		\nabla F(\bar W_t),U_t^k(V_t^k)^\top
		\right\rangle
		+
		\frac{\eta^2nL_F}{2} \notag\\
		&\le
		F(\bar W_t)
		-\eta\|\nabla F(\bar W_t)\|_F
		+2\eta\sqrt n\frac1K\sum_{k=1}^K
		\|\nabla F(\bar W_t)-M_t^k\|_F
		+
		\frac{\eta^2nL_F}{2} \notag\\
		&\le
		F(\bar W_t)
		-\eta\|\nabla F(\bar W_t)\|_F
		+
		\frac{\eta^2nL_F}{2} 
		+2\eta\sqrt n
		\left(
		\frac1K\sum_{k=1}^K
		\|\bar M_t-M_t^k\|_F
		+
		\norm{
			\frac1K\sum_{k=1}^K
			\left(
			\nabla F^k(\bar W_t)-M_t^k
			\right)
		}_F
		\right) \notag\\
		&\le
		F(\bar W_t)
		-\eta\|\nabla F(\bar W_t)\|_F
		+
		\frac{\eta^2nL_F}{2}
		+2\eta\sqrt n
		\frac1K\sum_{k=1}^K
		\|\bar M_t-M_t^k\|_F \notag\\
		&\quad
		+2\eta\sqrt n
		\norm{
			\frac1K\sum_{k=1}^K
			\left(
			\nabla F^k(\bar W_t)
			-
			\nabla F^k(W_t^k)
			+
			\nabla F^k(W_t^k)-M_t^k
			\right)
		}_F \notag\\
		&\le
		F(\bar W_t)
		-\eta\|\nabla F(\bar W_t)\|_F
		+
		\frac{\eta^2nL_F}{2}
		+2\eta\sqrt n
		\frac1K\sum_{k=1}^K
		\|\bar M_t-M_t^k\|_F \notag\\
		&\quad
		+2\eta\sqrt n
		\frac1K\sum_{k=1}^K
		\norm{
			\nabla F^k(\bar W_t)
			-
			\nabla F^k(W_t^k)
		}_F 
		+2\eta\sqrt n
		\norm{
			\frac1K\sum_{k=1}^K
			\left(
			\nabla F^k(W_t^k)-M_t^k
			\right)
		}_F \notag\\
		&\le
		F(\bar W_t)
		-\eta\|\nabla F(\bar W_t)\|_F
		+
		\frac{\eta^2nL_F}{2}
		+2\eta\sqrt n
		\frac1K\sum_{k=1}^K
		\|\bar M_t-M_t^k\|_F \notag\\
		&\quad
		+2\eta\sqrt n\,L_F
		\frac1K\sum_{k=1}^K
		\|W_t^k-\bar W_t\|_F
		+2\eta\sqrt n
		\norm{
			\frac1K\sum_{k=1}^K
			\left(
			\nabla F^k(W_t^k)-M_t^k
			\right)
		}_F ,
	\end{align}
	where we used the $L_F$-smoothness of $F$,
	Lemma~\ref{lem:muon-properties}, and the triangle inequality.
\end{proof}

\begin{lemma}
	\label{lem:app-average-gradient-error}
	Under Assumptions~\ref{assm:smooth}, \ref{assm:oracle}, and
	\ref{assm:bounded-grad}, let $\{W_t^k,u_t^k,M_t^k\}$ be generated by
	Algorithm~\ref{alg:fedcomuon}. For any integer $T\ge1$,
	$0<\beta<1$, and $\tau\in\mathbb N_+$, we have
	\begin{align}
		&\frac1T\sum_{t=0}^{T-1}
		\mathbb E\norm{
			\frac1K\sum_{k=1}^K
			\left(M_t^k-\nabla F^k(W_t^k)\right)
		}_F \notag\\
		&\le
		\frac{1}{\beta T}
		\Bigg[
		C_gL_f\sigma_g
		+C_g\sigma_f+C_f\sigma_{\nabla g}
		\Bigg]
		+\frac{3\eta\sqrt n\,L_F}{\beta} \notag\\
		&\quad
		+C_gL_f
		\left(
		\frac1T\sum_{t=0}^{T-1}
		\frac1K\sum_{k=1}^K
		\mathbb E\|u_t^k-g^k(W_t^k)\|^2
		\right)^{1/2}
		+
		\frac{
			\sqrt{C_g^2\sigma_f^2+C_f^2\sigma_{\nabla g}^2}
			\,\sqrt\beta
		}{\sqrt K}.
		\label{eq:app-average-gradient-error}
	\end{align}
\end{lemma}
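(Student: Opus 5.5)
We bound the averaged momentum error $E_t:=\frac1K\sum_k(M_t^k-\nabla F^k(W_t^k))$ by unrolling the momentum recursion. The key observation is that averaging over clients commutes with the update. The server only replaces each $M^k$ by $\bar M$, which leaves the client average of the momentum unchanged. It does replace $W^k$ by $\bar W$, so the average of $\nabla F^k(W^k)$ jumps at synchronization steps. We handle this jump as an extra shift term whose size is controlled by Lemma~\ref{lem:app-drift}.

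\textbf{Step 1: one-step recursion.} From $M_{t+1}^k=\beta Z_{t+1}^k+(1-\beta)M_t^k$ we get
\[
E_{t+1}=(1-\beta)E_t+(1-\beta)\,D_t+\beta\,\frac1K\sum_k\bigl(Z_{t+1}^k-\nabla F^k(W_{t+1}^k)\bigr),
\]
with $D_t=\frac1K\sum_k(\nabla F^k(W_t^k)-\nabla F^k(W_{t+1}^k))$, where $W_{t+1}^k$ denotes the post-synchronization iterate. Using $L_F$-smoothness from Lemma~\ref{lem:smooth-compositional}, $\|D_t\|_F\le L_F\cdot\frac1K\sum_k\|W_t^k-W_{t+1}^k\|_F$. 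The local step contributes $\eta\sqrt n$ by Lemma~\ref{lem:muon-properties}. At synchronization the averaging shift is at most $\frac1K\sum_k\|W^k-\bar W\|_F\le2\eta\tau\sqrt n$. Spreading this over the $\tau$ steps of a block gives an amortized bound of about $3\eta\sqrt n L_F$ per step.

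Next split the sampling term as
\[
Z_{t+1}^k-\nabla F^k(W_{t+1}^k)=\bigl[Z_{t+1}^k-\bar Z_{t+1}^k\bigr]+\bigl[\bar Z_{t+1}^k-\nabla F^k(W_{t+1}^k)\bigr],
\]
where $\bar Z^k=\nabla g^k(W^k)(\nabla_y f^k(u^k)\otimes I_n)$ is the conditional mean given the past. The independence of $\xi,\zeta$ and of $u_{t+1}$ from $\zeta_{t+1}$ makes this the right mean, provided we condition appropriately on $\xi_{t+1}$ for $u_{t+1}$. The second bracket is bounded by $C_gL_f\|u^k-g^k(W^k)\|$, as in Lemma~\ref{lem:app-oracle-error}.

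\textbf{Step 2: unroll and average.} Unrolling gives
\[
E_t=(1-\beta)^tE_0+\sum_i(1-\beta)^{t-i}\bigl[(1-\beta)D_{i-1}+\beta(\text{noise}_i+\text{bias}_i)\bigr].
\]
We treat each piece as follows.
\begin{itemize}
\item \emph{Initial term.} Lemma~\ref{lem:app-oracle-error} at $t=0$ gives $\mathbb E\|E_0\|_F\le C_gL_f\sigma_g+C_g\sigma_f+C_f\sigma_{\nabla g}$. Summing the geometric weights over $t$ gives $\frac1{\beta T}[\cdot]$.
\item \emph{Drift term.} The $D$ terms sum, with geometric weights $\le1/\beta$, to $3\eta\sqrt nL_F/\beta$.
\item \emph{Bias term.} This term has weights summing to at most $1$. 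We bound it by $C_gL_f$ times the time- and client-average of $\sqrt{\mathbb E\|u-g\|^2}$, and Jensen/Cauchy--Schwarz turns this into the root-mean-square form in the statement.
\end{itemize}

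\textbf{Step 3: the noise term (main obstacle).} The noise term $\beta\sum_i(1-\beta)^{t-i}\frac1K\sum_k\epsilon_i^k$ is a martingale-difference sum. We bound its expectation by its second moment. Orthogonality across $i$ and independence across clients give
\[
\mathbb E\|\cdot\|_F^2\le\beta^2\sum_i(1-\beta)^{2(t-i)}\frac{1}{K}\max_k\mathbb E\|\epsilon_i^k\|_F^2\le\frac{\beta}{K}\,\mathbb E\|\epsilon\|_F^2.
\]
The delicate part is bounding $\mathbb E\|\epsilon^k\|_F^2$ by $2(C_g^2\sigma_f^2+C_f^2\sigma_{\nabla g}^2)$ or similar. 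We decompose
\[
\nabla g(\xi)\nabla f(\zeta)-\nabla g\,\nabla f=\nabla g(\xi)\bigl(\nabla f(\zeta)-\nabla f\bigr)+\bigl(\nabla g(\xi)-\nabla g\bigr)\nabla f.
\]
The first piece has mean zero conditional on $\xi$, and $\|\nabla f(u)\|\le C_f$. The cross term is delicate because $u_{t+1}$ depends on $\xi_{t+1}$. To avoid this, we would use the conditional mean $\nabla g^k(W)(\nabla f(u_{t+1})\otimes I_n)$ given $\xi$. Any leftover correlation is absorbed into the bias term via $L_f$ and $C_g$. Taking the square root then yields the $\sqrt{\beta}\sqrt{C_g^2\sigma_f^2+C_f^2\sigma_{\nabla g}^2}/\sqrt K$ term, up to constants that we would tune.
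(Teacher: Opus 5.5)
Your overall structure matches the paper's. You unroll the client-averaged error, use that averaging the $M_t^k$ leaves their mean unchanged while averaging the $W_t^k$ makes $\frac1K\sum_k\nabla F^k(W_t^k)$ jump, and split the result into initial, drift, tracking-bias and martingale-noise parts. However, the noise/bias split you write down in Step~1 does not work, and repairing it is exactly the step you leave open.

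Write $\tilde W=W_t^k-\eta U_t^k(V_t^k)^\top$ for the point where $Z_{t+1}^k$ is actually evaluated, and $W=W_{t+1}^k$ for the post-averaging iterate. Also write $u=u_{t+1}^k$, $\xi=\xi_{t+1}^k$, $\zeta=\zeta_{t+1}^k$. Since $u=\alpha g^k(\tilde W;\xi)+(1-\alpha)u_t^k$ depends on $\xi$, your $\bar Z^k=\nabla g^k(\cdot)(\nabla_y f^k(u)\otimes I_n)$ is not the conditional mean of $Z_{t+1}^k$; it is not even determined by the past. Likewise, the Jacobian-noise piece $(\nabla g^k(\tilde W;\xi)-\nabla g^k(\tilde W))(\nabla_y f^k(u)\otimes I_n)$ is in general not centered, so the cross terms over $i$ in Step~3 do not vanish. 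The paper instead uses
\begin{align*}
Z_{t+1}^k-\nabla F^k(W)
&=\nabla g^k(\tilde W;\xi)\bigl((\nabla_y f^k(u)-\nabla_y f^k(g^k(W)))\otimes I_n\bigr)
+\nabla g^k(\tilde W;\xi)\bigl((\nabla_y f^k(u;\zeta)-\nabla_y f^k(u))\otimes I_n\bigr)\\
&\quad+\bigl(\nabla g^k(\tilde W;\xi)-\nabla g^k(\tilde W)\bigr)\bigl(\nabla_y f^k(g^k(W))\otimes I_n\bigr)
+\bigl(\nabla g^k(\tilde W)-\nabla g^k(W)\bigr)\bigl(\nabla_y f^k(g^k(W))\otimes I_n\bigr).
\end{align*}
The key choice is to put $\nabla_y f^k(g^k(W))$, which is fixed before sampling, into the third term. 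Then the second term is centered given the past and $\xi$, while the third is centered given the past and is a function of $\xi$. So the two are orthogonal, and the per-client second moment is at most $C_g^2\sigma_f^2+C_f^2\sigma_{\nabla g}^2$ with no factor $2$. Your bound of ``$2(\cdots)$ or similar'' would leave an extra $\sqrt2$ in the last term of the statement. The first term keeps the stochastic Jacobian and is bounded by Cauchy--Schwarz with $\mathbb E\|\nabla g^k(\tilde W;\xi)\|_F^2\le C_g^2$, which gives exactly the $C_gL_f$ coefficient.

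The fourth term is the second thing you miss. $Z_{t+1}^k$ is evaluated at the pre-averaging point $\tilde W$, not at $W_{t+1}^k=\bar W_{t+1}$. So at synchronization steps there is an extra error of size $C_fL_g\|\tilde W-\bar W_{t+1}\|_F$; it vanishes at all other steps. Your drift budget cannot absorb it: with the loose $2\eta\tau\sqrt n$ bound from Lemma~\ref{lem:app-drift}, you already spend the entire $3\eta\sqrt nL_F/\beta$ on $D_t$. The paper instead drops the mean via $\frac1K\sum_k\mathbb E\|x_k-\bar x\|_F\le(\frac1K\sum_k\mathbb E\|x_k\|_F^2)^{1/2}$. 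This bounds both the synchronization step $\frac1K\sum_k\mathbb E\|W_t^k-\bar W_{t+1}\|_F$ and the mismatch $\frac1K\sum_k\mathbb E\|\tilde W-\bar W_{t+1}\|_F$ by $\eta\tau\sqrt n$. Amortized over a block, $D_t$ then contributes $2\eta\sqrt nL_F/\beta$. The mismatch contributes $\eta\sqrt nC_fL_g\le\eta\sqrt nL_F/\beta$, since its weights sum to at most $1$. Together these give the stated $3\eta\sqrt nL_F/\beta$.
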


\begin{proof}
	For each client $k$ and iteration $i\ge1$, the stochastic compositional
	gradient error admits the exact decomposition
	\begin{align}
		Z_i^k-\nabla F^k(W_i^k)
		&=
		\nabla g^k\left(
		W_{i-1}^k-\eta U_{i-1}^k(V_{i-1}^k)^\top;\xi_i^k
		\right)
		\left(
		\left(
		\nabla_y f^k(u_i^k)
		-
		\nabla_y f^k(g^k(W_i^k))
		\right)\otimes I_n
		\right)
		\notag\\
		&\quad+
		\nabla g^k\left(
		W_{i-1}^k-\eta U_{i-1}^k(V_{i-1}^k)^\top;\xi_i^k
		\right)
		\left(
		\left(
		\nabla_y f^k(u_i^k;\zeta_i^k)
		-
		\nabla_y f^k(u_i^k)
		\right)\otimes I_n
		\right)
		\notag\\
		&\quad+
		\left(
		\nabla g^k\left(
		W_{i-1}^k-\eta U_{i-1}^k(V_{i-1}^k)^\top;\xi_i^k
		\right)
		-\nabla g^k\left(
		W_{i-1}^k-\eta U_{i-1}^k(V_{i-1}^k)^\top
		\right)
		\right)
		\times
		\left(
		\nabla_y f^k(g^k(W_i^k))\otimes I_n
		\right)
		\notag\\
		&\quad+
		\left(
		\nabla g^k\left(
		W_{i-1}^k-\eta U_{i-1}^k(V_{i-1}^k)^\top
		\right)
		-\nabla g^k(W_i^k)
		\right)
		\left(
		\nabla_y f^k(g^k(W_i^k))\otimes I_n
		\right).
		\label{eq:app-gradient-error-correct-decomposition}
	\end{align}
	
	Let $\mathcal F_{i-1}$ denote the history before
	$\xi_i^k$ and $\zeta_i^k$ are drawn. By the unbiasedness and
	independence of the stochastic oracles, the two centered stochastic
	components in \eqref{eq:app-gradient-error-correct-decomposition}
	are orthogonal in expectation, and the cross terms between distinct
	clients vanish. Hence,
	\begin{align}
		&\mathbb E\Bigg\|
		\frac1K\sum_{k=1}^K
		\Bigg[
		\nabla g^k\left(
		W_{i-1}^k-\eta U_{i-1}^k(V_{i-1}^k)^\top;\xi_i^k
		\right)
		\left(
		\left(
		\nabla_y f^k(u_i^k;\zeta_i^k)
		-
		\nabla_y f^k(u_i^k)
		\right)\otimes I_n
		\right)
		\notag\\
		&\qquad\qquad+
		\left(
		\nabla g^k\left(
		W_{i-1}^k-\eta U_{i-1}^k(V_{i-1}^k)^\top;\xi_i^k
		\right)
		-\nabla g^k\left(
		W_{i-1}^k-\eta U_{i-1}^k(V_{i-1}^k)^\top
		\right)
		\right)
		\left(
		\nabla_y f^k(g^k(W_i^k))\otimes I_n
		\right)
		\Bigg]
		\Bigg\|_F^2
		\notag\\
		&=
		\frac1{K^2}\sum_{k=1}^K
		\mathbb E\Bigg\|
		\nabla g^k\left(
		W_{i-1}^k-\eta U_{i-1}^k(V_{i-1}^k)^\top;\xi_i^k
		\right)
		\left(
		\left(
		\nabla_y f^k(u_i^k;\zeta_i^k)
		-
		\nabla_y f^k(u_i^k)
		\right)\otimes I_n
		\right)
		\notag\\
		&\qquad\qquad+
		\left(
		\nabla g^k\left(
		W_{i-1}^k-\eta U_{i-1}^k(V_{i-1}^k)^\top;\xi_i^k
		\right)
		-\nabla g^k\left(
		W_{i-1}^k-\eta U_{i-1}^k(V_{i-1}^k)^\top
		\right)
		\right)
		\left(
		\nabla_y f^k(g^k(W_i^k))\otimes I_n
		\right)
		\Bigg\|_F^2
		\notag\\
		&=
		\frac1{K^2}\sum_{k=1}^K
		\mathbb E\Bigg\|
		\nabla g^k\left(
		W_{i-1}^k-\eta U_{i-1}^k(V_{i-1}^k)^\top;\xi_i^k
		\right)
		\left(
		\left(
		\nabla_y f^k(u_i^k;\zeta_i^k)
		-
		\nabla_y f^k(u_i^k)
		\right)\otimes I_n
		\right)
		\Bigg\|_F^2
		\notag\\
		&\quad+
		\frac1{K^2}\sum_{k=1}^K
		\mathbb E\Bigg\|
		\left(
		\nabla g^k\left(
		W_{i-1}^k-\eta U_{i-1}^k(V_{i-1}^k)^\top;\xi_i^k
		\right)
		-\nabla g^k\left(
		W_{i-1}^k-\eta U_{i-1}^k(V_{i-1}^k)^\top
		\right)
		\right)
		\left(
		\nabla_y f^k(g^k(W_i^k))\otimes I_n
		\right)
		\Bigg\|_F^2
		\notag\\
		&\le
		\frac1{K^2}\sum_{k=1}^K
		\mathbb E\left[
		\norm{\nabla g^k\left(
			W_{i-1}^k-\eta U_{i-1}^k(V_{i-1}^k)^\top;\xi_i^k
			\right)}_F^2
		\mathbb E\left[
		\norm{
			\nabla_y f^k(u_i^k;\zeta_i^k)
			-
			\nabla_y f^k(u_i^k)
		}^2
		\,\middle|\,
		\mathcal F_{i-1},\xi_i^k
		\right]
		\right]
		\notag\\
		&\quad+
		\frac1{K^2}\sum_{k=1}^K
		\mathbb E\left[
		\norm{
			\nabla g^k\left(
			W_{i-1}^k-\eta U_{i-1}^k(V_{i-1}^k)^\top;\xi_i^k
			\right)
			-\nabla g^k\left(
			W_{i-1}^k-\eta U_{i-1}^k(V_{i-1}^k)^\top
			\right)
		}_F^2
		\norm{
			\nabla_y f^k(g^k(W_i^k))
		}^2
		\right]
		\notag\\
		&\le
		\frac1{K^2}\sum_{k=1}^K
		\left(
		C_g^2\sigma_f^2
		+
		C_f^2\sigma_{\nabla g}^2
		\right)
		\notag\\
		&=
		\frac{
			C_g^2\sigma_f^2
			+
			C_f^2\sigma_{\nabla g}^2
		}{K}.
		\label{eq:app-correct-client-average-noise}
	\end{align}
	The last inequality follows from
	Assumptions~\ref{assm:oracle} and~\ref{assm:bounded-grad}, together
	with Jensen's inequality, which gives
	$\|\nabla_y f^k(y)\|\le C_f$.
	
	Synchronizing $M_t^k$ does not change its client average. According to
	the momentum update,
	\begin{align}
		&\frac1K\sum_{k=1}^K
		\left(
		M_t^k-\nabla F^k(W_t^k)
		\right)
		\notag\\
		&=
		\frac1K\sum_{k=1}^K
		\left(
		(1-\beta)M_{t-1}^k
		+
		\beta Z_t^k
		-
		\nabla F^k(W_t^k)
		\right)
		\notag\\
		&=
		(1-\beta)
		\frac1K\sum_{k=1}^K
		\left(
		M_{t-1}^k-\nabla F^k(W_{t-1}^k)
		\right)
		+
		(1-\beta)
		\frac1K\sum_{k=1}^K
		\left(
		\nabla F^k(W_{t-1}^k)
		-
		\nabla F^k(W_t^k)
		\right)
		\notag\\
		&\quad+
		\beta
		\frac1K\sum_{k=1}^K
		\left(
		Z_t^k-\nabla F^k(W_t^k)
		\right)
		\notag\\
		&=
		(1-\beta)^t
		\frac1K\sum_{k=1}^K
		\left(
		M_0^k-\nabla F^k(W_0^k)
		\right)
		+
		\sum_{i=1}^{t}
		(1-\beta)^{t-i+1}
		\frac1K\sum_{k=1}^K
		\left(
		\nabla F^k(W_{i-1}^k)
		-
		\nabla F^k(W_i^k)
		\right)
		\notag\\
		&\quad+
		\sum_{i=1}^{t}
		\beta(1-\beta)^{t-i}
		\frac1K\sum_{k=1}^K
		\left(
		Z_i^k-\nabla F^k(W_i^k)
		\right).
		\label{eq:app-average-gradient-unrolled}
	\end{align}
	
	Substituting
	\eqref{eq:app-gradient-error-correct-decomposition} into
	\eqref{eq:app-average-gradient-unrolled} and applying the triangle
	inequality yield
	\begin{align}
		&\mathbb E\norm{
			\frac1K\sum_{k=1}^K
			\left(
			M_t^k-\nabla F^k(W_t^k)
			\right)
		}_F
		\notag\\
		&\le
		(1-\beta)^t
		\underbrace{
			\mathbb E\norm{
				\frac1K\sum_{k=1}^K
				\left(
				M_0^k-\nabla F^k(W_0^k)
				\right)
			}_F
		}_{\mathcal T_0}
		\notag\\
		&\quad+
		\underbrace{
			\mathbb E\norm{
				\sum_{i=1}^{t}
				(1-\beta)^{t-i+1}
				\frac1K\sum_{k=1}^K
				\left(
				\nabla F^k(W_{i-1}^k)
				-
				\nabla F^k(W_i^k)
				\right)
			}_F
		}_{\mathcal T_1}
		\notag\\
		&\quad+
		\underbrace{
			\mathbb E\norm{
				\sum_{i=1}^{t}
				\beta(1-\beta)^{t-i}
				\frac1K\sum_{k=1}^K
				\nabla g^k\left(
				W_{i-1}^k-\eta U_{i-1}^k(V_{i-1}^k)^\top;\xi_i^k
				\right)
				\left(
				\left(
				\nabla_y f^k(u_i^k)
				-
				\nabla_y f^k(g^k(W_i^k))
				\right)\otimes I_n
				\right)
			}_F
		}_{\mathcal T_2}
		\notag\\
		&\quad+
		\underbrace{
			\begin{aligned}
				&\mathbb E\Bigg\|
				\sum_{i=1}^{t}
				\beta(1-\beta)^{t-i}
				\frac1K\sum_{k=1}^K
				\Bigg[
				\nabla g^k\left(
				W_{i-1}^k-\eta U_{i-1}^k(V_{i-1}^k)^\top;\xi_i^k
				\right)
				\left(
				\left(
				\nabla_y f^k(u_i^k;\zeta_i^k)
				-
				\nabla_y f^k(u_i^k)
				\right)\otimes I_n
				\right)
				\\[-0.2ex]
				&\hspace{1.0cm}+
				\left(
				\nabla g^k\left(
				W_{i-1}^k-\eta U_{i-1}^k(V_{i-1}^k)^\top;\xi_i^k
				\right)
				-\nabla g^k\left(
				W_{i-1}^k-\eta U_{i-1}^k(V_{i-1}^k)^\top
				\right)
				\right)
				\times
				\left(
				\nabla_y f^k(g^k(W_i^k))\otimes I_n
				\right)
				\Bigg]
				\Bigg\|_F
			\end{aligned}
		}_{\mathcal T_3}
		\notag\\
		&\quad+
		\underbrace{
			\begin{aligned}
				&
				\mathbb E\Bigg\|
				\sum_{i=1}^{t}\beta(1-\beta)^{t-i}
				\frac1K\sum_{k=1}^K
				\left(
				\nabla g^k\left(
				W_{i-1}^k-\eta U_{i-1}^k(V_{i-1}^k)^\top
				\right)
				-\nabla g^k(W_i^k)
				\right)
				\times
				\left(
				\nabla_y f^k(g^k(W_i^k))\otimes I_n
				\right)
				\Bigg\|_F
			\end{aligned}
		}_{\mathcal T_4}.
		\label{eq:app-average-gradient-decomposition}
	\end{align}
	
	For $\mathcal T_0$, since
	$K^{-1}\sum_{k=1}^K M_0^k
	=
	K^{-1}\sum_{k=1}^K Z_0^k$, we have
	\begin{align}
		\mathcal T_0
		&=
		\mathbb E\norm{
			\frac1K\sum_{k=1}^K
			\left(
			M_0^k-\nabla F^k(W_0^k)
			\right)
		}_F
		\notag\\
		&=
		\mathbb E\norm{
			\frac1K\sum_{k=1}^K
			\left(
			Z_0^k-\nabla F^k(W_0^k)
			\right)
		}_F
		\notag\\
		&\le
		\frac1K\sum_{k=1}^K
		\mathbb E\norm{
			Z_0^k-\nabla F^k(W_0^k)
		}_F
		\notag\\
		&\le
		C_gL_f
		\frac1K\sum_{k=1}^K
		\sqrt{
			\mathbb E
			\norm{
				u_0^k-g^k(W_0^k)
			}^2
		}
		+
		C_g\sigma_f
		+
		C_f\sigma_{\nabla g}
		\notag\\
		&\le
		C_gL_f
		\left(
		\frac1K\sum_{k=1}^K
		\mathbb E
		\norm{
			u_0^k-g^k(W_0^k)
		}^2
		\right)^{1/2}
		+
		C_g\sigma_f
		+
		C_f\sigma_{\nabla g}
		\notag\\
		&\le
		C_gL_f\sigma_g
		+
		C_g\sigma_f
		+
		C_f\sigma_{\nabla g}.
		\label{eq:app-average-gradient-initial}
	\end{align}
	The second inequality follows from Lemma~\ref{lem:app-oracle-error},
	the third follows from the Cauchy--Schwarz inequality, and the last uses
	$u_0^k=g^k(W_0^k;\xi_0^k)$ and Assumption~\ref{assm:oracle}.
	
	For $\mathcal T_1$, the triangle inequality and
	Lemma~\ref{lem:smooth-compositional} give
	\begin{align}
		\frac1T\sum_{t=0}^{T-1}\mathcal T_1
		&\le
		\frac{L_F}{T}
		\sum_{t=0}^{T-1}\sum_{i=1}^{t}
		(1-\beta)^{t-i+1}
		\frac1K\sum_{k=1}^K
		\mathbb E\|W_i^k-W_{i-1}^k\|_F
		\notag\\
		&=
		\frac{L_F}{T}
		\sum_{i=1}^{T-1}
		\left(
		\sum_{t=i}^{T-1}(1-\beta)^{t-i+1}
		\right)
		\frac1K\sum_{k=1}^K
		\mathbb E\|W_i^k-W_{i-1}^k\|_F
		\notag\\
		&\le
		\frac{L_F}{\beta T}
		\sum_{i=1}^{T-1}\frac1K\sum_{k=1}^K
		\mathbb E\|W_i^k-W_{i-1}^k\|_F
		\notag\\
		&\le
		\frac{\eta L_F}{\beta T}
		\Bigg\{
		\sqrt n\left[
		T-1-\left\lfloor\frac{T-1}{\tau}\right\rfloor
		\right]
		+
		\sum_{\substack{1\le i\le T-1\\
				\operatorname{mod}(i,\tau)=0}}
		\left(
		\frac1K\sum_{k=1}^K
		\mathbb E\norm{
			\sum_{\ell=i-\tau}^{i-2}
			\left(
			U_\ell^k(V_\ell^k)^\top
			-\frac1K\sum_{j=1}^K U_\ell^j(V_\ell^j)^\top
			\right)
		}_F^2
		\right)^{1/2}
		\notag\\
		&\qquad\qquad\qquad+\sqrt n\left\lfloor\frac{T-1}{\tau}\right\rfloor
		\Bigg\}
		\notag\\
		&\le
		\frac{\eta L_F}{\beta T}
		\Bigg\{
		\sqrt n\left[
		T-1-\left\lfloor\frac{T-1}{\tau}\right\rfloor
		\right]
		+
		\sum_{\substack{1\le i\le T-1\\
				\operatorname{mod}(i,\tau)=0}}
		\left(
		\frac1K\sum_{k=1}^K
		\mathbb E\norm{
			\sum_{\ell=i-\tau}^{i-2}
			U_\ell^k(V_\ell^k)^\top
		}_F^2
		\right)^{1/2}
		+\sqrt n\left\lfloor\frac{T-1}{\tau}\right\rfloor
		\Bigg\}
		\notag\\
		&\le
		\frac{\eta\sqrt n\,L_F}{\beta T}
		\left[
		T-1+(\tau-1)
		\left\lfloor\frac{T-1}{\tau}\right\rfloor
		\right]
		\notag\\
		&<
		\frac{2\eta\sqrt n\,L_F}{\beta}.
		\label{eq:app-average-gradient-T1}
	\end{align}
	The equality exchanges the sums, and the second inequality uses
	$\sum_{t=i}^{T-1}(1-\beta)^{t-i+1}\le\beta^{-1}$.
	The next three inequalities use the local and synchronized model updates,
	the variance identity, and Lemma~\ref{lem:muon-properties}. The last
	inequality uses
	$\tau\lfloor(T-1)/\tau\rfloor\le T-1<T$.
	
	For $\mathcal T_2$, we have
	\begin{align}
		\mathcal T_2
		&=
		\mathbb E\norm{
			\sum_{i=1}^{t}
			\beta(1-\beta)^{t-i}
			\frac1K\sum_{k=1}^K
			\nabla g^k\left(
			W_{i-1}^k-\eta U_{i-1}^k(V_{i-1}^k)^\top;\xi_i^k
			\right)
			\left(
			\left(
			\nabla_y f^k(u_i^k)
			-
			\nabla_y f^k(g^k(W_i^k))
			\right)\otimes I_n
			\right)
		}_F
		\notag\\
		&\le
		\sum_{i=1}^{t}
		\beta(1-\beta)^{t-i}
		\frac1K\sum_{k=1}^K
		\mathbb E\Big[
		\norm{\nabla g^k\left(
			W_{i-1}^k-\eta U_{i-1}^k(V_{i-1}^k)^\top;\xi_i^k
			\right)}_F
		\times
		\|\nabla_y f^k(u_i^k)
		-
		\nabla_y f^k(g^k(W_i^k))\|
		\Big]
		\notag\\
		&\le
		L_f
		\sum_{i=1}^{t}
		\beta(1-\beta)^{t-i}
		\frac1K\sum_{k=1}^K
		\mathbb E\Big[
		\norm{\nabla g^k\left(
			W_{i-1}^k-\eta U_{i-1}^k(V_{i-1}^k)^\top;\xi_i^k
			\right)}_F
		\|u_i^k-g^k(W_i^k)\|
		\Big]
		\notag\\
		&\le
		C_gL_f
		\sum_{i=1}^{t}
		\beta(1-\beta)^{t-i}
		\frac1K\sum_{k=1}^K
		\sqrt{
			\mathbb E
			\|u_i^k-g^k(W_i^k)\|^2
		}.
		\label{eq:app-average-gradient-T2}
	\end{align}
	The third step follows from the $L_f$-smoothness of $f^k$, and the
	last step follows from Assumption~\ref{assm:bounded-grad} and the
	Cauchy--Schwarz inequality.
	
	For $\mathcal T_3$, we have
	\begin{align}
		\mathcal T_3^2
		&=
		\Bigg(
		\mathbb E\Bigg\|
		\sum_{i=1}^{t}
		\beta(1-\beta)^{t-i}
		\frac1K\sum_{k=1}^K
		\Bigg[
		\nabla g^k\left(
		W_{i-1}^k-\eta U_{i-1}^k(V_{i-1}^k)^\top;\xi_i^k
		\right)
		\left(
		\left(
		\nabla_y f^k(u_i^k;\zeta_i^k)
		-
		\nabla_y f^k(u_i^k)
		\right)\otimes I_n
		\right)
		\notag\\
		&\hspace{4.5cm}+
		\left(
		\begin{aligned}
			&
			\nabla g^k\left(
			W_{i-1}^k-\eta U_{i-1}^k(V_{i-1}^k)^\top;\xi_i^k
			\right)
			-
			\\[-0.2ex]
			&\quad
			\nabla g^k\left(
			W_{i-1}^k-\eta U_{i-1}^k(V_{i-1}^k)^\top
			\right)
		\end{aligned}
		\right)
		\left(
		\nabla_y f^k(g^k(W_i^k))\otimes I_n
		\right)
		\Bigg]
		\Bigg\|_F
		\Bigg)^2
		\notag\\
		&\le
		\mathbb E\Bigg\|
		\sum_{i=1}^{t}
		\beta(1-\beta)^{t-i}
		\frac1K\sum_{k=1}^K
		\Bigg[
		\nabla g^k\left(
		W_{i-1}^k-\eta U_{i-1}^k(V_{i-1}^k)^\top;\xi_i^k
		\right)
		\left(
		\left(
		\nabla_y f^k(u_i^k;\zeta_i^k)
		-
		\nabla_y f^k(u_i^k)
		\right)\otimes I_n
		\right)
		\notag\\
		&\hspace{4.5cm}+
		\left(
		\begin{aligned}
			&
			\nabla g^k\left(
			W_{i-1}^k-\eta U_{i-1}^k(V_{i-1}^k)^\top;\xi_i^k
			\right)
			-
			\\[-0.2ex]
			&\quad
			\nabla g^k\left(
			W_{i-1}^k-\eta U_{i-1}^k(V_{i-1}^k)^\top
			\right)
		\end{aligned}
		\right)
		\left(
		\nabla_y f^k(g^k(W_i^k))\otimes I_n
		\right)
		\Bigg]
		\Bigg\|_F^2
		\notag\\
		&=
		\sum_{i=1}^{t}
		\beta^2(1-\beta)^{2(t-i)}
		\mathbb E\Bigg\|
		\frac1K\sum_{k=1}^K
		\Bigg[
		\nabla g^k\left(
		W_{i-1}^k-\eta U_{i-1}^k(V_{i-1}^k)^\top;\xi_i^k
		\right)
		\left(
		\left(
		\nabla_y f^k(u_i^k;\zeta_i^k)
		-
		\nabla_y f^k(u_i^k)
		\right)\otimes I_n
		\right)
		\notag\\
		&\hspace{4.5cm}+
		\left(
		\begin{aligned}
			&
			\nabla g^k\left(
			W_{i-1}^k-\eta U_{i-1}^k(V_{i-1}^k)^\top;\xi_i^k
			\right)
			-
			\\[-0.2ex]
			&\quad
			\nabla g^k\left(
			W_{i-1}^k-\eta U_{i-1}^k(V_{i-1}^k)^\top
			\right)
		\end{aligned}
		\right)
		\left(
		\nabla_y f^k(g^k(W_i^k))\otimes I_n
		\right)
		\Bigg]
		\Bigg\|_F^2
		\notag\\
		&\le
		\frac{
			C_g^2\sigma_f^2
			+
			C_f^2\sigma_{\nabla g}^2
		}{K}
		\sum_{i=1}^{t}
		\beta^2(1-\beta)^{2(t-i)}
		\notag\\
		&\le
		\frac{
			C_g^2\sigma_f^2
			+
			C_f^2\sigma_{\nabla g}^2
		}{K}
		\frac{\beta^2}{1-(1-\beta)^2}
		\notag\\
		&=
		\frac{
			C_g^2\sigma_f^2
			+
			C_f^2\sigma_{\nabla g}^2
		}{K}
		\frac{\beta}{2-\beta}
		\notag\\
		&\le
		\frac{
			\left(
			C_g^2\sigma_f^2
			+
			C_f^2\sigma_{\nabla g}^2
			\right)\beta
		}{K}.
		\label{eq:app-average-gradient-T3-square}
	\end{align}
	Therefore,
	\begin{align}
		\mathcal T_3
		&\le
		\frac{
			\sqrt{
				C_g^2\sigma_f^2
				+
				C_f^2\sigma_{\nabla g}^2
			}
			\sqrt\beta
		}{\sqrt K}.
		\label{eq:app-average-gradient-T3}
	\end{align}
	The first inequality follows from Jensen's inequality, the second
	equality follows from the martingale difference property, and the first
	inequality after that follows from
	\eqref{eq:app-correct-client-average-noise}. The last inequality uses
	$0<\beta<1$.
	
	For $\mathcal T_4$, the synchronization rule, the variance identity,
	and Lemma~\ref{lem:muon-properties} give
	\begin{align}
		&\frac1K\sum_{k=1}^K
		\mathbb E\norm{
			W_{i-1}^k-\eta U_{i-1}^k(V_{i-1}^k)^\top-W_i^k
		}_F
		\le
		\eta\tau\sqrt n\,
		\mathbf 1_{\{\operatorname{mod}(i,\tau)=0\}}.
		\label{eq:app-average-gradient-synchronization-step}
	\end{align}
	Therefore, Assumption~\ref{assm:smooth} and
	$\|\nabla_y f^k(y)\|\le C_f$ imply
	\begin{align}
		\frac1T\sum_{t=0}^{T-1}\mathcal T_4
		&\le
		\frac{C_fL_g}{T}
		\sum_{t=0}^{T-1}\sum_{i=1}^{t}
		\beta(1-\beta)^{t-i}
		\frac1K\sum_{k=1}^K
		\mathbb E\norm{
			W_{i-1}^k-\eta U_{i-1}^k(V_{i-1}^k)^\top-W_i^k
		}_F
		\notag\\
		&\le
		\frac{C_fL_g}{T}
		\sum_{i=1}^{T-1}
		\frac1K\sum_{k=1}^K
		\mathbb E\norm{
			W_{i-1}^k-\eta U_{i-1}^k(V_{i-1}^k)^\top-W_i^k
		}_F
		\notag\\
		&\le
		\eta\sqrt n\,C_fL_g
		\le
		\frac{\eta\sqrt n\,L_F}{\beta}.
		\label{eq:app-average-gradient-T4}
	\end{align}
	
	Combining
	\eqref{eq:app-average-gradient-decomposition},
	\eqref{eq:app-average-gradient-initial},
	\eqref{eq:app-average-gradient-T1},
	\eqref{eq:app-average-gradient-T2}, and
	\eqref{eq:app-average-gradient-T3}--\eqref{eq:app-average-gradient-T4},
	and averaging over
	$t=0,\ldots,T-1$, we obtain
	\begin{align}
		&\frac1T\sum_{t=0}^{T-1}
		\mathbb E\norm{
			\frac1K\sum_{k=1}^K
			\left(
			M_t^k-\nabla F^k(W_t^k)
			\right)
		}_F
		\notag\\
		&\le
		\frac1{\beta T}
		\Bigg[
		C_gL_f\sigma_g
		+
		C_g\sigma_f
		+
		C_f\sigma_{\nabla g}
		\Bigg]
		+
		\frac{3\eta\sqrt n\,L_F}{\beta}
		\notag\\
		&\quad+
		\frac{C_gL_f}{T}
		\sum_{t=0}^{T-1}\frac1K\sum_{k=1}^K
		\sqrt{
			\mathbb E\|u_t^k-g^k(W_t^k)\|^2
		}
		+
		\frac{
			\sqrt{C_g^2\sigma_f^2+C_f^2\sigma_{\nabla g}^2}
			\,\sqrt\beta
		}{\sqrt K}
		\notag\\
		&\le
		\frac1{\beta T}
		\Bigg[
		C_gL_f\sigma_g
		+
		C_g\sigma_f
		+
		C_f\sigma_{\nabla g}
		\Bigg]
		+
		\frac{3\eta\sqrt n\,L_F}{\beta}
		\notag\\
		&\quad+
		C_gL_f
		\left(
		\frac1T\sum_{t=0}^{T-1}\frac1K\sum_{k=1}^K
		\mathbb E\|u_t^k-g^k(W_t^k)\|^2
		\right)^{1/2}
		+
		\frac{
			\sqrt{C_g^2\sigma_f^2+C_f^2\sigma_{\nabla g}^2}
			\,\sqrt\beta
		}{\sqrt K}.
	\end{align}
	The first inequality uses
	$\sum_{t=0}^{T-1}(1-\beta)^t\le\beta^{-1}$ and
	$\sum_{t=i}^{T-1}\beta(1-\beta)^{t-i}\le1$, and the second follows
	from the Cauchy--Schwarz inequality. This proves
	\eqref{eq:app-average-gradient-error}.
\end{proof}

\begin{theorem}[Convergence of FedCoMuon]
	\label{thm:app-fedcomuon-convergence}
	Under Assumptions~\ref{assm:lower}--\ref{assm:hetero}, let
	$\{W_t^k,u_t^k,M_t^k\}$ be generated by Algorithm~\ref{alg:fedcomuon}.
	Under the synchronization convention stated above, for any integer $T\ge1$,
	$\eta>0$, $0<\alpha,\beta<1$, and $\tau>0$ satisfying
	$\alpha\tau\le1$, we have
	\begin{align}
		&\frac1T\sum_{t=0}^{T-1}
		\mathbb E\|\nabla F(\bar W_t)\|_F \notag\\
		&\le
		\frac{F(\bar W_0)-F_*}{\eta T}
		+
		\frac{2\sqrt n C_gL_f\sigma_g}{\beta T}
		+
		2\sqrt n
		\left(\frac{1}{\beta T}+2\beta\tau\right)
		\left(C_g\sigma_f+C_f\sigma_{\nabla g}\right)
		\notag\\
		&\quad
		+
		2\sqrt n C_gL_f(1+2\beta\tau)
		\left(
		\frac{4\sigma_g^2}{\alpha T}
		+\frac{86C_g^2n}{\alpha^2}\eta^2
		+86\alpha\sigma_g^2
		\right)^{1/2}
		\notag\\
		&\quad
		+\eta nL_F
		\left(
		\frac12+4\tau+\frac{6}{\beta}+8\beta\tau^2
		\right)
		+
		\frac{2\sqrt{n\beta}}{\sqrt K}
		\sqrt{C_g^2\sigma_f^2+C_f^2\sigma_{\nabla g}^2}
		\notag\\
		&\quad
		+
		2\sqrt n\,\beta\tau\delta.
		\label{eq:app-fedcomuon-bound-expanded}
	\end{align}
\end{theorem}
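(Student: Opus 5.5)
The plan is to telescope the one-step descent inequality of Lemma~\ref{lem:app-descent} and then substitute the three error lemmas already established. Take expectations in \eqref{eq:app-descent}, sum over $t=0,\dots,T-1$, and divide by $\eta T$. Using $F(\bar W_T)\ge F_*$ from Assumption~\ref{assm:lower}, this gives
\begin{align*}
\frac1T\sum_{t=0}^{T-1}\mathbb E\|\nabla F(\bar W_t)\|_F
&\le \frac{F(\bar W_0)-F_*}{\eta T}+\frac{\eta nL_F}{2}
+\frac{2\sqrt n}{T}\sum_{t=0}^{T-1}\frac1K\sum_{k=1}^K\mathbb E\|M_t^k-\bar M_t\|_F\\
&\quad+\frac{2\sqrt n L_F}{T}\sum_{t=0}^{T-1}\frac1K\sum_{k=1}^K\mathbb E\|W_t^k-\bar W_t\|_F
+\frac{2\sqrt n}{T}\sum_{t=0}^{T-1}\mathbb E\Bigl\|\frac1K\sum_{k=1}^K\bigl(\nabla F^k(W_t^k)-M_t^k\bigr)\Bigr\|_F .
\end{align*}
The three averaged error terms on the right are then bounded one at a time.

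For the model drift, apply \eqref{eq:app-model-drift}, which bounds it by $2\eta\tau\sqrt n$; this contributes $4\eta nL_F\tau$. For the momentum disagreement, apply \eqref{eq:app-momentum-drift}. After multiplying by $2\sqrt n$, it contributes $4\sqrt n\beta\tau(C_g\sigma_f+C_f\sigma_{\nabla g})$, the term $2\sqrt n\beta\tau\delta$, the term $8\eta n\beta\tau^2L_F$, and $4\sqrt n\beta\tau C_gL_f\,(\cdot)^{1/2}$. For the averaged gradient error, apply Lemma~\ref{lem:app-average-gradient-error}. Its contributions are $\frac{2\sqrt n}{\beta T}(C_gL_f\sigma_g+C_g\sigma_f+C_f\sigma_{\nabla g})$, the term $6\eta nL_F/\beta$, the term $2\sqrt n C_gL_f(\cdot)^{1/2}$, and the term $\frac{2\sqrt{n\beta}}{\sqrt K}\sqrt{C_g^2\sigma_f^2+C_f^2\sigma_{\nabla g}^2}$.

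The two square-root tracking terms combine into $2\sqrt nC_gL_f(1+2\beta\tau)$ times the root of the averaged tracking error. That error is bounded by Lemma~\ref{lem:app-inner-tracking}, which requires $\alpha\tau\le1$ and is where that hypothesis is used. The initial error satisfies $\mathbb E\|u_0^k-g^k(W_0^k)\|^2\le\sigma_g^2$, because $u_0^k=g^k(W_0^k;\xi_0^k)$ and Assumption~\ref{assm:oracle} applies. This produces the term $\frac{4\sigma_g^2}{\alpha T}$. Collecting the $\eta nL_F$ coefficients gives $\frac12+4\tau+\frac6\beta+8\beta\tau^2$, which matches \eqref{eq:app-fedcomuon-bound-expanded}.

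The only real obstacle is bookkeeping, because all substantive estimates are already in the lemmas. Two points need care. First, the lemmas are stated with expectations and time averages in slightly different forms, so Jensen's inequality is needed to move $\mathbb E$ inside square roots consistently. Second, synchronization steps reset $W_t^k$ and $M_t^k$ to their averages. The descent lemma must be applied to $\bar W_t$, whose evolution is unaffected by averaging, so telescoping over $t$ is valid across communication rounds.
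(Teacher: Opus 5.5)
Your proposal is correct and follows the paper's proof. Like the paper, you sum Lemma~\ref{lem:app-descent}, use $F(\bar W_T)\ge F_*$, substitute Lemma~\ref{lem:app-drift}, Lemma~\ref{lem:app-average-gradient-error} and Lemma~\ref{lem:app-inner-tracking} (where $\alpha\tau\le1$ enters), and bound the initial tracking error by $\sigma_g^2$ via $u_0^k=g^k(W_0^k;\xi_0^k)$; your coefficients $\frac12+4\tau+\frac6\beta+8\beta\tau^2$ and $(1+2\beta\tau)$ match the paper's.
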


\begin{proof}
	Summing Lemma~\ref{lem:app-descent} over $t=0,\ldots,T-1$ and using
	$F(\bar W_T)\ge F_*$ gives the first inequality below. Applying
	Lemmas~\ref{lem:app-inner-tracking}, \ref{lem:app-drift}, and
	\ref{lem:app-average-gradient-error} gives the second.
	\begin{align*}
		&\frac1T\sum_{t=0}^{T-1}
		\mathbb E\|\nabla F(\bar W_t)\|_F \notag\\
		&\le
		\frac{F(\bar W_0)-F_*}{\eta T}
		+\frac{\eta nL_F}{2}
		+2\sqrt n
		\frac1T\sum_{t=0}^{T-1}\frac1K\sum_{k=1}^K
		\mathbb E\|M_t^k-\bar M_t\|_F 
		+
		2\sqrt n\,L_F
		\frac1T\sum_{t=0}^{T-1}\frac1K\sum_{k=1}^K
		\mathbb E\|W_t^k-\bar W_t\|_F \notag\\
		&\quad+
		2\sqrt n
		\frac1T\sum_{t=0}^{T-1}
		\mathbb E\norm{
			\frac1K\sum_{k=1}^K
			\bigl(\nabla F^k(W_t^k)-M_t^k\bigr)
		}_F \notag\\
		&\le
		\frac{F(\bar W_0)-F_*}{\eta T}
		+\frac{\eta nL_F}{2}
		+4\eta\tau nL_F 
		+
		2\sqrt n\,\beta\tau
		\Bigg[
		2C_gL_f
		\left(
		\frac{4}{\alpha T}\frac1K\sum_{k=1}^K
		\mathbb E\|u_0^k-g^k(W_0^k)\|^2
		+\frac{86C_g^2n}{\alpha^2}\eta^2
		+86\alpha\sigma_g^2
		\right)^{1/2}
		\notag\\
		&\qquad\qquad\qquad
		+2C_g\sigma_f+2C_f\sigma_{\nabla g}
		+4\eta\tau\sqrt n\,L_F+\delta
		\Bigg] \notag\\
		&\quad+
		2\sqrt n
		\Bigg[
		\frac{1}{\beta T}
		\Bigg(
		C_gL_f\sigma_g+C_g\sigma_f+C_f\sigma_{\nabla g}
		\Bigg)
		+\frac{3\eta\sqrt n\,L_F}{\beta}
		+\frac{\sqrt{C_g^2\sigma_f^2+C_f^2\sigma_{\nabla g}^2}\,\sqrt\beta}{\sqrt K}
		\notag\\
		&\quad
		+C_gL_f
		\left(
		\frac{4}{\alpha T}\frac1K\sum_{k=1}^K
		\mathbb E\|u_0^k-g^k(W_0^k)\|^2
		+\frac{86C_g^2n}{\alpha^2}\eta^2
		+86\alpha\sigma_g^2
		\right)^{1/2}
		\Bigg] \notag\\
		&\le
		\frac{F(\bar W_0)-F_*}{\eta T}
		+
		\frac{2\sqrt n C_gL_f\sigma_g}{\beta T}
		+
		2\sqrt n
		\left(\frac{1}{\beta T}+2\beta\tau\right)
		\left(C_g\sigma_f+C_f\sigma_{\nabla g}\right)
		\notag\\
		&\quad
		+2\sqrt n C_gL_f(1+2\beta\tau)
		\left(
		\frac{4\sigma_g^2}{\alpha T}
		+\frac{86C_g^2n}{\alpha^2}\eta^2
		+86\alpha\sigma_g^2
		\right)^{1/2}
		\notag\\
		&\quad+
		\eta nL_F
		\left(
		\frac12+4\tau+\frac{6}{\beta}+8\beta\tau^2
		\right)
		+
		\frac{2\sqrt{n\beta}}{\sqrt K}
		\sqrt{C_g^2\sigma_f^2+C_f^2\sigma_{\nabla g}^2}
		+
		2\sqrt n\,\beta\tau\delta
		.
	\end{align*}
	The last inequality uses the initialization
	$u_0^k=g^k(W_0^k;\xi_0^k)$ and Assumption~\ref{assm:oracle}, which imply
	$K^{-1}\sum_{k=1}^K\mathbb E\|u_0^k-g^k(W_0^k)\|^2\le\sigma_g^2$.
\end{proof}

For $\eta=T^{-3/4}$, $\alpha=\beta=T^{-1/2}$, and
$\tau=T^{1/4}$, we have $\alpha\tau=T^{-1/4}\le1$. The non-square-root
terms in Theorem~\ref{thm:app-fedcomuon-convergence} are at most
$O(T^{-1/4})$, while every term inside the square root is at most
$O(T^{-1/2})$. Hence,
$T^{-1}\sum_{t=0}^{T-1}\mathbb E\|\nabla F(\bar W_t)\|_F
=O(T^{-1/4})$.

\setcounter{theorem}{0}

\section{Convergence Analysis of our FedCoMuon-VR Algorithm}
\label{app:fedcomuon-vr-proof}

In this section, we provide the detailed convergence analysis of FedCoMuon-VR.
We write $\bar W_t=K^{-1}\sum_{k=1}^K W_t^k$ and
$\bar M_t=K^{-1}\sum_{k=1}^K M_t^k$.
For vectors, $\|\cdot\|$ denotes the Euclidean norm. For matrices,
$\|\cdot\|_F$, $\|\cdot\|$, and $\|\cdot\|_*$ denote the Frobenius,
spectral, and nuclear norms, respectively. The local data distributions may
be non-i.i.d. across clients.

\begin{lemma}[Local model drift]\label{lem:drift}
	For every $t=0,\ldots,T$,
	\begin{equation}
		\frac1K\sum_{k=1}^K\mathbb E\|W_t^k-\bar W_t\|_F^2
		\le n\eta^2\tau^2.
		\label{eq:app-vr-local-model-drift-bound}
	\end{equation}
	Moreover,
	\begin{equation}
		\frac1{KT}\sum_{i=0}^{T-1}\sum_{k=1}^K
		\mathbb E\|W_{i+1}^k-W_i^k\|_F^2
		\le 2n\eta^2\tau.
		\label{eq:app-vr-drift-bounds}
	\end{equation}
\end{lemma}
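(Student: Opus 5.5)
The plan is to argue deterministically, pathwise, and then take expectations; this mirrors the first part of Lemma~\ref{lem:app-drift}, but with squared norms. Let $s_t=\tau\lfloor t/\tau\rfloor$ be the last synchronization time, so that $W_{s_t}^k=\bar W_{s_t}$ for all $k$. For $t\ge s_t$, write $D_i^k:=U_i^k(V_i^k)^\top$ and $\bar D_i:=K^{-1}\sum_j D_i^j$. Unrolling the local update gives
\begin{equation*}
W_t^k-\bar W_t=-\eta\sum_{i=s_t}^{t-1}\bigl(D_i^k-\bar D_i\bigr).
\end{equation*}
The model update in Algorithm~\ref{alg:fedcomuon-vr} does not depend on the sampling steps, so this identity holds exactly.

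For \eqref{eq:app-vr-local-model-drift-bound}, I would use the variance identity $\frac1K\sum_k\|X^k-\bar X\|_F^2\le\frac1K\sum_k\|X^k\|_F^2$ with $X^k=\sum_{i=s_t}^{t-1}D_i^k$. Then I apply Cauchy--Schwarz/Jensen to the sum of at most $t-s_t\le\tau$ terms, giving $\|X^k\|_F^2\le (t-s_t)\sum_i\|D_i^k\|_F^2$. Finally, Lemma~\ref{lem:muon-properties} gives $\|D_i^k\|_F^2\le n$. Combining these yields the bound $\eta^2(t-s_t)^2 n\le n\eta^2\tau^2$. One subtlety is the boundary case $t=s_t+\tau$ before averaging. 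With the convention that $W_{t}$ at multiples of $\tau$ denotes the synchronized iterate, the drift there is zero. Otherwise $t-s_t\le\tau$ still holds, so the bound is unaffected.

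For \eqref{eq:app-vr-drift-bounds}, I would split the steps by type. At a non-synchronization step, $W_{i+1}^k-W_i^k=-\eta D_i^k$, so $\|W_{i+1}^k-W_i^k\|_F^2\le n\eta^2$. At a synchronization step ($\operatorname{mod}(i+1,\tau)=0$), $W_{i+1}^k=\bar W_{i+1}$. I write
\begin{equation*}
W_{i+1}^k-W_i^k=(\bar W_{i+1}-\tilde W_{i+1}^k)-\eta D_i^k,
\end{equation*}
where $\tilde W_{i+1}^k$ is the pre-averaging local iterate. Using $\|a+b\|^2\le2\|a\|^2+2\|b\|^2$ together with the first part applied to the pre-averaging iterate (drift at most $n\eta^2\tau^2$ on average over $k$), I get an average contribution of at most $2n\eta^2\tau^2+2n\eta^2$.

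There are at most $\lceil T/\tau\rceil$ synchronization steps among $i=0,\dots,T-1$. Summing and dividing by $T$ gives roughly
\begin{equation*}
n\eta^2+\frac{1}{\tau}\bigl(2n\eta^2\tau^2+2n\eta^2\bigr)\lesssim 2n\eta^2\tau,
\end{equation*}
up to constant bookkeeping. The main obstacle is this bookkeeping: making the constant exactly $2$ needs some care. If it is needed, I would instead bound the synchronization jump more sharply. By the unrolled identity, $\tilde W_{i+1}^k-\bar W_{i+1}=-\eta\sum(D^k-\bar D)$ over the block, so the total block displacement of client $k$ relative to its start is $\eta\sum \bar D$-type terms. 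I would then bound the per-block sum of squared steps directly, giving at most $(\tau-1)n\eta^2+n\eta^2\tau^2\le 2n\eta^2\tau^2$ per block of length $\tau$. Dividing by the $\tau$ steps per block yields $2n\eta^2\tau$.
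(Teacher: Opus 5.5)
Your proof of the first inequality is correct and is the paper's argument: variance identity, then at most $t-s_t\le\tau-1$ terms, each of squared Frobenius norm at most $n$. The gap is in the second inequality. Your main route does not reach the constant $2$. Splitting the synchronization jump as $(\bar W_{i+1}-\tilde W_{i+1}^k)-\eta D_i^k$ charges a full $\tau$ local steps of drift. Each block then costs $(\tau-1)n\eta^2+2n\eta^2\tau^2+2n\eta^2$, and your bound on the average is $n\eta^2(2\tau+1+1/\tau)$, which exceeds $2n\eta^2\tau$. At $\tau=1$ it is already $4n\eta^2$ instead of $2n\eta^2$. Counting $\lceil T/\tau\rceil$ synchronizations instead of the exact $\lfloor T/\tau\rfloor$ loses more: for $T<\tau$ there are none, yet you would charge one. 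Your fallback asserts that the synchronization step costs at most $n\eta^2\tau^2$, but the stated reason does not give this. The total block displacement $-\eta\sum_{\ell}\bar D_\ell$ is not the synchronization step. Writing the step as that displacement minus the $\tau-1$ local steps and using the triangle inequality gives only $(2\tau-1)^2n\eta^2$.

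The missing idea is to measure the jump from the average at time $i$, not from the pre-averaging iterate. Since $\bar W_{i+1}=\bar W_i-\eta\bar D_i$,
\[
W_{i+1}^k-W_i^k=(\bar W_i-W_i^k)-\eta\bar D_i .
\]
Because $i-s(i)=\tau-1$ at such a step, your first part gives $\frac1K\sum_k\|W_i^k-\bar W_i\|_F^2\le n\eta^2(\tau-1)^2$. The paper applies $\|a+b\|^2\le2\|a\|^2+2\|b\|^2$ to get $2n\eta^2\bigl((\tau-1)^2+1\bigr)$ per synchronization step. With $\lfloor T/\tau\rfloor/T\le1/\tau$, this yields $n\eta^2(2\tau-3+3/\tau)\le2n\eta^2\tau$. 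Alternatively, $\eta\bar D_i$ is the same for all clients and $\sum_k(W_i^k-\bar W_i)=0$, so the cross term vanishes exactly when you average over $k$. That gives $n\eta^2\bigl((\tau-1)^2+1\bigr)\le n\eta^2\tau^2$, which is what would justify your fallback's per-block count. It even gives the sharper average bound $n\eta^2(\tau-1+1/\tau)\le n\eta^2\tau$.
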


\begin{proof}
	Let $s(t)=\tau\lfloor t/\tau\rfloor$ be the latest communication index
	not larger than $t$. Since
	$\|U_\ell^k(V_\ell^k)^\top\|_F^2
	=\operatorname{rank}(U_\ell^k(V_\ell^k)^\top)\le n$ for every update index
	$\ell$,
	$W_{s(t)}^k=\bar W_{s(t)}$, and $0\le t-s(t)\le\tau-1$, we have
	\begin{align}
		\frac1K\sum_{k=1}^K\mathbb E\|W_t^k-\bar W_t\|_F^2
		&=
		\frac1K\sum_{k=1}^K
		\mathbb E
		\norm{
			\eta\sum_{\ell=s(t)}^{t-1}U_\ell^k(V_\ell^k)^\top
			-
			\eta\sum_{\ell=s(t)}^{t-1}\frac1K\sum_{j=1}^K
			U_\ell^j(V_\ell^j)^\top
		}_F^2
		\notag\\
		&\le
		\frac1K\sum_{k=1}^K
		\mathbb E
		\norm{
			\eta\sum_{\ell=s(t)}^{t-1}U_\ell^k(V_\ell^k)^\top
		}_F^2
		\notag\\
		&\le
		\frac1K\sum_{k=1}^K
		\mathbb E
		\left(\sum_{\ell=s(t)}^{t-1}
		\eta\|U_\ell^k(V_\ell^k)^\top\|_F\right)^2
		\notag\\
		&\le n\eta^2(t-s(t))^2
		\le n\eta^2\tau^2.
		\label{eq:app-vr-local-drift-proof}
	\end{align}
	Moreover, averaging the local update gives
	$\bar W_{i+1}=\bar W_i-\eta K^{-1}\sum_{j=1}^K
	U_i^j(V_i^j)^\top$ at every iteration. If
	$\operatorname{mod}(i+1,\tau)\ne0$, then
	\begin{align}
		\frac1K\sum_{k=1}^K
		\mathbb E\|W_{i+1}^k-W_i^k\|_F^2
		&=\frac{\eta^2}{K}\sum_{k=1}^K
		\mathbb E\|U_i^k(V_i^k)^\top\|_F^2
		\le n\eta^2.
		\label{eq:app-vr-noncommunication-step}
	\end{align}
	If $\operatorname{mod}(i+1,\tau)=0$, synchronization and the averaged
	update give
	\begin{align}
		&\frac1K\sum_{k=1}^K
		\mathbb E\|W_{i+1}^k-W_i^k\|_F^2
		\notag\\
		&=
		\frac1K\sum_{k=1}^K
		\mathbb E\norm{
			\bar W_i-W_i^k
			-\eta\frac1K\sum_{j=1}^K U_i^j(V_i^j)^\top
		}_F^2
		\notag\\
		&\le
		\frac2K\sum_{k=1}^K
		\mathbb E\|W_i^k-\bar W_i\|_F^2
		+2\eta^2\mathbb E\norm{
			\frac1K\sum_{j=1}^K U_i^j(V_i^j)^\top
		}_F^2
		\notag\\
		&\le
		2n\eta^2(\tau-1)^2
		+\frac{2\eta^2}{K}\sum_{j=1}^K
		\mathbb E\|U_i^j(V_i^j)^\top\|_F^2
		\notag\\
		&\le2n\eta^2\bigl((\tau-1)^2+1\bigr).
		\label{eq:app-vr-communication-step}
	\end{align}
	Consequently,
	\begin{align}
		&\frac1{KT}\sum_{i=0}^{T-1}\sum_{k=1}^K
		\mathbb E\|W_{i+1}^k-W_i^k\|_F^2
		\notag\\
		&\le
		\frac{T-\lfloor T/\tau\rfloor}{T}n\eta^2
		+\frac{\lfloor T/\tau\rfloor}{T}
		2n\eta^2\bigl((\tau-1)^2+1\bigr)
		\notag\\
		&\le
		n\eta^2\left(2\tau-3+\frac3\tau\right)
		\le2n\eta^2\tau,
		\label{eq:app-vr-step-average-proof}
	\end{align}
	where we used the synchronization rule,
	$\|U_t^k(V_t^k)^\top\|_F^2\le n$, and the number of communication steps.
\end{proof}

\begin{lemma}[Tracker recursions]\label{lem:trackers}
	If $0<\alpha,\beta,\gamma<1$ and $b\ge1$, then
	\begin{equation}
		\frac1{KT}\sum_{t=0}^{T-1}\sum_{k=1}^K
		\mathbb E\|u_t^k-g^k(W_t^k)\|^2
		\le
		\frac{\sigma_g^2}{\alpha Tb}
		+2\sigma_g^2\alpha
		+\frac{4C_g^2n\eta^2\tau}{\alpha}.
		\label{eq:app-vr-u-tracker-bound}
	\end{equation}
	\begin{equation}
		\frac1{KT}\sum_{t=0}^{T-1}\sum_{k=1}^K
		\mathbb E\|H_t^k-\nabla g^k(W_t^k)\|_F^2
		\le
		\frac{\sigma_{\nabla g}^2}{\gamma Tb}
		+2\sigma_{\nabla g}^2\gamma
		+\frac{4L_g^2n\eta^2\tau}{\gamma}.
		\label{eq:app-vr-H-tracker-bound}
	\end{equation}
	\begin{equation}
		\begin{aligned}
			\frac1{KT}\sum_{t=0}^{T-1}\sum_{k=1}^K
			\mathbb E\|v_t^k-\nabla_y f^k(g^k(W_t^k))\|^2&\le
			\frac{2\sigma_f^2}{\beta Tb}
			+\frac{2L_f^2\sigma_g^2}{\alpha Tb}
			+\frac{16L_f^2\sigma_g^2\alpha}{\beta Tb}
			+4L_f^2\sigma_g^2\alpha
			+\frac{16L_f^2\sigma_g^2\alpha^2}{\beta}
			+\frac{32L_f^2\sigma_g^2\alpha^3}{\beta}
			\\
			&\qquad
			+4\sigma_f^2\beta
			+\frac{8L_f^2C_g^2n\eta^2\tau}{\alpha}
			+\frac{16L_f^2C_g^2n\eta^2\tau}{\beta}
			+\frac{64L_f^2C_g^2n\eta^2\alpha\tau}{\beta}.
		\end{aligned}
		\label{eq:app-vr-tracker-bounds}
	\end{equation}
\end{lemma}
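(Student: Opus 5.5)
The plan is to handle the three trackers with the standard STORM-type one-step recursion, and then average over $t$ using the per-step movement bound \eqref{eq:app-vr-drift-bounds} from Lemma~\ref{lem:drift}.

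For $u$, write $e_t^k=u_t^k-g^k(W_t^k)$. From the update,
\[
e_{t+1}^k=(1-\alpha)e_t^k+\alpha\bigl(g^k(W_{t+1}^k;\xi_{t+1}^k)-g^k(W_{t+1}^k)\bigr)+(1-\alpha)D_{t+1}^k,
\]
where $D_{t+1}^k=g^k(W_{t+1}^k;\xi_{t+1}^k)-g^k(W_t^k;\xi_{t+1}^k)-\bigl(g^k(W_{t+1}^k)-g^k(W_t^k)\bigr)$. Both the noise term and $D_{t+1}^k$ have zero conditional mean given the history, because $W_{t+1}^k$ (including any synchronization) is determined before $\xi_{t+1}^k$ is drawn. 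The cross term with $e_t^k$ therefore vanishes. Applying $(a+b)^2\le2a^2+2b^2$ to the two noise pieces, together with Assumption~\ref{assm:vr-sample-smooth} and variance $\le$ second moment, gives
\[
\mathbb E\|e_{t+1}^k\|^2\le(1-\alpha)^2\mathbb E\|e_t^k\|^2+2\alpha^2\sigma_g^2+2C_g^2\mathbb E\|W_{t+1}^k-W_t^k\|_F^2.
\]
Summing over $t$ and using $1-(1-\alpha)^2\ge\alpha$ yields
\[
\sum_t\mathbb E\|e_t\|^2\le\frac{\mathbb E\|e_0\|^2}{\alpha}+2\alpha\sigma_g^2T+\frac{2C_g^2}{\alpha}\sum_t\mathbb E\|\Delta W_t\|^2.
\]
The minibatch initialization gives $\mathbb E\|e_0\|^2\le\sigma_g^2/b$. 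Plugging in \eqref{eq:app-vr-drift-bounds} and averaging over $k$ then gives \eqref{eq:app-vr-u-tracker-bound}.

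The $H$ bound \eqref{eq:app-vr-H-tracker-bound} follows identically with $L_g$ in place of $C_g$, plus one extra step for the projection. Since $\|\nabla g^k(W)\|_F\le C_g$ (by Jensen, as in Lemma~\ref{lem:smooth-compositional}), $\nabla g^k(W_{t+1}^k)\in\mathcal B_g$. Nonexpansiveness of $\Pi_{C_g}$ onto this convex set then gives
\[
\|H_{t+1}^k-\nabla g^k(W_{t+1}^k)\|_F\le\|\tilde H_{t+1}^k-\nabla g^k(W_{t+1}^k)\|_F,
\]
where $\tilde H$ is the unprojected update, and the recursion above applies to $\tilde H$.

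For $v$, the target $\nabla_y f^k(g^k(W_t^k))$ moves through $u$, which makes this the main obstacle. I would split
\[
v_t^k-\nabla_y f^k(g^k(W_t^k))=\bigl(v_t^k-\nabla_y f^k(u_t^k)\bigr)+\bigl(\nabla_y f^k(u_t^k)-\nabla_y f^k(g^k(W_t^k))\bigr).
\]
The second piece is bounded by $L_f^2\|e_t\|^2$, and the $u$-bound handles it; this, with the factor 2 from the split, should produce the $2L_f^2\sigma_g^2/(\alpha Tb)$, $4L_f^2\sigma_g^2\alpha$ and $8L_f^2C_g^2n\eta^2\tau/\alpha$ terms. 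For the first piece I run the same recursion with $\beta$, again removing the projection by nonexpansiveness since $\|\nabla_y f^k\|\le C_f$. The mean-square Lipschitz term then becomes $2L_f^2\mathbb E\|u_{t+1}^k-u_t^k\|^2$.

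Controlling $u_{t+1}^k-u_t^k$ is the delicate step. From the update,
\[
u_{t+1}^k-u_t^k=-\alpha e_t^k+\alpha\bigl(g^k(W_{t+1}^k;\xi_{t+1}^k)-g^k(W_{t+1}^k)\bigr)+\bigl(g^k(W_{t+1}^k;\xi_{t+1}^k)-g^k(W_t^k;\xi_{t+1}^k)\bigr)\cdot(\text{coefficient }1-\alpha)+\alpha\bigl(g^k(W_{t+1}^k)-g^k(W_t^k)\bigr),
\]
so
\[
\mathbb E\|u_{t+1}^k-u_t^k\|^2\lesssim4\alpha^2\mathbb E\|e_t\|^2+4\alpha^2\sigma_g^2+4C_g^2\mathbb E\|\Delta W_t\|^2.
\]
After dividing by $\beta$ from the geometric sum and reusing the $u$-bound on $\alpha^2\sum_t\|e_t\|^2$, this yields the cross terms $\alpha\sigma_g^2/(\beta Tb)$, $\alpha^2/\beta$, $\alpha^3/\beta$ and $\alpha\eta^2\tau/\beta$ with the stated constants (16, 32, 64, each doubled by the outer split). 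Together with $2\sigma_f^2/(\beta Tb)$ from initialization and $4\sigma_f^2\beta$ from the noise, this completes \eqref{eq:app-vr-tracker-bounds}. Tracking the constants through the nested use of the $u$-bound is the part I expect to require the most care.
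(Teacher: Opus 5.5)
Your route is the paper's. It uses the same STORM-type one-step recursions, removes the projections by nonexpansiveness because $\nabla g^k(W)\in\mathcal B_g$ and $\nabla_y f^k(y)\in\mathcal B_f$, and splits the $v$-error through $\nabla_y f^k(u_t^k)$ in the same way. It also bounds $\mathbb E\|u_{t+1}^k-u_t^k\|^2$ and feeds both it and the $u$-bound into the $v$-recursion together with \eqref{eq:app-vr-drift-bounds}. Your slightly different algebraic decomposition of $e_{t+1}^k$ and your use of the $(1-\alpha)^2$ contraction are harmless.

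The one place your bookkeeping does not reproduce the stated inequality is the increment bound. With coefficient $4C_g^2$ on $\mathbb E\|W_{t+1}^k-W_t^k\|_F^2$, the $v$-versus-$\nabla_y f^k(u)$ bound picks up $\frac{2L_f^2}{\beta}\cdot 4C_g^2\cdot 2n\eta^2\tau$. After the outer factor $2$ this becomes $32L_f^2C_g^2n\eta^2\tau/\beta$, not the stated $16L_f^2C_g^2n\eta^2\tau/\beta$.

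To get the stated constant, group the increment as the paper does:
\[
u_{t+1}^k-u_t^k=\bigl(g^k(W_{t+1}^k;\xi_{t+1}^k)-g^k(W_t^k;\xi_{t+1}^k)\bigr)+\alpha\bigl(g^k(W_t^k;\xi_{t+1}^k)-u_t^k\bigr).
\]
This gives $2C_g^2\mathbb E\|W_{t+1}^k-W_t^k\|_F^2+4\alpha^2\mathbb E\|e_t^k\|^2+4\alpha^2\sigma_g^2$. With that grouping, every other constant you listed comes out exactly as stated.
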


\begin{proof}
	Condition on the history before drawing the fresh samples. For the $u$-tracker,
	\begin{align}
		&\mathbb E\|u_{t+1}^k-g^k(W_{t+1}^k)\|^2\notag\\
		&=
		\mathbb E\norm{
			(1-\alpha)(u_t^k-g^k(W_t^k))+
			\bigl[g^k(W_{t+1}^k;\xi_{t+1}^k)-g^k(W_t^k;\xi_{t+1}^k)
			-g^k(W_{t+1}^k)+g^k(W_t^k)\bigr]\notag\\
			&\qquad+
			\alpha\bigl[g^k(W_t^k;\xi_{t+1}^k)-g^k(W_t^k)\bigr]
		}^2 \notag\\
		&=
		(1-\alpha)^2\mathbb E\|u_t^k-g^k(W_t^k)\|^2+
		\mathbb E\norm{
			\bigl[g^k(W_{t+1}^k;\xi_{t+1}^k)-g^k(W_t^k;\xi_{t+1}^k)
			-g^k(W_{t+1}^k)+g^k(W_t^k)\bigr]\notag\\
			&\qquad+
			\alpha\bigl[g^k(W_t^k;\xi_{t+1}^k)-g^k(W_t^k)\bigr]
		}^2 \notag\\
		&\le
		(1-\alpha)\mathbb E\|u_t^k-g^k(W_t^k)\|^2
		+
		2\mathbb E\|
		g^k(W_{t+1}^k;\xi_{t+1}^k)-g^k(W_t^k;\xi_{t+1}^k)
		-g^k(W_{t+1}^k)+g^k(W_t^k)
		\|^2 \notag\\
		&\qquad+
		2\alpha^2\mathbb E\|g^k(W_t^k;\xi_{t+1}^k)-g^k(W_t^k)\|^2 \notag\\
		&\le
		(1-\alpha)\mathbb E\|u_t^k-g^k(W_t^k)\|^2
		+2C_g^2\mathbb E\|W_{t+1}^k-W_t^k\|_F^2
		+2\alpha^2\sigma_g^2,
		\label{eq:app-vr-u-one-step}
	\end{align}
	where we used Assumptions~\ref{assm:oracle}
	and~\ref{assm:vr-sample-smooth}.
	Averaging \eqref{eq:app-vr-u-one-step} over $t=0,\ldots,T-1$ and
	$k\in[K]$, and then using Lemma~\ref{lem:drift}, gives
	\begin{align}
		\frac1{KT}\sum_{t=0}^{T-1}\sum_{k=1}^K
		\mathbb E\|u_t^k-g^k(W_t^k)\|^2
		&\le
		\frac1{K\alpha T}\sum_{k=1}^K
		\mathbb E\|u_0^k-g^k(W_0)\|^2
		+\frac{2C_g^2}{\alpha KT}
		\sum_{t=0}^{T-1}\sum_{k=1}^K
		\mathbb E\|W_{t+1}^k-W_t^k\|_F^2
		+2\alpha\sigma_g^2 \notag\\
		&\le
		\frac1{K\alpha T}\sum_{k=1}^K
		\mathbb E\norm{
			\frac1b\sum_{j=1}^{b}(g^k(W_0;\xi_{0,j}^k)-g^k(W_0))
		}^2
		+\frac{4C_g^2n\eta^2\tau}{\alpha}
		+2\alpha\sigma_g^2 \notag\\
		&\le
		\frac{\sigma_g^2}{\alpha Tb}
		+\frac{4C_g^2n\eta^2\tau}{\alpha}
		+2\alpha\sigma_g^2 .
		\label{eq:app-vr-u-tracker-proof}
	\end{align}
	
	For the $H$-tracker, Assumption~\ref{assm:bounded-grad} and Jensen's
	inequality imply $\|\nabla g^k(W_{t+1}^k)\|_F\le C_g$, and hence
	$\Pi_{C_g}[\nabla g^k(W_{t+1}^k)]=\nabla g^k(W_{t+1}^k)$.
	The non-expansiveness of the projection therefore gives
	\begin{align}
		&\mathbb E\|H_{t+1}^k-\nabla g^k(W_{t+1}^k)\|_F^2\notag\\
		&=
		\mathbb E\norm{
			\Pi_{C_g}\!\Bigl[
			\nabla g^k(W_{t+1}^k;\xi_{t+1}^k)
			+(1-\gamma)\bigl(H_t^k-\nabla g^k(W_t^k;\xi_{t+1}^k)\bigr)
			\Bigr]
			-\Pi_{C_g}\!\bigl[\nabla g^k(W_{t+1}^k)\bigr]
		}_F^2\notag\\
		&\le
		\mathbb E\norm{
			\nabla g^k(W_{t+1}^k;\xi_{t+1}^k)
			+(1-\gamma)\bigl(H_t^k-\nabla g^k(W_t^k;\xi_{t+1}^k)\bigr)
			-\nabla g^k(W_{t+1}^k)
		}_F^2\notag\\
		&=
		\mathbb E\norm{
			(1-\gamma)(H_t^k-\nabla g^k(W_t^k))+
			\bigl[\nabla g^k(W_{t+1}^k;\xi_{t+1}^k)-\nabla g^k(W_t^k;\xi_{t+1}^k)
			-\nabla g^k(W_{t+1}^k)+\nabla g^k(W_t^k)\bigr]\notag\\
			&\qquad+
			\gamma\bigl[\nabla g^k(W_t^k;\xi_{t+1}^k)-\nabla g^k(W_t^k)\bigr]
		}_F^2\notag\\
		&=
		(1-\gamma)^2\mathbb E\|H_t^k-\nabla g^k(W_t^k)\|_F^2\notag+
		\mathbb E\norm{
			\bigl[\nabla g^k(W_{t+1}^k;\xi_{t+1}^k)-\nabla g^k(W_t^k;\xi_{t+1}^k)
			-\nabla g^k(W_{t+1}^k)+\nabla g^k(W_t^k)\bigr]\notag\\
			&\qquad+
			\gamma\bigl[\nabla g^k(W_t^k;\xi_{t+1}^k)-\nabla g^k(W_t^k)\bigr]
		}_F^2\notag\\
		&\le
		(1-\gamma)\mathbb E\|H_t^k-\nabla g^k(W_t^k)\|_F^2
		+2L_g^2\mathbb E\|W_{t+1}^k-W_t^k\|_F^2
		+2\gamma^2\sigma_{\nabla g}^2,
		\label{eq:app-vr-H-one-step}
	\end{align}
	where we used the non-expansiveness of projection and
	Assumptions~\ref{assm:oracle} and~\ref{assm:vr-sample-smooth}.
	Averaging \eqref{eq:app-vr-H-one-step} over $t=0,\ldots,T-1$ and
	$k\in[K]$, and then using Lemma~\ref{lem:drift}, gives
	\begin{align}
		\frac1{KT}\sum_{t=0}^{T-1}\sum_{k=1}^K
		\mathbb E\|H_t^k-\nabla g^k(W_t^k)\|_F^2
		&\le
		\frac1{K\gamma T}\sum_{k=1}^K
		\mathbb E\|H_0^k-\nabla g^k(W_0)\|_F^2 
		+
		\frac{2L_g^2}{\gamma KT}
		\sum_{t=0}^{T-1}\sum_{k=1}^K
		\mathbb E\|W_{t+1}^k-W_t^k\|_F^2
		+2\gamma\sigma_{\nabla g}^2\notag\\
		&\le
		\frac1{K\gamma T}\sum_{k=1}^K
		\mathbb E\norm{
			\frac1b\sum_{j=1}^{b}
			\bigl(\nabla g^k(W_0;\xi_{0,j}^k)-\nabla g^k(W_0)\bigr)
		}_F^2
		+
		\frac{4L_g^2n\eta^2\tau}{\gamma}
		+2\gamma\sigma_{\nabla g}^2\notag\\
		&\le
		\frac{\sigma_{\nabla g}^2}{\gamma Tb}
		+\frac{4L_g^2n\eta^2\tau}{\gamma}
		+2\gamma\sigma_{\nabla g}^2 .
		\label{eq:app-vr-H-tracker-proof}
	\end{align}
	
	For the $v$-tracker, Assumption~\ref{assm:bounded-grad} and Jensen's
	inequality imply $\|\nabla_y f^k(u_{t+1}^k)\|\le C_f$, and hence
	$\Pi_{C_f}[\nabla_y f^k(u_{t+1}^k)]=\nabla_y f^k(u_{t+1}^k)$.
	Thus, non-expansiveness gives
	\begin{align}
		&\mathbb E\|v_{t+1}^k-\nabla_y f^k(u_{t+1}^k)\|^2\notag\\
		&=
		\mathbb E\norm{
			\Pi_{C_f}\!\Bigl[
			\nabla_y f^k(u_{t+1}^k;\zeta_{t+1}^k)
			+(1-\beta)\bigl(v_t^k-\nabla_y f^k(u_t^k;\zeta_{t+1}^k)\bigr)
			\Bigr]
			-\Pi_{C_f}\!\bigl[\nabla_y f^k(u_{t+1}^k)\bigr]
		}^2\notag\\
		&\le
		\mathbb E\norm{
			\nabla_y f^k(u_{t+1}^k;\zeta_{t+1}^k)
			+(1-\beta)\bigl(v_t^k-\nabla_y f^k(u_t^k;\zeta_{t+1}^k)\bigr)
			-\nabla_y f^k(u_{t+1}^k)
		}^2\notag\\
		&=
		\mathbb E\norm{
			(1-\beta)(v_t^k-\nabla_y f^k(u_t^k))+
			\bigl[\nabla_y f^k(u_{t+1}^k;\zeta_{t+1}^k)-\nabla_y f^k(u_t^k;\zeta_{t+1}^k)
			-\nabla_y f^k(u_{t+1}^k)+\nabla_y f^k(u_t^k)\bigr]\notag\\
			&\qquad+
			\beta\bigl[\nabla_y f^k(u_t^k;\zeta_{t+1}^k)-\nabla_y f^k(u_t^k)\bigr]
		}^2\notag\\
		&\le
		(1-\beta)\mathbb E\|v_t^k-\nabla_y f^k(u_t^k)\|^2
		+2L_f^2\mathbb E\|u_{t+1}^k-u_t^k\|^2
		+2\beta^2\sigma_f^2,
		\label{eq:app-vr-v-one-step}\\
		&\mathbb E\|u_{t+1}^k-u_t^k\|^2\notag\\
		&=
		\mathbb E\norm{
			g^k(W_{t+1}^k;\xi_{t+1}^k)-g^k(W_t^k;\xi_{t+1}^k)
			+\alpha(g^k(W_t^k;\xi_{t+1}^k)-u_t^k)
		}^2\notag\\
		&\le
		2C_g^2\mathbb E\|W_{t+1}^k-W_t^k\|_F^2
		+4\alpha^2\mathbb E\|u_t^k-g^k(W_t^k)\|^2
		+4\alpha^2\sigma_g^2.
		\label{eq:app-vr-u-increment-proof}
	\end{align}
	Combining the last two displays, averaging over $t$ and $k$, and substituting
	\eqref{eq:app-vr-u-tracker-proof} gives
	\begin{align}
		&\frac1{KT}\sum_{t=0}^{T-1}\sum_{k=1}^K
		\mathbb E\|v_t^k-\nabla_y f^k(u_t^k)\|^2\notag\\
		&\le
		\frac1{K\beta T}\sum_{k=1}^K
		\mathbb E\norm{
			\frac1b\sum_{j=1}^{b}
			\bigl(\nabla_y f^k(u_0^k;\zeta_{0,j}^k)-\nabla_y f^k(u_0^k)\bigr)
		}^2
		+\frac{8L_f^2C_g^2n\eta^2\tau}{\beta}
		\notag\\
		&\qquad+\frac{8L_f^2\alpha^2}{\beta}
		\left(
		\frac{\sigma_g^2}{\alpha Tb}
		+2\sigma_g^2\alpha
		+\frac{4C_g^2n\eta^2\tau}{\alpha}
		\right)
		+\frac{8L_f^2\sigma_g^2\alpha^2}{\beta}
		+2\sigma_f^2\beta\notag\\
		&\le
		\frac{\sigma_f^2}{\beta Tb}
		+\frac{8L_f^2\sigma_g^2\alpha}{\beta Tb}
		+\frac{8L_f^2\sigma_g^2\alpha^2}{\beta}
		+\frac{16L_f^2\sigma_g^2\alpha^3}{\beta}
		+2\sigma_f^2\beta\notag\\
		&\qquad
		+\frac{8L_f^2C_g^2n\eta^2\tau}{\beta}
		+\frac{32L_f^2C_g^2n\eta^2\alpha\tau}{\beta}.
		\label{eq:app-vr-v-u-proof}
	\end{align}
	Averaging the Lipschitz bound 
	\(\|v_t^k-\nabla_y f^k(g^k(W_t^k))\|^2 \le 2\|v_t^k-\nabla_y f^k(u_t^k)\|^2 + 2L_f^2\|u_t^k-g^k(W_t^k)\|^2\)  
	and using \eqref{eq:app-vr-u-tracker-proof} and \eqref{eq:app-vr-v-u-proof} yields
	\begin{align}
		\frac1{KT}\sum_{t=0}^{T-1}\sum_{k=1}^K
		\mathbb E\|v_t^k-\nabla_y f^k(g^k(W_t^k))\|^2
		&\le
		2\frac1{KT}\sum_{t=0}^{T-1}\sum_{k=1}^K
		\mathbb E\|v_t^k-\nabla_y f^k(u_t^k)\|^2+
		2L_f^2\frac1{KT}\sum_{t=0}^{T-1}\sum_{k=1}^K
		\mathbb E\|u_t^k-g^k(W_t^k)\|^2\notag\\
		&\le
		\frac{2\sigma_f^2}{\beta Tb}
		+\frac{2L_f^2\sigma_g^2}{\alpha Tb}
		+\frac{16L_f^2\sigma_g^2\alpha}{\beta Tb}
		+4L_f^2\sigma_g^2\alpha
		+\frac{16L_f^2\sigma_g^2\alpha^2}{\beta}
		+\frac{32L_f^2\sigma_g^2\alpha^3}{\beta}
		+4\sigma_f^2\beta\notag\\
		&\qquad\quad
		+\frac{8L_f^2C_g^2n\eta^2\tau}{\alpha}
		+\frac{16L_f^2C_g^2n\eta^2\tau}{\beta}
		+\frac{64L_f^2C_g^2n\eta^2\alpha\tau}{\beta},
		\label{eq:app-vr-v-tracker-proof}
	\end{align}
	where we used the non-expansiveness of projection, the $L_f$-smoothness,
	and the preceding $u$-tracker bound.
\end{proof}

\begin{lemma}[Product estimator at the averaged model]\label{lem:product}
	\begin{align}
		&\frac1T\sum_{t=0}^{T-1}
		\mathbb E\norm{
			\frac1K\sum_{k=1}^K
			H_t^k\bigl(v_t^k\otimes I_n\bigr)
			-\nabla F(\bar W_t)
		}_F^2\notag\\
		&\quad\le
		\frac{4C_f^2\sigma_{\nabla g}^2}{\gamma Tb}
		+\frac{8C_g^2\sigma_f^2}{\beta Tb}
		+\frac{8C_g^2L_f^2\sigma_g^2}{\alpha Tb}
		+\frac{64C_g^2L_f^2\sigma_g^2\alpha}{\beta Tb}
		+16C_g^2L_f^2\sigma_g^2\alpha
		+\frac{64C_g^2L_f^2\sigma_g^2\alpha^2}{\beta}
		\notag\\
		&\qquad
		+\frac{128C_g^2L_f^2\sigma_g^2\alpha^3}{\beta}
		+16C_g^2\sigma_f^2\beta
		+8C_f^2\sigma_{\nabla g}^2\gamma
		+\frac{32C_g^4L_f^2n\eta^2\tau}{\alpha}
		+\frac{64C_g^4L_f^2n\eta^2\tau}{\beta}\notag\\
		&\qquad
		+\frac{256C_g^4L_f^2n\eta^2\alpha\tau}{\beta}
		+\frac{16C_f^2L_g^2n\eta^2\tau}{\gamma}
		+4n(C_f^2L_g^2+C_g^4L_f^2)\eta^2\tau^2 .
		\label{eq:app-vr-product-estimator}
	\end{align}
\end{lemma}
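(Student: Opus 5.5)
The plan is to split the error into a local tracking part and a drift part, and then feed in the tracker bounds of Lemma~\ref{lem:trackers} and the drift bound of Lemma~\ref{lem:drift}. By Jensen's inequality,
\[
\Bigl\|\tfrac1K\sum_k H_t^k(v_t^k\otimes I_n)-\nabla F(\bar W_t)\Bigr\|_F^2
\le 2\cdot\tfrac1K\sum_k\bigl\|H_t^k(v_t^k\otimes I_n)-\nabla F^k(W_t^k)\bigr\|_F^2
+2\cdot\tfrac1K\sum_k\|\nabla F^k(W_t^k)-\nabla F^k(\bar W_t)\|_F^2 .
\]
This uses $\nabla F(\bar W_t)=K^{-1}\sum_k\nabla F^k(\bar W_t)$. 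We take the drift part and the local product error in turn, and then assemble.

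\textbf{Drift part.} Bound $\|\nabla F^k(W_t^k)-\nabla F^k(\bar W_t)\|_F$ with the same two-term splitting as in Lemma~\ref{lem:smooth-compositional}, keeping the squares separate:
\[
\|\nabla F^k(W)-\nabla F^k(W')\|_F^2\le 2(C_f^2L_g^2+C_g^4L_f^2)\|W-W'\|_F^2 .
\]
Lemma~\ref{lem:drift} then gives $\frac1K\sum_k\mathbb E\|W_t^k-\bar W_t\|_F^2\le n\eta^2\tau^2$. Multiplying by the outer factor $2$ yields exactly the final term $4n(C_f^2L_g^2+C_g^4L_f^2)\eta^2\tau^2$.

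\textbf{Local product error.} Use the telescoping decomposition
\[
H_t^k(v_t^k\otimes I_n)-\nabla g^k(W_t^k)(\nabla_y f^k(g^k(W_t^k))\otimes I_n)
=(H_t^k-\nabla g^k(W_t^k))(v_t^k\otimes I_n)+\nabla g^k(W_t^k)\bigl((v_t^k-\nabla_y f^k(g^k(W_t^k)))\otimes I_n\bigr).
\]
Apply $(a+b)^2\le2a^2+2b^2$. For the first term, the projection onto $\mathcal B_f$ gives $\|v_t^k\|\le C_f$ (at $t=0$ we need $\|v_0^k\|\le C_f$, which I would either assume or obtain by projecting at initialization). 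For the second term, $\|\nabla g^k(W)\|_F\le C_g$ by Jensen's inequality. Using $\|A(x\otimes I_n)\|_F\le\|A\|_F\|x\|$, as in earlier lemmas, the local error is at most
\[
2C_f^2\|H_t^k-\nabla g^k(W_t^k)\|_F^2+2C_g^2\|v_t^k-\nabla_y f^k(g^k(W_t^k))\|^2 .
\]
Averaging over $t$ and $k$ and inserting \eqref{eq:app-vr-H-tracker-bound} and \eqref{eq:app-vr-tracker-bounds} produces the remaining terms with an overall factor of $4$ on each. For example, $4C_f^2\sigma_{\nabla g}^2/(\gamma Tb)$ and $8C_g^2\sigma_f^2/(\beta Tb)$ come from $4C_f^2\cdot\sigma_{\nabla g}^2/(\gamma Tb)$ and $4C_g^2\cdot2\sigma_f^2/(\beta Tb)$. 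Likewise $16C_f^2L_g^2n\eta^2\tau/\gamma$ and the $C_g^4L_f^2$ terms arise the same way.

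\textbf{Main obstacle.} The only delicate point is bookkeeping. The constants $4\cdot 2$ and $4\cdot 4$ must match the stated coefficients. The $H$-tracker term $2\sigma_{\nabla g}^2\gamma$ becomes $8C_f^2\sigma_{\nabla g}^2\gamma$, and the $\eta^2\tau$ terms of the $v$-tracker bound must be scaled correctly by $4C_g^2$. I would therefore write the combined inequality with a generic factor $4$ and check each term against \eqref{eq:app-vr-product-estimator}. The plan needs no independence or martingale arguments, since everything is handled in mean square via the triangle inequality.
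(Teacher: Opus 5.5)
Your proposal is correct and is essentially the paper's proof. Both use two applications of $(a+b)^2\le 2a^2+2b^2$, the bounds $\|v_t^k\|\le C_f$ and $\|\nabla g^k\|_F\le C_g$, and Lemmas~\ref{lem:drift} and~\ref{lem:trackers}. The only difference is the order: the paper first factors at the averaged model through $\nabla g^k(\bar W_t)(v_t^k\otimes I_n)$ and then moves $H_t^k$ and $v_t^k$ from $\bar W_t$ to $W_t^k$, which yields exactly the same coefficients. On your $t=0$ caveat, the paper simply asserts the projection bounds for all $t\ge0$, but no extra assumption or initial projection is needed. $H_0^k$ depends only on the samples $\{\xi_{0,j}^k\}$, and conditioning on them gives $\mathbb E[\|v_0^k\|^2\mid\{\xi_{0,j}^k\}]\le C_f^2$ by Jensen and Assumption~\ref{assm:bounded-grad}, so the factorization still holds in expectation.
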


\begin{proof}
	The projection steps imply
	$\|H_t^k\|_F\le C_g$ and $\|v_t^k\|\le C_f$
	for all $t\ge0$ and $k\in[K]$. Since
	$\nabla F(\bar W_t)=K^{-1}\sum_{k=1}^K
	\nabla g^k(\bar W_t)\bigl(\nabla_y f^k(g^k(\bar W_t))\otimes I_n\bigr)$,
	we have the expanded product-estimator error
	\begin{align}
		&\norm{
			\frac1K\sum_{k=1}^K H_t^k(v_t^k\otimes I_n)
			-\nabla F(\bar W_t)
		}_F^2\notag\\
		&=
		\norm{
			\frac1K\sum_{k=1}^K H_t^k(v_t^k\otimes I_n)
			-\frac1K\sum_{k=1}^K
			\nabla g^k(\bar W_t)\bigl(\nabla_y f^k(g^k(\bar W_t))\otimes I_n\bigr)
		}_F^2\notag\\
		&=
		\norm{
			\frac1K\sum_{k=1}^K
			\Bigl[
			H_t^k(v_t^k\otimes I_n)
			-\nabla g^k(\bar W_t)(v_t^k\otimes I_n)
			\Bigr]
			+\frac1K\sum_{k=1}^K
			\Bigl[
			\nabla g^k(\bar W_t)(v_t^k\otimes I_n)
			-\nabla g^k(\bar W_t)\bigl(\nabla_y f^k(g^k(\bar W_t))\otimes I_n\bigr)
			\Bigr]
		}_F^2\notag\\
		&\le
		\frac{2}{K}\sum_{k=1}^K
		\norm{
			\bigl(H_t^k-\nabla g^k(\bar W_t)\bigr)(v_t^k\otimes I_n)
		}_F^2+
		\frac{2}{K}\sum_{k=1}^K
		\norm{
			\nabla g^k(\bar W_t)
			\bigl((v_t^k-\nabla_y f^k(g^k(\bar W_t)))\otimes I_n\bigr)
		}_F^2\notag\\
		&\le
		\frac{2C_f^2}{K}\sum_{k=1}^K
		\|H_t^k-\nabla g^k(\bar W_t)\|_F^2
		+\frac{2C_g^2}{K}\sum_{k=1}^K
		\|v_t^k-\nabla_y f^k(g^k(\bar W_t))\|^2\notag\\
		&\le
		\frac{4C_f^2}{K}\sum_{k=1}^K
		\|H_t^k-\nabla g^k(W_t^k)\|_F^2
		+\frac{4C_f^2L_g^2}{K}\sum_{k=1}^K
		\|W_t^k-\bar W_t\|_F^2+
		\frac{4C_g^2}{K}\sum_{k=1}^K
		\|v_t^k-\nabla_y f^k(g^k(W_t^k))\|^2\notag\\
		&\qquad+
		\frac{4C_g^4L_f^2}{K}\sum_{k=1}^K
		\|W_t^k-\bar W_t\|_F^2 .
	\end{align}
	Averaging this display over $t$ and using Lemmas~\ref{lem:drift} and
	\ref{lem:trackers} gives
	\begin{align}
		&\frac1T\sum_{t=0}^{T-1}
		\mathbb E\norm{
			\frac1K\sum_{k=1}^K
			H_t^k\bigl(v_t^k\otimes I_n\bigr)
			-\nabla F(\bar W_t)
		}_F^2\notag\\
		&\quad\le
		\frac{4C_f^2\sigma_{\nabla g}^2}{\gamma Tb}
		+\frac{8C_g^2\sigma_f^2}{\beta Tb}
		+\frac{8C_g^2L_f^2\sigma_g^2}{\alpha Tb}
		+\frac{64C_g^2L_f^2\sigma_g^2\alpha}{\beta Tb}
		+16C_g^2L_f^2\sigma_g^2\alpha
		+\frac{64C_g^2L_f^2\sigma_g^2\alpha^2}{\beta}\notag\\
		&\qquad
		+\frac{128C_g^2L_f^2\sigma_g^2\alpha^3}{\beta}
		+16C_g^2\sigma_f^2\beta
		+8C_f^2\sigma_{\nabla g}^2\gamma
		+\frac{32C_g^4L_f^2n\eta^2\tau}{\alpha}
		+\frac{64C_g^4L_f^2n\eta^2\tau}{\beta}\notag\\
		&\qquad
		+\frac{256C_g^4L_f^2n\eta^2\alpha\tau}{\beta}
		+\frac{16C_f^2L_g^2n\eta^2\tau}{\gamma}
		+4n(C_f^2L_g^2+C_g^4L_f^2)\eta^2\tau^2 ,
	\end{align}
	where we used the projection bounds and
	Lemmas~\ref{lem:drift} and~\ref{lem:trackers}.
\end{proof}

\begin{lemma}[Client product disagreement under non-i.i.d. data]\label{lem:disagreement}
	Let \(F^k(W):=f^k(g^k(W))\) and
	\(F(W):=K^{-1}\sum_{k=1}^K F^k(W)\), and set
	\(L_F:=C_fL_g+C_g^2L_f\).
	\begin{align}
		&\frac1T\sum_{t=0}^{T-1}
		\frac1K\sum_{k=1}^K
		\mathbb E\norm{
			H_t^k(v_t^k\otimes I_n)
			-\frac1K\sum_{j=1}^K H_t^j(v_t^j\otimes I_n)
		}_F^2\notag\\
		&\quad\le
		\frac{6C_f^2\sigma_{\nabla g}^2}{\gamma Tb}
		+\frac{12C_g^2\sigma_f^2}{\beta Tb}
		+\frac{12C_g^2L_f^2\sigma_g^2}{\alpha Tb}
		+\frac{96C_g^2L_f^2\sigma_g^2\alpha}{\beta Tb}
		+24C_g^2L_f^2\sigma_g^2\alpha
		+\frac{96C_g^2L_f^2\sigma_g^2\alpha^2}{\beta}\notag\\
		&\qquad
		+\frac{192C_g^2L_f^2\sigma_g^2\alpha^3}{\beta}
		+24C_g^2\sigma_f^2\beta
		+12C_f^2\sigma_{\nabla g}^2\gamma
		+\frac{48C_g^4L_f^2n\eta^2\tau}{\alpha}
		+\frac{96C_g^4L_f^2n\eta^2\tau}{\beta}\notag\\
		&\qquad
		+\frac{384C_g^4L_f^2n\eta^2\alpha\tau}{\beta}
		+\frac{24C_f^2L_g^2n\eta^2\tau}{\gamma}
		+3L_F^2n\eta^2\tau^2
		+6C_f^2\Delta_{\nabla g}^2
		+12C_g^2\Delta_f^2
		+12C_g^2L_f^2\Delta_g^2 .
		\label{eq:app-vr-disagreement}
	\end{align}
\end{lemma}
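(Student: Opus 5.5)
The plan is to compare each local product estimator with the population gradient of its own client evaluated at the averaged model. Write $P_t^k:=H_t^k(v_t^k\otimes I_n)$ and $\bar P_t:=K^{-1}\sum_j P_t^j$. By the variance identity, $K^{-1}\sum_k\|P_t^k-\bar P_t\|_F^2\le K^{-1}\sum_k\|P_t^k-X\|_F^2$ for any fixed matrix $X$. I will take $X=\nabla F(\bar W_t)$ and insert $\nabla F^k(\bar W_t)$ in the middle. Young's inequality with three terms then gives
\begin{align*}
\frac1K\sum_k\|P_t^k-\bar P_t\|_F^2
\le \frac3K\sum_k\bigl\|P_t^k-\nabla g^k(\bar W_t)(\nabla_y f^k(g^k(\bar W_t))\otimes I_n)\bigr\|_F^2
+\frac3K\sum_k\|\nabla F^k(\bar W_t)-\nabla F(\bar W_t)\|_F^2 .
\end{align*}
Before averaging over $t$, I will split the first sum further, as described next.

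For the first (per-client) term I will repeat the computation from the proof of Lemma~\ref{lem:product}, but without averaging over clients inside the norm. Because of the projections, $\|H_t^k\|_F\le C_g$ and $\|v_t^k\|\le C_f$. So the term is bounded by $2C_f^2\|H_t^k-\nabla g^k(\bar W_t)\|_F^2+2C_g^2\|v_t^k-\nabla_yf^k(g^k(\bar W_t))\|^2$. Each piece is then split through $W_t^k$ using Assumption~\ref{assm:smooth} and the $C_g$-Lipschitzness of $g^k$. This yields the tracker errors of Lemma~\ref{lem:trackers} plus drift terms $(C_f^2L_g^2+C_g^4L_f^2)\|W_t^k-\bar W_t\|_F^2$, and Lemma~\ref{lem:drift} bounds the latter by $n\eta^2\tau^2$. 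Multiplying the constants of Lemma~\ref{lem:product} by $3/2$ should reproduce exactly the coefficients $6,12,12,96,\dots$ in the statement. For instance, $\tfrac32\cdot 4C_f^2\sigma_{\nabla g}^2/(\gamma Tb)=6C_f^2\sigma_{\nabla g}^2/(\gamma Tb)$.

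The heterogeneity term is the delicate part, because the statement uses $\Delta$'s from Assumption~\ref{assm:vr-hetero} rather than $\delta$. I will write $\nabla F^k-\nabla F=K^{-1}\sum_j(\nabla F^k-\nabla F^j)$ and apply Jensen's inequality. Each difference $\nabla F^k-\nabla F^j$ at $\bar W_t$ is decomposed as
\begin{align*}
&(\nabla g^k-\nabla g^j)(\nabla_y f^k(g^k)\otimes I_n)
+\nabla g^j\bigl((\nabla_y f^k(g^k)-\nabla_y f^j(g^k))\otimes I_n\bigr)\\
&\qquad+\nabla g^j\bigl((\nabla_y f^j(g^k)-\nabla_y f^j(g^j))\otimes I_n\bigr).
\end{align*}
Weighting Young's inequality as $(2,4,4)$ gives $2C_f^2\Delta_{\nabla g}^2+4C_g^2\Delta_f^2+4C_g^2L_f^2\Delta_g^2$. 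After multiplying by $3$, this matches $6C_f^2\Delta_{\nabla g}^2+12C_g^2\Delta_f^2+12C_g^2L_f^2\Delta_g^2$.

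That leaves the $3L_F^2n\eta^2\tau^2$ term. Since the drift was already absorbed in the first bracket, I expect this term to appear if one instead uses $X=K^{-1}\sum_j\nabla F^j(W_t^j)$ and a third Young piece comparing $\nabla F(\bar W_t)$ with that average. That piece is controlled by Lemma~\ref{lem:smooth-compositional} and Lemma~\ref{lem:drift}, giving $3L_F^2n\eta^2\tau^2$. The main obstacle is the constant bookkeeping: choosing the right anchor and Young weights so that all coefficients match simultaneously. If the match fails, a slightly larger constant still suffices for the theorem. Finally, I will average over $t$ and substitute Lemma~\ref{lem:trackers}.
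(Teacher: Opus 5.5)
Your overall architecture is sound: the optimality of the mean with anchor $\nabla F(\bar W_t)$, the projection bounds $\|H_t^k\|_F\le C_g$ and $\|v_t^k\|\le C_f$, the $(2,4,4)$ heterogeneity split, and then Lemmas~\ref{lem:drift} and~\ref{lem:trackers}. The gap is the intermediate point. Comparing $P_t^k$ directly with client $k$'s gradient at $\bar W_t$ does not deliver the stated constants, and the proposal is internally inconsistent on this point.

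That choice forces the drift inside the nested Young's inequality of Lemma~\ref{lem:product}, with coefficient $4(C_f^2L_g^2+C_g^4L_f^2)$. With the factor $3$ in your display, every tracker coefficient doubles; for example, you get $12C_f^2\sigma_{\nabla g}^2/(\gamma Tb)$ instead of $6C_f^2\sigma_{\nabla g}^2/(\gamma Tb)$. So ``multiply Lemma~\ref{lem:product} by $3/2$'' does not follow from your own display. If you instead use the admissible Young weights $(3/2,3)$, the tracker and $\Delta$ coefficients match. However, the drift term becomes $6n(C_f^2L_g^2+C_g^4L_f^2)\eta^2\tau^2$. Since $L_F^2\le 2(C_f^2L_g^2+C_g^4L_f^2)$, this is strictly larger than $3L_F^2n\eta^2\tau^2$ unless $C_fL_g=C_g^2L_f$. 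Your suggested repair (anchor at $K^{-1}\sum_j\nabla F^j(W_t^j)$ plus an extra Young piece) adds a second drift contribution on top of the one already absorbed, so it does not close the gap.

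The missing idea is to insert client $k$'s gradient at its own local iterate, $\nabla F^k(W_t^k)$, before moving to $\bar W_t$, and then apply a single three-term Young's inequality:
\begin{align*}
\|P_t^k-\nabla F(\bar W_t)\|_F^2
&\le 3\|P_t^k-\nabla F^k(W_t^k)\|_F^2
+3\|\nabla F^k(W_t^k)-\nabla F^k(\bar W_t)\|_F^2
+3\|\nabla F^k(\bar W_t)-\nabla F(\bar W_t)\|_F^2 .
\end{align*}
\begin{itemize}
\item The first term is at most $2C_f^2\|H_t^k-\nabla g^k(W_t^k)\|_F^2+2C_g^2\|v_t^k-\nabla_y f^k(g^k(W_t^k))\|^2$, with no drift inside. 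This gives exactly the coefficients $6C_f^2$ and $6C_g^2$, and after Lemma~\ref{lem:trackers} it reproduces every $\sigma$-term and every $n\eta^2\tau/(\cdot)$ term of the statement.
\item The drift is paid only once, through the $L_F$-Lipschitzness of $\nabla F^k$, which yields $3L_F^2n\eta^2\tau^2$.
\item The third term is your $(2,4,4)$ heterogeneity bound multiplied by $3$.
\end{itemize}
This is the paper's proof.

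One small point: in the VR setting, obtain the population Lipschitz bounds for $g^k$, $\nabla g^k$ and $\nabla_y f^k$, and hence the $L_F$-smoothness of $F^k$, from Assumption~\ref{assm:vr-sample-smooth} via Jensen's inequality. Assumption~\ref{assm:smooth} is not among the hypotheses of Theorem~\ref{thm:fedcomuon-vr}.
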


\begin{proof}
	By Assumptions~\ref{assm:oracle} and~\ref{assm:bounded-grad},
	Jensen's inequality gives
	\[
	\|\nabla g^k(W)\|_F\le C_g,
	\qquad
	\|\nabla_y f^k(y)\|\le C_f.
	\]
	Moreover, Assumptions~\ref{assm:oracle}
	and~\ref{assm:vr-sample-smooth}, together with Jensen's inequality, imply
	\begin{equation}
		\begin{aligned}
			\|g^k(W)-g^k(W')\|
			&\le C_g\|W-W'\|_F,\\
			\|\nabla g^k(W)-\nabla g^k(W')\|_F
			&\le L_g\|W-W'\|_F,\\
			\|\nabla_y f^k(y)-\nabla_y f^k(y')\|
			&\le L_f\|y-y'\|.
		\end{aligned}
	\end{equation}
	Therefore, for any $W,W'\in\mathbb R^{m\times n}$,
	\begin{align}
		&\|\nabla F^k(W)-\nabla F^k(W')\|_F\notag\\
		&=
		\norm{
			\nabla g^k(W)\bigl(\nabla_y f^k(g^k(W))\otimes I_n\bigr)
			-\nabla g^k(W')\bigl(\nabla_y f^k(g^k(W'))\otimes I_n\bigr)
		}_F\notag\\
		&\le
		\|\nabla g^k(W)-\nabla g^k(W')\|_F
		\|\nabla_y f^k(g^k(W))\|+
		\|\nabla g^k(W')\|_F
		\|\nabla_y f^k(g^k(W))-\nabla_y f^k(g^k(W'))\|\notag\\
		&\le
		C_fL_g\|W-W'\|_F
		+
		C_gL_f\|g^k(W)-g^k(W')\|\notag\\
		&\le
		C_fL_g\|W-W'\|_F
		+
		C_g^2L_f\|W-W'\|_F
		=L_F\|W-W'\|_F ,
	\end{align}
	where we used Assumptions~\ref{assm:bounded-grad}
	and~\ref{assm:vr-sample-smooth}.
	Averaging over clients also gives
	\begin{equation}
		\|\nabla F(W)-\nabla F(W')\|_F
		\le
		\frac1K\sum_{k=1}^K
		\|\nabla F^k(W)-\nabla F^k(W')\|_F
		\le
		L_F\|W-W'\|_F .
	\end{equation}
	The non-i.i.d. term is controlled once for later substitution. For any
	$W\in\mathbb R^{m\times n}$ and $k,j\in[K]$,
	the above population bounds together with
	Assumption~\ref{assm:vr-hetero} give
	\begin{align}
		&\norm{
			\nabla F^k(W)-\nabla F^j(W)
		}_F^2\notag\\
		&\le
		2\norm{
			\bigl(\nabla g^k(W)-\nabla g^j(W)\bigr)
			\bigl(\nabla_y f^k(g^k(W))\otimes I_n\bigr)
		}_F^2+
		2\norm{
			\nabla g^j(W)
			\bigl((\nabla_y f^k(g^k(W))-\nabla_y f^j(g^j(W)))\otimes I_n\bigr)
		}_F^2\notag\\
		&\le
		2C_f^2\Delta_{\nabla g}^2
		+2C_g^2
		\|\nabla_y f^k(g^k(W))-\nabla_y f^j(g^j(W))\|^2\notag\\
		&\le
		2C_f^2\Delta_{\nabla g}^2
		+4C_g^2\Delta_f^2
		+4C_g^2L_f^2\Delta_g^2,
	\end{align}
	and Jensen's inequality yields
	\begin{align}
		&\frac1K\sum_{k=1}^K
		\|\nabla F^k(W)-\nabla F(W)\|_F^2\notag\\
		&=
		\frac1K\sum_{k=1}^K
		\norm{
			\frac1K\sum_{j=1}^K
			\bigl(\nabla F^k(W)-\nabla F^j(W)\bigr)
		}_F^2\notag\\
		&\le
		\frac1{K^2}\sum_{k=1}^K\sum_{j=1}^K
		\|\nabla F^k(W)-\nabla F^j(W)\|_F^2\notag\\
		&\le
		2C_f^2\Delta_{\nabla g}^2
		+4C_g^2\Delta_f^2
		+4C_g^2L_f^2\Delta_g^2 ,
	\end{align}
	where we used Assumption~\ref{assm:vr-hetero}, the $L_f$-smoothness,
	and the bounded-gradient assumptions.
	
	Let $Z_t^k=H_t^k(v_t^k\otimes I_n)$ and
	$\bar Z_t=K^{-1}\sum_{j=1}^K Z_t^j$. Since $\bar Z_t$ is the Euclidean
	mean of $\{Z_t^k\}_{k=1}^K$, the previous bounds imply, for every $t$,
	\begin{align}
		&\frac1K\sum_{k=1}^K
		\|Z_t^k-\bar Z_t\|_F^2\notag\\
		&\le
		\frac1K\sum_{k=1}^K
		\|Z_t^k-\nabla F(\bar W_t)\|_F^2\notag\\
		&\le
		\frac3K\sum_{k=1}^K
		\|Z_t^k-\nabla F^k(W_t^k)\|_F^2
		+\frac3K\sum_{k=1}^K
		\|\nabla F^k(W_t^k)-\nabla F^k(\bar W_t)\|_F^2\notag\\
		&\qquad
		+\frac3K\sum_{k=1}^K
		\|\nabla F^k(\bar W_t)-\nabla F(\bar W_t)\|_F^2\notag\\
		&\le
		\frac{6C_f^2}{K}\sum_{k=1}^K
		\|H_t^k-\nabla g^k(W_t^k)\|_F^2
		+\frac{6C_g^2}{K}\sum_{k=1}^K
		\|v_t^k-\nabla_y f^k(g^k(W_t^k))\|^2\notag\\
		&\qquad
		+\frac{3L_F^2}{K}\sum_{k=1}^K
		\|W_t^k-\bar W_t\|_F^2
		+6C_f^2\Delta_{\nabla g}^2
		+12C_g^2\Delta_f^2
		+12C_g^2L_f^2\Delta_g^2 ,
	\end{align}
	where we used the optimality of the Euclidean mean and the preceding
	smoothness and heterogeneity bounds.
	Averaging this inequality over $t$, taking expectations, and applying
	Lemmas~\ref{lem:drift} and~\ref{lem:trackers}, we obtain
	\begin{align}
		&\frac1T\sum_{t=0}^{T-1}
		\frac1K\sum_{k=1}^K
		\mathbb E\|Z_t^k-\bar Z_t\|_F^2\notag\\
		&\le
		6C_f^2
		\left(
		\frac{\sigma_{\nabla g}^2}{\gamma Tb}
		+2\sigma_{\nabla g}^2\gamma
		+\frac{4L_g^2n\eta^2\tau}{\gamma}
		\right)
		+6C_g^2
		\Bigg(
		\frac{2\sigma_f^2}{\beta Tb}
		+\frac{2L_f^2\sigma_g^2}{\alpha Tb}
		+\frac{16L_f^2\sigma_g^2\alpha}{\beta Tb}
		+4L_f^2\sigma_g^2\alpha\notag\\
		&\qquad
		+\frac{16L_f^2\sigma_g^2\alpha^2}{\beta}
		+\frac{32L_f^2\sigma_g^2\alpha^3}{\beta}
		+4\sigma_f^2\beta
		+\frac{8L_f^2C_g^2n\eta^2\tau}{\alpha}
		+\frac{16L_f^2C_g^2n\eta^2\tau}{\beta}
		+\frac{64L_f^2C_g^2n\eta^2\alpha\tau}{\beta}
		\Bigg)\notag\\
		&\qquad
		+3L_F^2n\eta^2\tau^2
		+6C_f^2\Delta_{\nabla g}^2
		+12C_g^2\Delta_f^2
		+12C_g^2L_f^2\Delta_g^2\notag\\
		&=
		\frac{6C_f^2\sigma_{\nabla g}^2}{\gamma Tb}
		+\frac{12C_g^2\sigma_f^2}{\beta Tb}
		+\frac{12C_g^2L_f^2\sigma_g^2}{\alpha Tb}
		+\frac{96C_g^2L_f^2\sigma_g^2\alpha}{\beta Tb}
		+24C_g^2L_f^2\sigma_g^2\alpha
		+\frac{96C_g^2L_f^2\sigma_g^2\alpha^2}{\beta}
		+\frac{192C_g^2L_f^2\sigma_g^2\alpha^3}{\beta}\notag\\
		&\qquad
		+24C_g^2\sigma_f^2\beta
		+12C_f^2\sigma_{\nabla g}^2\gamma
		+\frac{48C_g^4L_f^2n\eta^2\tau}{\alpha}
		+\frac{96C_g^4L_f^2n\eta^2\tau}{\beta}
		+\frac{384C_g^4L_f^2n\eta^2\alpha\tau}{\beta}\notag\\
		&\qquad
		+\frac{24C_f^2L_g^2n\eta^2\tau}{\gamma}
		+3L_F^2n\eta^2\tau^2
		+6C_f^2\Delta_{\nabla g}^2
		+12C_g^2\Delta_f^2
		+12C_g^2L_f^2\Delta_g^2 .
	\end{align}
	Substituting the definition of $Z_t^k$ gives the stated bound.
\end{proof}

\begin{lemma}[Average product tracker error]\label{lem:M-error-first-order}
	Assume $0<\rho<1$ and let
	$\bar M_t:=K^{-1}\sum_{k=1}^K M_t^k$. Then
	\begin{align}
		&\frac1T\sum_{t=0}^{T-1}
		\mathbb E\|\bar M_t-\nabla F(\bar W_t)\|_F
		\notag\\
		&\le
		\frac{4C_gC_f}{\rho T}
		+\frac{L_F\sqrt n\,\eta}{\rho}
		+
		\Bigg[
		\frac{4C_f^2\sigma_{\nabla g}^2}{\gamma Tb}
		+\frac{8C_g^2\sigma_f^2}{\beta Tb}
		+\frac{8C_g^2L_f^2\sigma_g^2}{\alpha Tb}
		+\frac{64C_g^2L_f^2\sigma_g^2\alpha}{\beta Tb}
		+16C_g^2L_f^2\sigma_g^2\alpha
		+\frac{64C_g^2L_f^2\sigma_g^2\alpha^2}{\beta}
		\notag\\
		&\qquad
		+\frac{128C_g^2L_f^2\sigma_g^2\alpha^3}{\beta}
		+16C_g^2\sigma_f^2\beta
		+8C_f^2\sigma_{\nabla g}^2\gamma
		+\frac{32C_g^4L_f^2n\eta^2\tau}{\alpha}
		+\frac{64C_g^4L_f^2n\eta^2\tau}{\beta}
		+\frac{256C_g^4L_f^2n\eta^2\alpha\tau}{\beta}
		\notag\\
		&\qquad
		+\frac{16C_f^2L_g^2n\eta^2\tau}{\gamma}
		+4n(C_f^2L_g^2+C_g^4L_f^2)\eta^2\tau^2
		\Bigg]^{1/2}.
		\label{eq:app-vr-average-M-error}
	\end{align}
\end{lemma}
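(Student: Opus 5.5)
The plan is to unroll the client-averaged momentum recursion for $\bar M_t$ and compare it with $\nabla F(\bar W_t)$, following the pattern of Lemma~\ref{lem:app-average-gradient-error}. The bracketed quantity in \eqref{eq:app-vr-average-M-error} is exactly the right-hand side of Lemma~\ref{lem:product}. So the target is to show that $\bar M_t-\nabla F(\bar W_t)$ splits into three pieces, each matching one term of the statement:
\begin{itemize}
\item an initialization term decaying like $(1-\rho)^t$;
\item a gradient-drift term of size $L_F\sqrt n\,\eta$ per step, weighted by $(1-\rho)$;
\item a $\rho$-weighted average of the product-estimator errors $\bar Z_i-\nabla F(\bar W_i)$, where $\bar Z_i:=K^{-1}\sum_k H_i^k(v_i^k\otimes I_n)$.
\end{itemize}

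\textbf{Recursion for $\bar M_t$.} Averaging the local update and using that synchronization preserves client averages gives
\[
\bar M_{t+1}=(1-\rho)\bar M_t+\rho\bar Z_{t+1}
\]
at every iteration, including communication steps. Moreover, $\bar W_{t+1}=\bar W_t-\eta K^{-1}\sum_k U_t^k(V_t^k)^\top$, so Lemma~\ref{lem:muon-properties} gives $\|\bar W_{t+1}-\bar W_t\|_F\le\eta\sqrt n$. Writing $E_t:=\bar M_t-\nabla F(\bar W_t)$, we get
\[
E_{t+1}=(1-\rho)E_t+(1-\rho)\bigl(\nabla F(\bar W_t)-\nabla F(\bar W_{t+1})\bigr)+\rho\bigl(\bar Z_{t+1}-\nabla F(\bar W_{t+1})\bigr).
\]
Unrolling and taking norms with the triangle inequality yields
\[
\mathbb E\|E_t\|_F\le(1-\rho)^t\|E_0\|_F+\sum_{i=1}^t(1-\rho)^{t-i+1}L_F\eta\sqrt n+\sum_{i=1}^t\rho(1-\rho)^{t-i}\,\mathbb E\|\bar Z_i-\nabla F(\bar W_i)\|_F .
\]
I will use the $L_F$-smoothness of $F$ established in the proof of Lemma~\ref{lem:disagreement}.

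\textbf{Bounding the three pieces.} For the initialization, the projections ensure $\|H_0^k\|_F\le C_g$ and $\|v_0^k\|\le C_f$. These hold for $t\ge1$ by construction. At $t=0$ they are not enforced by the algorithm, so I would either note that the same projection can be applied at initialization or accept a constant replacing $C_gC_f$. Together with $\|\nabla F\|_F\le C_gC_f$, this gives $\|E_0\|_F\le 2C_gC_f$, and averaging over $t$ with $\sum_t(1-\rho)^t\le1/\rho$ produces $2C_gC_f/(\rho T)\le 4C_gC_f/(\rho T)$. The drift piece is at most $L_F\sqrt n\,\eta/\rho$ after summing the geometric series. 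For the last piece, I exchange the order of summation and use $\sum_{t\ge i}\rho(1-\rho)^{t-i}\le1$. This reduces the average to at most $\frac1T\sum_{i}\mathbb E\|\bar Z_i-\nabla F(\bar W_i)\|_F$, which by Jensen/Cauchy--Schwarz is at most
\[
\Bigl(\frac1T\sum_i\mathbb E\|\bar Z_i-\nabla F(\bar W_i)\|_F^2\Bigr)^{1/2}.
\]
Lemma~\ref{lem:product} then bounds this by exactly the bracketed square root.

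\textbf{Main obstacle.} The hard step is the indexing. The sum over $i$ runs from $1$ to $T-1$, while Lemma~\ref{lem:product} is stated over $t=0,\dots,T-1$. Dropping nonnegative terms and extending the range handles this, since the $i=0$ term only enlarges the bound. I will also check that no extra synchronization correction appears in the $\bar M$ recursion, unlike $\mathcal T_4$ in the FedCoMuon analysis. This holds because in Algorithm~\ref{alg:fedcomuon-vr} both $\bar W$ and $\bar M$ evolve exactly by the averaged updates.
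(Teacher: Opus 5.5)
Your argument is correct and is essentially the paper's proof. The paper writes the same one-step inequality for $\|\bar M_t-\nabla F(\bar W_t)\|_F$ and telescopes it instead of unrolling. That shifts the product-error sum to $t=1,\dots,T$ and costs an extra boundary term $2C_gC_f/T$, which is why the paper has the constant $4$ where your unrolling gives $2$. It then finishes with Cauchy--Schwarz and Lemma~\ref{lem:product}, exactly as you do.

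Your hedge at initialization is unnecessary, and neither of your fallbacks is needed: you only use $\mathbb E\|\bar M_0-\nabla F(W_0)\|_F$. By Jensen, $\mathbb E\|H_0^k\|_F\le C_g$. Conditionally on the $\xi_{0,j}^k$ (which fix $u_0^k$ and $H_0^k$), $\mathbb E\|v_0^k\|\le C_f$ by Assumption~\ref{assm:bounded-grad} and the independence of the $\zeta_{0,j}^k$. Hence $\mathbb E\|\bar M_0\|_F\le C_gC_f$ for the algorithm as stated, without adding a projection or changing the constant.
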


\begin{proof}
	\begin{align}
		&\|\bar M_{t+1}-\nabla F(\bar W_{t+1})\|_F\notag\\
		&=
		\norm{
			(1-\rho)\bigl(\bar M_t-\nabla F(\bar W_t)\bigr)
			+\rho\left[
			\frac1K\sum_{k=1}^K H_{t+1}^k(v_{t+1}^k\otimes I_n)
			-\nabla F(\bar W_{t+1})
			\right]
			+(1-\rho)\bigl(\nabla F(\bar W_t)-\nabla F(\bar W_{t+1})\bigr)
		}_F\notag\\
		&\le
		(1-\rho)\|\bar M_t-\nabla F(\bar W_t)\|_F+
		\rho\norm{
			\frac1K\sum_{k=1}^K H_{t+1}^k(v_{t+1}^k\otimes I_n)
			-\nabla F(\bar W_{t+1})
		}_F
		+
		(1-\rho)\|\nabla F(\bar W_t)-\nabla F(\bar W_{t+1})\|_F\notag\\
		&\le
		(1-\rho)\|\bar M_t-\nabla F(\bar W_t)\|_F+
		\rho\norm{
			\frac1K\sum_{k=1}^K H_{t+1}^k(v_{t+1}^k\otimes I_n)
			-\nabla F(\bar W_{t+1})
		}_F
		+L_F\|\bar W_{t+1}-\bar W_t\|_F\notag\\
		&\le
		(1-\rho)\|\bar M_t-\nabla F(\bar W_t)\|_F+
		\rho\norm{
			\frac1K\sum_{k=1}^K H_{t+1}^k(v_{t+1}^k\otimes I_n)
			-\nabla F(\bar W_{t+1})
		}_F+
		\frac{L_F\eta}{K}\sum_{k=1}^K\|Q(M_t^k)\|_F\notag\\
		&\le
		(1-\rho)\|\bar M_t-\nabla F(\bar W_t)\|_F+
		\rho\norm{
			\frac1K\sum_{k=1}^K H_{t+1}^k(v_{t+1}^k\otimes I_n)
			-\nabla F(\bar W_{t+1})
		}_F
		+L_F\eta\sqrt n ,
	\end{align}
	where we used the $L_F$-smoothness of $F$, the triangle inequality,
	and $\|U_t^k(V_t^k)^\top\|_F\le\sqrt n$.
	Consequently,
	\begin{align}
		&\frac1T\sum_{t=0}^{T-1}
		\mathbb E\|\bar M_t-\nabla F(\bar W_t)\|_F\notag\\
		&\le
		\frac1{\rho T}\sum_{t=0}^{T-1}
		\mathbb E\|\bar M_t-\nabla F(\bar W_t)\|_F
		-\frac1{\rho T}\sum_{t=0}^{T-1}
		\mathbb E\|\bar M_{t+1}-\nabla F(\bar W_{t+1})\|_F\notag\\
		&\quad+
		\frac1T\sum_{t=0}^{T-1}
		\mathbb E\norm{
			\frac1K\sum_{k=1}^K H_{t+1}^k(v_{t+1}^k\otimes I_n)
			-\nabla F(\bar W_{t+1})
		}_F
		+\frac{L_F\eta\sqrt n}{\rho}\notag\\
		&\le
		\frac{\mathbb E\|\bar M_0-\nabla F(W_0)\|_F}{\rho T}
		+\frac1T\sum_{t=1}^{T}
		\mathbb E\norm{
			\frac1K\sum_{k=1}^K H_t^k(v_t^k\otimes I_n)
			-\nabla F(\bar W_t)
		}_F
		+\frac{L_F\eta\sqrt n}{\rho}\notag\\
		&\le
		\frac{2C_gC_f}{\rho T}
		+\frac{2C_gC_f}{T}
		+\frac1T\sum_{t=0}^{T-1}
		\mathbb E\norm{
			\frac1K\sum_{k=1}^K H_t^k(v_t^k\otimes I_n)
			-\nabla F(\bar W_t)
		}_F
		+\frac{L_F\eta\sqrt n}{\rho}\notag\\
		&\le
		\frac{4C_gC_f}{\rho T}
		+\frac{L_F\eta\sqrt n}{\rho}
		+\frac1T\sum_{t=0}^{T-1}
		\mathbb E\norm{
			\frac1K\sum_{k=1}^K H_t^k(v_t^k\otimes I_n)
			-\nabla F(\bar W_t)
		}_F .
	\end{align}
	By Cauchy--Schwarz and Lemma~\ref{lem:product}, the remaining average satisfies
	\begin{align}
		&\frac1T\sum_{t=0}^{T-1}
		\mathbb E
		\norm{
			\frac1K\sum_{k=1}^K
			H_t^k(v_t^k\otimes I_n)
			-
			\nabla F(\bar W_t)
		}_F
		\notag\\
		\le{}&
		\Bigg[
		\frac{4C_f^2\sigma_{\nabla g}^2}{\gamma Tb}
		+\frac{8C_g^2\sigma_f^2}{\beta Tb}
		+\frac{8C_g^2L_f^2\sigma_g^2}{\alpha Tb}
		+\frac{64C_g^2L_f^2\sigma_g^2\alpha}{\beta Tb}
		+16C_g^2L_f^2\sigma_g^2\alpha
		\notag\\
		&\qquad
		+\frac{64C_g^2L_f^2\sigma_g^2\alpha^2}{\beta}
		+\frac{128C_g^2L_f^2\sigma_g^2\alpha^3}{\beta}
		+16C_g^2\sigma_f^2\beta
		+8C_f^2\sigma_{\nabla g}^2\gamma
		+\frac{32C_g^4L_f^2n\eta^2\tau}{\alpha}
		\notag\\
		&\qquad
		+\frac{64C_g^4L_f^2n\eta^2\tau}{\beta}
		+\frac{256C_g^4L_f^2n\eta^2\alpha\tau}{\beta}
		+\frac{16C_f^2L_g^2n\eta^2\tau}{\gamma}
		+4n(C_f^2L_g^2+C_g^4L_f^2)\eta^2\tau^2
		\Bigg]^{1/2}.
	\end{align}
	Combining the preceding two displays proves
	\eqref{eq:app-vr-average-M-error}.
\end{proof}

\begin{lemma}[Client momentum disagreement]
	\label{lem:M-disagreement-first-order}
	Assume $0<\rho<1$. Let
	$\bar M_t:=K^{-1}\sum_{k=1}^K M_t^k$ and
	$s(t):=\tau\lfloor t/\tau\rfloor$. Then
	\begin{align}
		&\frac1{KT}\sum_{t=0}^{T-1}\sum_{k=1}^K
		\mathbb E\|M_t^k-\bar M_t\|_F
		\notag\\
		&\le
		\rho\tau
		\Bigg[
		\frac{6C_f^2\sigma_{\nabla g}^2}{\gamma Tb}
		+\frac{12C_g^2\sigma_f^2}{\beta Tb}
		+\frac{12C_g^2L_f^2\sigma_g^2}{\alpha Tb}
		\notag\\
		&\qquad
		+\frac{96C_g^2L_f^2\sigma_g^2\alpha}{\beta Tb}
		+24C_g^2L_f^2\sigma_g^2\alpha
		+\frac{96C_g^2L_f^2\sigma_g^2\alpha^2}{\beta}
		+\frac{192C_g^2L_f^2\sigma_g^2\alpha^3}{\beta}
		+24C_g^2\sigma_f^2\beta
		+12C_f^2\sigma_{\nabla g}^2\gamma
		\notag\\
		&\qquad
		+\frac{48C_g^4L_f^2n\eta^2\tau}{\alpha}
		+\frac{96C_g^4L_f^2n\eta^2\tau}{\beta}
		+\frac{384C_g^4L_f^2n\eta^2\alpha\tau}{\beta}
		+\frac{24C_f^2L_g^2n\eta^2\tau}{\gamma}
		+3L_F^2n\eta^2\tau^2
		\notag\\
		&\qquad
		+6C_f^2\Delta_{\nabla g}^2
		+12C_g^2\Delta_f^2
		+12C_g^2L_f^2\Delta_g^2
		\Bigg]^{1/2}.
		\label{eq:app-vr-local-M-disagreement}
	\end{align}
\end{lemma}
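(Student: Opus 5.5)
The argument mirrors the momentum-drift part of Lemma~\ref{lem:app-drift}, with the first-moment bound on the per-client product disagreement replaced by the second-moment bound of Lemma~\ref{lem:disagreement}.

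\textbf{Step 1 (unroll the deviation within a block).} Write $Z_i^k:=H_i^k(v_i^k\otimes I_n)$ and $\bar Z_i:=K^{-1}\sum_{j}Z_i^j$. Fix $t$ and its block start $s(t)$. At synchronization times, and at $t=0$ where Algorithm~\ref{alg:fedcomuon-vr} initializes $M_0^k=\bar M_0$, we have $M_{s(t)}^k=\bar M_{s(t)}$. Averaging the recursion $M_{i}^k=(1-\rho)M_{i-1}^k+\rho Z_{i}^k$ over clients and subtracting gives the exact identity
\[
M_t^k-\bar M_t=\sum_{i=s(t)+1}^{t}\rho(1-\rho)^{t-i}\bigl(Z_i^k-\bar Z_i\bigr).
\]
The synchronization convention matters here. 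The averaging step at multiples of $\tau$ resets the disagreement to zero, so the sum has at most $t-s(t)\le\tau-1$ terms.

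\textbf{Step 2 (triangle inequality and Jensen).} Take Frobenius norms, the triangle inequality, and $(1-\rho)^{t-i}\le1$. This yields
\[
\frac1K\sum_k\mathbb E\|M_t^k-\bar M_t\|_F\le\rho\sum_{i=s(t)+1}^{t}\frac1K\sum_k\mathbb E\|Z_i^k-\bar Z_i\|_F.
\]
Average over $t=0,\dots,T-1$ and exchange the order of summation. Each index $i$ appears for at most $\tau$ values of $t$, namely those in its own block, so the right side is at most $\frac{\rho\tau}{T}\sum_{i=1}^{T}\frac1K\sum_k\mathbb E\|Z_i^k-\bar Z_i\|_F$. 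Next apply Cauchy--Schwarz (Jensen) twice, over clients and over time, to move from first moments to the root-mean-square quantity
\[
\Bigl(\frac1T\sum_i\frac1K\sum_k\mathbb E\|Z_i^k-\bar Z_i\|_F^2\Bigr)^{1/2}.
\]

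\textbf{Step 3 (plug in Lemma~\ref{lem:disagreement}).} Lemma~\ref{lem:disagreement} bounds exactly this averaged squared disagreement by the bracketed expression in \eqref{eq:app-vr-local-M-disagreement}. Substituting gives the claim.

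\textbf{Main obstacle: an index shift.} Step 2 produces indices $i=1,\dots,T$, while Lemma~\ref{lem:disagreement} averages over $t=0,\dots,T-1$. I would handle this in one of two ways.
\begin{itemize}
\item Note that the pointwise bound in the proof of Lemma~\ref{lem:disagreement} holds at every $t$. The tracker recursions of Lemma~\ref{lem:trackers} extend to the horizon $T+1$, and the drift bound of Lemma~\ref{lem:drift} already holds up to $t=T$, so the same constants hold for the shifted window.
\item Alternatively, observe that index $i$ enters only through times $t\ge i$ within its block. This means $i=T$ never contributes, since $t\le T-1$, and $i=0$ never contributes either, since $i\ge s(t)+1$. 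The effective range is therefore contained in $\{1,\dots,T-1\}\subset\{0,\dots,T-1\}$, and Lemma~\ref{lem:disagreement} applies directly.
\end{itemize}
The second observation is the cleaner route, and I expect it to close the argument with the constant $\rho\tau$ exactly as stated.
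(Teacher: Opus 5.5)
Your proposal is correct and takes essentially the paper's route: unroll $M_t^k-\bar M_t$ from the last synchronization, apply the triangle inequality with $(1-\rho)^{t-\ell}\le 1$, count each index at most $\tau$ times, then pass to the root-mean-square (Jensen) and invoke Lemma~\ref{lem:disagreement}. As your second option observes, the index shift is a non-issue because $s(t)+1\le i\le t\le T-1$; the paper simply writes the exchanged sum over $\ell=0,\dots,T-1$, so your first workaround is unnecessary.
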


\begin{proof}
	
	Averaging the momentum recursion over the clients, subtracting the resulting
	identity from the local recursion, and using the synchronization condition
	$M_{s(t)}^k=\bar M_{s(t)}$, we obtain
	\begin{align}
		&\frac1{KT}\sum_{t=0}^{T-1}\sum_{k=1}^K
		\mathbb E\|M_t^k-\bar M_t\|_F\notag\\
		&=
		\frac1{KT}\sum_{t=0}^{T-1}\sum_{k=1}^K
		\mathbb E\norm{
			\sum_{\ell=s(t)+1}^{t}
			\rho(1-\rho)^{t-\ell}\left[
			H_\ell^k(v_\ell^k\otimes I_n)
			-\frac1K\sum_{j=1}^K H_\ell^j(v_\ell^j\otimes I_n)
			\right]
		}_F\notag\\
		&\le
		\frac{\rho}{KT}\sum_{t=0}^{T-1}\sum_{k=1}^K
		\sum_{\ell=s(t)+1}^{t}(1-\rho)^{t-\ell}
		\mathbb E\norm{
			H_\ell^k(v_\ell^k\otimes I_n)
			-\frac1K\sum_{j=1}^K H_\ell^j(v_\ell^j\otimes I_n)
		}_F\notag\\
		&\le
		\frac{\rho}{T}\sum_{t=0}^{T-1}
		\sum_{\ell=s(t)+1}^{t}
		\frac1K\sum_{k=1}^K
		\mathbb E\norm{
			H_\ell^k(v_\ell^k\otimes I_n)
			-\frac1K\sum_{j=1}^K H_\ell^j(v_\ell^j\otimes I_n)
		}_F\notag\\
		&\le
		\rho\tau\frac1T\sum_{\ell=0}^{T-1}
		\frac1K\sum_{k=1}^K
		\mathbb E\norm{
			H_\ell^k(v_\ell^k\otimes I_n)
			-\frac1K\sum_{j=1}^K H_\ell^j(v_\ell^j\otimes I_n)
		}_F .
	\end{align}
	The inequalities above use the triangle inequality,
	$(1-\rho)^{t-\ell}\le1$, and the fact that each $\ell$ occurs at most
	$\tau$ times. Lemma~C.4 then bounds the remaining term:
	\begin{align}
		&\frac1T\sum_{\ell=0}^{T-1}
		\frac1K\sum_{k=1}^K
		\mathbb E
		\norm{
			H_\ell^k(v_\ell^k\otimes I_n)
			-
			\frac1K\sum_{j=1}^K
			H_\ell^j(v_\ell^j\otimes I_n)
		}_F
		\notag\\
		\le{}&
		\Bigg[
		\frac{6C_f^2\sigma_{\nabla g}^2}{\gamma Tb}
		+\frac{12C_g^2\sigma_f^2}{\beta Tb}
		+\frac{12C_g^2L_f^2\sigma_g^2}{\alpha Tb}
		+\frac{96C_g^2L_f^2\sigma_g^2\alpha}{\beta Tb}
		+24C_g^2L_f^2\sigma_g^2\alpha
		\notag\\
		&\quad
		+\frac{96C_g^2L_f^2\sigma_g^2\alpha^2}{\beta}
		+\frac{192C_g^2L_f^2\sigma_g^2\alpha^3}{\beta}
		+24C_g^2\sigma_f^2\beta
		+12C_f^2\sigma_{\nabla g}^2\gamma
		+\frac{48C_g^4L_f^2n\eta^2\tau}{\alpha}
		+\frac{96C_g^4L_f^2n\eta^2\tau}{\beta}
		\notag\\
		&\quad
		+\frac{384C_g^4L_f^2n\eta^2\alpha\tau}{\beta}
		+\frac{24C_f^2L_g^2n\eta^2\tau}{\gamma}
		+3L_F^2n\eta^2\tau^2
		+6C_f^2\Delta_{\nabla g}^2
		+12C_g^2\Delta_f^2
		+12C_g^2L_f^2\Delta_g^2
		\Bigg]^{1/2}.
	\end{align}
	Combining the last two displays proves
	\eqref{eq:app-vr-local-M-disagreement}.
\end{proof}

\begin{lemma}[Muon descent]\label{lem:descent}
	For every $t$,
	\begin{equation}
		\label{eq:app-vr-descent}
		\begin{aligned}
			\mathbb E\|\nabla F(\bar W_t)\|_F
			&\le
			\frac{\mathbb E[F(\bar W_t)-F(\bar W_{t+1})]}{\eta}+\frac{2\sqrt n}{K}\sum_{k=1}^K
			\mathbb E\|M_t^k-\bar M_t\|_F\\
			&\quad
			+2\sqrt n\,\mathbb E\|\bar M_t-\nabla F(\bar W_t)\|_F+\frac{L_F n\eta}{2}.
		\end{aligned}
	\end{equation}
\end{lemma}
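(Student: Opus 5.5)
The plan is to mirror the proof of Lemma~\ref{lem:app-descent} for FedCoMuon, now using the averaged momentum $\bar M_t$ of Algorithm~\ref{alg:fedcomuon-vr} in place of the local gradient decomposition. The starting point is the averaged model recursion. At a non-communication step it is immediate, and at a synchronization step it holds because averaging preserves the mean. In both cases
\[
\bar W_{t+1}=\bar W_t-\frac{\eta}{K}\sum_{k=1}^K U_t^k(V_t^k)^\top .
\]
The $L_F$-smoothness of $F$ was established in the proof of Lemma~\ref{lem:disagreement} under Assumptions~\ref{assm:bounded-grad} and~\ref{assm:vr-sample-smooth}, so the descent lemma gives
\[
F(\bar W_{t+1})\le F(\bar W_t)-\frac{\eta}{K}\sum_{k=1}^K\langle\nabla F(\bar W_t),U_t^k(V_t^k)^\top\rangle+\frac{L_F\eta^2}{2}\Bigl\|\frac1K\sum_{k}U_t^k(V_t^k)^\top\Bigr\|_F^2 .
\]
For the quadratic term, Jensen's inequality and $\|U_t^k(V_t^k)^\top\|_F^2\le n$ from Lemma~\ref{lem:muon-properties} give the bound $L_Fn\eta^2/2$.

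For the inner product, I apply the last inequality of Lemma~\ref{lem:muon-properties} to each client with $A=\nabla F(\bar W_t)$ and $B=M_t^k$. This yields
\[
\langle\nabla F(\bar W_t),U_t^k(V_t^k)^\top\rangle\ge\|\nabla F(\bar W_t)\|_F-2\sqrt n\,\|\nabla F(\bar W_t)-M_t^k\|_F .
\]
The error term then splits by the triangle inequality:
\[
\|\nabla F(\bar W_t)-M_t^k\|_F\le\|M_t^k-\bar M_t\|_F+\|\bar M_t-\nabla F(\bar W_t)\|_F .
\]
Averaging over $k$ produces exactly the two error terms in~\eqref{eq:app-vr-descent}.

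Rearranging gives $\eta\|\nabla F(\bar W_t)\|_F\le F(\bar W_t)-F(\bar W_{t+1})+2\eta\sqrt n(\cdots)+L_Fn\eta^2/2$. Dividing by $\eta$ and taking total expectation yields the claim.

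The only point that needs care is the synchronization step. There $W_{t+1}^k$ is overwritten by the average, so the per-client increments are not $-\eta U_t^k(V_t^k)^\top$, but the average increment still is. This matters because the descent lemma is applied only to $\bar W$. The order of operations in Algorithm~\ref{alg:fedcomuon-vr} also matters: the $M$ used at step $t$ is the one before the later momentum sync, and the sync at $t+1$ does not alter $\bar M$. Beyond this, the argument is routine.
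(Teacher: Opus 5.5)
Your proposal is correct and follows essentially the paper's argument: $L_F$-smoothness applied to the averaged recursion $\bar W_{t+1}=\bar W_t-\frac{\eta}{K}\sum_k U_t^k(V_t^k)^\top$, the bound $\|U_t^k(V_t^k)^\top\|_F^2\le n$ for the quadratic term, and a triangle-inequality split of $\|M_t^k-\nabla F(\bar W_t)\|_F$ through $\bar M_t$. The only cosmetic difference is that you use the packaged inequality $\langle A,U_BV_B^\top\rangle\ge\|A\|_F-2\sqrt n\|A-B\|_F$ from Lemma~\ref{lem:muon-properties} (as in Lemma~\ref{lem:app-descent}), whereas the paper's proof works directly with nuclear norms via $\langle M_t^k,U_t^k(V_t^k)^\top\rangle=\|M_t^k\|_*$, duality, and the reverse triangle inequality, then converts with $\|\cdot\|_F\le\|\cdot\|_*\le\sqrt n\|\cdot\|_F$ at the end, reaching the same constant $2\sqrt n$.
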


\begin{proof}
	By the $L_F$-smoothness of $F$,
	\begin{align}
		F(\bar W_{t+1})
		&\le
		F(\bar W_t)
		-\eta
		\left\langle
		\nabla F(\bar W_t),
		\frac1K\sum_{k=1}^K U_t^k(V_t^k)^\top
		\right\rangle
		+\frac{L_F\eta^2}{2}
		\norm{
			\frac1K\sum_{k=1}^K U_t^k(V_t^k)^\top
		}_F^2\notag\\
		&=
		F(\bar W_t)
		-\frac{\eta}{K}\sum_{k=1}^K
		\left\langle \nabla F(\bar W_t), U_t^k(V_t^k)^\top\right\rangle
		+\frac{L_F\eta^2}{2}
		\norm{
			\frac1K\sum_{k=1}^K U_t^k(V_t^k)^\top
		}_F^2\notag\\
		&=
		F(\bar W_t)
		-\frac{\eta}{K}\sum_{k=1}^K
		\left\langle M_t^k,U_t^k(V_t^k)^\top\right\rangle
		-\frac{\eta}{K}\sum_{k=1}^K
		\left\langle
		\nabla F(\bar W_t)-M_t^k,U_t^k(V_t^k)^\top
		\right\rangle\notag+
		\frac{L_F\eta^2}{2}
		\norm{
			\frac1K\sum_{k=1}^K U_t^k(V_t^k)^\top
		}_F^2\notag\\
		&\le
		F(\bar W_t)
		-\frac{\eta}{K}\sum_{k=1}^K
		\|M_t^k\|_*
		+\frac{\eta}{K}\sum_{k=1}^K
		\|\nabla F(\bar W_t)-M_t^k\|_*+
		\frac{L_F\eta^2}{2K}
		\sum_{k=1}^K\|U_t^k(V_t^k)^\top\|_F^2\notag\\
		&\le
		F(\bar W_t)
		-\frac{\eta}{K}\sum_{k=1}^K
		\left(
		\|\nabla F(\bar W_t)\|_*-\|\nabla F(\bar W_t)-M_t^k\|_*
		\right)
		\notag\\
		&\quad
		+\frac{\eta}{K}\sum_{k=1}^K
		\|\nabla F(\bar W_t)-M_t^k\|_*+
		\frac{L_F\eta^2}{2K}
		\sum_{k=1}^K\|U_t^k(V_t^k)^\top\|_F^2\notag\\
		&\le
		F(\bar W_t)
		-\eta\|\nabla F(\bar W_t)\|_*
		+\frac{2\eta}{K}\sum_{k=1}^K
		\|M_t^k-\nabla F(\bar W_t)\|_*
		+\frac{L_Fn\eta^2}{2}\notag\\
		&\le
		F(\bar W_t)
		-\eta\|\nabla F(\bar W_t)\|_F
		+\frac{2\eta\sqrt n}{K}\sum_{k=1}^K
		\|M_t^k-\bar M_t\|_F
		+2\eta\sqrt n\,\|\bar M_t-\nabla F(\bar W_t)\|_F
		+\frac{L_Fn\eta^2}{2},
	\end{align}
	where we used Lemma~\ref{lem:muon-properties}, the triangle inequality,
	and the standard nuclear--Frobenius norm relations.
	Rearranging and taking expectation proves the claim.
\end{proof}

\begin{theorem}[Convergence of FedCoMuon-VR with first-order product tracker]
	\label{thm:rate-first-order}
	Suppose Assumptions~\ref{assm:lower}--\ref{assm:bounded-grad},
	\ref{assm:vr-sample-smooth},
	and~\ref{assm:vr-hetero} hold. Let $L_F=C_fL_g+C_g^2L_f$.
	Assume $\eta>0$, $0<\alpha,\beta,\gamma,\rho<1$, $b\ge1$, and $\tau>0$.
	Then, for any $T\ge1$,
	\begin{align}
		\label{eq:app-vr-fedcomuon-vr-convergence}
		&\frac1T\sum_{t=0}^{T-1}
		\mathbb E\|\nabla F(\bar W_t)\|_F
		\notag\\
		&\le
		\frac{F(\bar W_0)-F_*}{\eta T}
		+\frac{L_Fn\eta}{2}
		+2\sqrt n
		\Bigg[
		\frac{4C_gC_f}{\rho T}
		+\frac{L_F\sqrt n\,\eta}{\rho}
		+(2+\sqrt6\,\rho\tau)
		\notag\\
		&\qquad\quad\times\Bigg(
		C_f^2\sigma_{\nabla g}^2
		\left(\frac1{\gamma Tb}+2\gamma\right)
		+
		2C_g^2\sigma_f^2
		\left(\frac1{\beta Tb}+2\beta\right)
		\notag\\
		&\quad
		+
		2C_g^2L_f^2\sigma_g^2
		\left[
		\frac1{\alpha Tb}
		+2\alpha
		\right.
		\notag\\
		&\qquad\qquad\left.
		+\frac{8\alpha}{\beta Tb}
		+\frac{8\alpha^2}{\beta}(1+2\alpha)
		\right]
		+
		4n\eta^2\tau
		\left(
		\frac{2C_g^4L_f^2}{\alpha}
		+
		\frac{4C_g^4L_f^2}{\beta}
		+
		\frac{16C_g^4L_f^2\alpha}{\beta}
		+
		\frac{C_f^2L_g^2}{\gamma}
		\right)
		\notag\\
		&\qquad\qquad
		+
		n\eta^2\tau^2
		\left(
		C_f^2L_g^2
		+
		C_g^4L_f^2
		+
		\frac{L_F^2}{2}
		\right)
		\Bigg)^{1/2}
		+
		\sqrt6\,\rho\tau
		\left(
		C_f^2\Delta_{\nabla g}^2
		+
		2C_g^2\Delta_f^2
		+
		2C_g^2L_f^2\Delta_g^2
		\right)^{1/2}
		\Bigg].
	\end{align}
\end{theorem}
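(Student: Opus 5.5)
The plan is to telescope the one-step bound of Lemma~\ref{lem:descent} over $t=0,\ldots,T-1$ and then substitute the two momentum-error lemmas. Averaging \eqref{eq:app-vr-descent} over $t$ and using $F(\bar W_T)\ge F_*$ (Assumption~\ref{assm:lower}) gives
\begin{align*}
\frac1T\sum_{t=0}^{T-1}\mathbb E\|\nabla F(\bar W_t)\|_F
&\le \frac{F(\bar W_0)-F_*}{\eta T}+\frac{L_Fn\eta}{2}
+2\sqrt n\,\frac1{KT}\sum_{t=0}^{T-1}\sum_{k=1}^K\mathbb E\|M_t^k-\bar M_t\|_F\\
&\quad+2\sqrt n\,\frac1T\sum_{t=0}^{T-1}\mathbb E\|\bar M_t-\nabla F(\bar W_t)\|_F .
\end{align*}
The first two terms already match the statement. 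It remains to bound the two averaged errors inside the factor $2\sqrt n$.

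For the averaged tracking error I will invoke Lemma~\ref{lem:M-error-first-order}. This produces $\frac{4C_gC_f}{\rho T}+\frac{L_F\sqrt n\,\eta}{\rho}$ plus the square root of the bracket from Lemma~\ref{lem:product}; call that bracket $P$. For the client disagreement I will invoke Lemma~\ref{lem:M-disagreement-first-order}. This gives $\rho\tau$ times the square root of the bracket from Lemma~\ref{lem:disagreement}; call that bracket $D$. Write $D=D_\sigma+D_\Delta$, where $D_\Delta=6C_f^2\Delta_{\nabla g}^2+12C_g^2\Delta_f^2+12C_g^2L_f^2\Delta_g^2$. Then $\sqrt{a+b}\le\sqrt a+\sqrt b$ gives $\rho\tau\sqrt D\le\rho\tau\sqrt{D_\sigma}+\sqrt6\,\rho\tau\bigl(C_f^2\Delta_{\nabla g}^2+2C_g^2\Delta_f^2+2C_g^2L_f^2\Delta_g^2\bigr)^{1/2}$, which is exactly the last term of the theorem.

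The main bookkeeping step is to show that both $P$ and $D_\sigma$ are bounded by constant multiples of one common quantity $S$, the expression in the big round parentheses of the theorem. The target is $P\le 4S$ and $D_\sigma\le 6S$, so that $\sqrt P+\rho\tau\sqrt{D_\sigma}\le(2+\sqrt6\,\rho\tau)\sqrt S$. I will verify this coefficient by coefficient, grouping the terms as follows.
\begin{itemize}
\item[(a)] The $\sigma_{\nabla g}^2$ terms. $P$ contains $4C_f^2\sigma_{\nabla g}^2(\frac1{\gamma Tb}+2\gamma)$ and $D_\sigma$ contains $6C_f^2\sigma_{\nabla g}^2(\cdot)$, matching $4\times$ and $6\times$ the first term of $S$.
\item[(b)] The $\sigma_f^2$ terms. $P$ has $8C_g^2\sigma_f^2(\frac1{\beta Tb}+2\beta)=4\cdot 2C_g^2\sigma_f^2(\cdot)$, and $D_\sigma$ has $12=6\cdot2$.
\item[(c)] The $\sigma_g^2$ terms. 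Grouping $8\bigl[\frac1{\alpha Tb}+2\alpha+\frac{8\alpha}{\beta Tb}+\frac{8\alpha^2}{\beta}+\frac{16\alpha^3}{\beta}\bigr]$ gives $4\cdot 2[\cdots]$, and $D_\sigma$ has $12=6\cdot2$.
\item[(d)] The $\eta^2\tau$ terms. In $P$, $\frac{32C_g^4L_f^2}{\alpha}=4\cdot4\cdot2$, $\frac{64}{\beta}=4\cdot4\cdot4$, $\frac{256\alpha}{\beta}=4\cdot4\cdot16$, and $\frac{16C_f^2L_g^2}{\gamma}=4\cdot4$. In $D_\sigma$ the same terms carry the factor $6$.
\item[(e)] The $\eta^2\tau^2$ terms. $P$ has $4n(C_f^2L_g^2+C_g^4L_f^2)\le4n\eta^2\tau^2(\cdots+\frac{L_F^2}{2})$. $D_\sigma$ has $3L_F^2n\eta^2\tau^2=6\cdot\frac{L_F^2}{2}n\eta^2\tau^2$.
\end{itemize}
Checking these matchings, and in particular making sure nothing in $D_\sigma$ exceeds the factor $6$, is where I expect any slip to occur. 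All terms are nonnegative, so dominating each piece suffices.

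Collecting everything inside the $2\sqrt n[\cdot]$ bracket then yields exactly \eqref{eq:app-vr-fedcomuon-vr-convergence}.
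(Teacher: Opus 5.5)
Your proposal is correct and is essentially the paper's proof. You average Lemma~\ref{lem:descent} with $F(\bar W_T)\ge F_*$, insert Lemmas~\ref{lem:M-error-first-order} and~\ref{lem:M-disagreement-first-order}, and absorb the two brackets via $P\le 4S$, $D\le 6S+6\bigl(C_f^2\Delta_{\nabla g}^2+2C_g^2\Delta_f^2+2C_g^2L_f^2\Delta_g^2\bigr)$ and $\sqrt{a+b}\le\sqrt a+\sqrt b$, exactly as the paper does, and your coefficient checks (a)--(e) all hold (in (e) the left-hand side should also carry the factor $\eta^2\tau^2$, a harmless typo).
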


\begin{proof}
	By Lemma~\ref{lem:descent}, for every $t$,
	\begin{align}
		\mathbb E\|\nabla F(\bar W_t)\|_F
		&\le
		\frac{\mathbb E[F(\bar W_t)-F(\bar W_{t+1})]}{\eta}
		+\frac{2\sqrt n}{K}\sum_{k=1}^K
		\mathbb E\|M_t^k-\bar M_t\|_F\notag\\
		&\quad+2\sqrt n\,\mathbb E\|\bar M_t-\nabla F(\bar W_t)\|_F
		+
		\frac{L_Fn\eta}{2}.
	\end{align}
	After summing this inequality over $t=0,\ldots,T-1$, dividing by $T$, and using
	$F(\bar W_T)\ge F_*$, it remains to bound the two averaged error terms.
	Applying Lemmas~\ref{lem:M-error-first-order}
	and~\ref{lem:M-disagreement-first-order} gives
	\begin{align}
		&\frac{2\sqrt n}{KT}\sum_{t=0}^{T-1}\sum_{k=1}^K
		\mathbb E\|M_t^k-\bar M_t\|_F
		+\frac{2\sqrt n}{T}\sum_{t=0}^{T-1}
		\mathbb E\|\bar M_t-\nabla F(\bar W_t)\|_F
		\notag\\
		\le{}&
		2\sqrt n
		\Bigg[
		\frac{4C_gC_f}{\rho T}
		+\frac{L_F\sqrt n\,\eta}{\rho}
		+\Bigg[
		4C_f^2\sigma_{\nabla g}^2\left(\frac1{\gamma Tb}+2\gamma\right)
		+8C_g^2\sigma_f^2\left(\frac1{\beta Tb}+2\beta\right)
		\notag\\
		&\quad
		+8C_g^2L_f^2\sigma_g^2\left[
		\frac1{\alpha Tb}+2\alpha
		+\frac{8\alpha}{\beta Tb}
		+\frac{8\alpha^2}{\beta}(1+2\alpha)
		\right]
		\notag\\
		&\quad
		+16n\eta^2\tau\left(
		\frac{2C_g^4L_f^2}{\alpha}
		+\frac{4C_g^4L_f^2}{\beta}
		+\frac{16C_g^4L_f^2\alpha}{\beta}
		+\frac{C_f^2L_g^2}{\gamma}
		\right)
		+4n(C_f^2L_g^2+C_g^4L_f^2)\eta^2\tau^2
		\Bigg]^{1/2}
		{}+\rho\tau\Bigg[
		\notag\\
		&\quad
		6C_f^2\sigma_{\nabla g}^2\left(\frac1{\gamma Tb}+2\gamma\right)
		+12C_g^2\sigma_f^2\left(\frac1{\beta Tb}+2\beta\right)
		+12C_g^2L_f^2\sigma_g^2\left[
		\frac1{\alpha Tb}+2\alpha
		+\frac{8\alpha}{\beta Tb}
		+\frac{8\alpha^2}{\beta}(1+2\alpha)
		\right]
		\notag\\
		&\quad
		+24n\eta^2\tau\left(
		\frac{2C_g^4L_f^2}{\alpha}
		+\frac{4C_g^4L_f^2}{\beta}
		+\frac{16C_g^4L_f^2\alpha}{\beta}
		+\frac{C_f^2L_g^2}{\gamma}
		\right)
		+3L_F^2n\eta^2\tau^2
		+6C_f^2\Delta_{\nabla g}^2
		+12C_g^2\Delta_f^2
		+12C_g^2L_f^2\Delta_g^2
		\Bigg]^{1/2}
		\Bigg]
		\notag\\
		\le{}&
		2\sqrt n
		\Bigg[
		\frac{4C_gC_f}{\rho T}
		+\frac{L_F\sqrt n\,\eta}{\rho}
		+
		\Bigg[
		4
		\Bigg(
		C_f^2\sigma_{\nabla g}^2
		\left(\frac1{\gamma Tb}+2\gamma\right)
		+
		2C_g^2\sigma_f^2
		\left(\frac1{\beta Tb}+2\beta\right)
		\notag\\
		&\quad
		+
		2C_g^2L_f^2\sigma_g^2
		\left[
		\frac1{\alpha Tb}
		+2\alpha
		+
		\frac{8\alpha}{\beta Tb}
		+\frac{8\alpha^2}{\beta}(1+2\alpha)
		\right]
		+
		4n\eta^2\tau
		\left(
		\frac{2C_g^4L_f^2}{\alpha}
		+
		\frac{4C_g^4L_f^2}{\beta}
		+
		\frac{16C_g^4L_f^2\alpha}{\beta}
		+
		\frac{C_f^2L_g^2}{\gamma}
		\right)
		\notag\\
		&\quad
		+
		n\eta^2\tau^2
		\left(
		C_f^2L_g^2
		+
		C_g^4L_f^2
		+
		\frac{L_F^2}{2}
		\right)
		\Bigg)
		\Bigg]^{1/2}
		{}+\rho\tau
		\Bigg[
		6
		\Bigg(
		C_f^2\sigma_{\nabla g}^2
		\left(\frac1{\gamma Tb}+2\gamma\right)
		+
		2C_g^2\sigma_f^2
		\left(\frac1{\beta Tb}+2\beta\right)
		\notag\\
		&\quad
		+
		2C_g^2L_f^2\sigma_g^2
		\left[
		\frac1{\alpha Tb}
		+2\alpha
		+
		\frac{8\alpha}{\beta Tb}
		+\frac{8\alpha^2}{\beta}(1+2\alpha)
		\right]
		+
		4n\eta^2\tau
		\left(
		\frac{2C_g^4L_f^2}{\alpha}
		+
		\frac{4C_g^4L_f^2}{\beta}
		+
		\frac{16C_g^4L_f^2\alpha}{\beta}
		+
		\frac{C_f^2L_g^2}{\gamma}
		\right)
		\notag\\
		&\quad
		+
		n\eta^2\tau^2
		\left(
		C_f^2L_g^2
		+
		C_g^4L_f^2
		+
		\frac{L_F^2}{2}
		\right)
		\Bigg)
		+
		6
		\left(
		C_f^2\Delta_{\nabla g}^2
		+
		2C_g^2\Delta_f^2
		+
		2C_g^2L_f^2\Delta_g^2
		\right)
		\Bigg]^{1/2}
		\Bigg]
		\notag\\
		\le{}&
		2\sqrt n
		\Bigg[
		\frac{4C_gC_f}{\rho T}
		+\frac{L_F\sqrt n\,\eta}{\rho}
		\notag\\
		&\quad
		+
		(2+\sqrt6\,\rho\tau)
		\Bigg(
		C_f^2\sigma_{\nabla g}^2
		\left(\frac1{\gamma Tb}+2\gamma\right)
		+
		2C_g^2\sigma_f^2
		\left(\frac1{\beta Tb}+2\beta\right)
		\notag\\
		&\qquad
		+
		2C_g^2L_f^2\sigma_g^2
		\left[
		\frac1{\alpha Tb}
		+2\alpha
		+
		\frac{8\alpha}{\beta Tb}
		+\frac{8\alpha^2}{\beta}(1+2\alpha)
		\right]
		+
		4n\eta^2\tau
		\left(
		\frac{2C_g^4L_f^2}{\alpha}
		+
		\frac{4C_g^4L_f^2}{\beta}
		+
		\frac{16C_g^4L_f^2\alpha}{\beta}
		+
		\frac{C_f^2L_g^2}{\gamma}
		\right)
		\notag\\
		&\qquad
		+
		n\eta^2\tau^2
		\left(
		C_f^2L_g^2
		+
		C_g^4L_f^2
		+
		\frac{L_F^2}{2}
		\right)
		\Bigg)^{1/2}
		+
		\sqrt6\,\rho\tau
		\left(
		C_f^2\Delta_{\nabla g}^2
		+
		2C_g^2\Delta_f^2
		+
		2C_g^2L_f^2\Delta_g^2
		\right)^{1/2}
		\Bigg].
	\end{align}
	Substituting the last display into the averaged descent inequality gives
	\begin{align}
		&\frac1T\sum_{t=0}^{T-1}
		\mathbb E\|\nabla F(\bar W_t)\|_F
		\notag\\
		&\le
		\frac{F(\bar W_0)-F_*}{\eta T}
		+\frac{L_Fn\eta}{2}
		+2\sqrt n
		\Bigg[
		\frac{4C_gC_f}{\rho T}
		+\frac{L_F\sqrt n\,\eta}{\rho}
		+(2+\sqrt6\,\rho\tau)
		\notag\\
		&\qquad\quad\times\Bigg(
		C_f^2\sigma_{\nabla g}^2
		\left(\frac1{\gamma Tb}+2\gamma\right)
		+
		2C_g^2\sigma_f^2
		\left(\frac1{\beta Tb}+2\beta\right)
		+
		2C_g^2L_f^2\sigma_g^2
		\left[
		\frac1{\alpha Tb}
		+2\alpha
		\right.
		\notag\\
		&\qquad\qquad\left.
		+\frac{8\alpha}{\beta Tb}
		+\frac{8\alpha^2}{\beta}(1+2\alpha)
		\right]
		+
		4n\eta^2\tau
		\left(
		\frac{2C_g^4L_f^2}{\alpha}
		+
		\frac{4C_g^4L_f^2}{\beta}
		+
		\frac{16C_g^4L_f^2\alpha}{\beta}
		+
		\frac{C_f^2L_g^2}{\gamma}
		\right)
		\notag\\
		&\qquad\qquad
		+
		n\eta^2\tau^2
		\left(
		C_f^2L_g^2
		+
		C_g^4L_f^2
		+
		\frac{L_F^2}{2}
		\right)
		\Bigg)^{1/2}
		+
		\sqrt6\,\rho\tau
		\left(
		C_f^2\Delta_{\nabla g}^2
		+
		2C_g^2\Delta_f^2
		+
		2C_g^2L_f^2\Delta_g^2
		\right)^{1/2}
		\Bigg].
	\end{align}
	This proves the theorem.
\end{proof}

For $\eta=\alpha=\beta=\gamma=T^{-2/3}$,
$\rho=T^{-1/3}$, $b=T^{2/3}$, and $\tau=O(1)$, the non-square-root
terms in Theorem~\ref{thm:rate-first-order} are at most $O(T^{-1/3})$,
while every term inside the square root is at most $O(T^{-2/3})$.
Moreover, $2+\sqrt6\,\rho\tau=O(1)$ and
$\rho\tau=O(T^{-1/3})$. Hence,
$T^{-1}\sum_{t=0}^{T-1}\mathbb E\|\nabla F(\bar W_t)\|_F
=O(T^{-1/3})$.

\section{Detailed Experimental Settings}
\label{app:experimental-settings}

All experiments were repeated using two random seeds, 42 and 43, and
we report the average results across the two runs.
For all Muon-based methods, we use five Newton--Schulz iterations to
approximate the orthogonalization of matrix-valued parameters. All learning
rates and method-specific hyperparameters are selected through grid search
based on validation performance.

\subsection{Robust Federated Learning}
\label{app:robust-settings}

\subsubsection{Image Classification on MNIST}
\label{app:robust-mnist-settings}

The CNN architecture used for MNIST is summarized in
Table~\ref{tab:four-layer-cnn}.

\begin{table}[H]
	\centering
	\begin{tabular}{lcccc}
		\toprule
		\textbf{Layer Type} & \textbf{Output Size} & \textbf{Kernel Size}
		& \textbf{Stride} & \textbf{Activation} \\
		\midrule
		Input       & $28\times28\times1$ & --         & -- & --      \\
		Convolution & $24\times24\times6$ & $5\times5$ & 1  & ReLU    \\
		Max Pooling & $12\times12\times6$ & $2\times2$ & 2  & --      \\
		Convolution & $8\times8\times16$  & $5\times5$ & 1  & ReLU    \\
		Max Pooling & $4\times4\times16$  & $2\times2$ & 2  & --      \\
		Flatten     & 256                  & --         & -- & --      \\
		Dense       & 120                  & --         & -- & ReLU    \\
		Output      & 10                   & --         & -- & --      \\
		\bottomrule
	\end{tabular}
	\caption{CNN architecture used for MNIST.}
	\label{tab:four-layer-cnn}
\end{table}

For the MNIST experiments, we search the learning rate over
$\{0.005,0.01,0.02,0.03,0.05,0.1,0.2,0.5,1\}$. We set the batch size
to 20 and $\lambda=0.5$ for all algorithms. FedAvg and ComFedL use a
learning rate of $0.02$, while all other methods use a learning rate of
$0.01$. For FedMuon, we set $\beta=0.1$. FedMuon-LGA uses $\alpha=0.2$
and $\beta=0.8$, while FedMuon-BC uses $\alpha=0.1$. For Local-SCGDM,
we set $\alpha=0.2$ and $\gamma=0.3$. FedCoMuon uses $\alpha=0.2$ and
$\beta=0.1$. For FedCoMuon-VR, we set $\alpha=0.2$, $\beta=0.8$,
$\gamma=0.9$, and $\rho=0.2$.

\subsubsection{Language Modeling on WikiText-2}
\label{app:robust-wikitext-settings}

We use an 8-layer Transformer encoder with an embedding dimension of 768 and
eight attention heads per layer. Each Transformer block employs a
feed-forward network with a hidden dimension of 1024, together with sinusoidal
positional encodings. A dropout rate of $0.1$ is applied throughout the
network. The final output layer projects the hidden representations to the
vocabulary size for next-token prediction.

The learning rates are searched over
$\{0.005,0.01,0.02,0.03,0.05,0.1,0.2,0.5,1\}$.
In the experiments, we set the batch size to 32 for all algorithms. We set the
learning rate to $0.1$ for FedAvg. FedMuon and FedMuon-BC use a learning rate
of $0.03$, with $\beta=0.3$ for FedMuon and $\alpha=0.1$ for FedMuon-BC.
FedMuon-LGA uses a learning rate of $0.01$, with $\alpha=0.5$ and
$\beta=0.8$. ComFedL and Local-SCGDM use a learning rate of $0.2$, while
Local-SCGDM additionally uses $\alpha=0.2$ and $\gamma=0.1$. For FedCoMuon,
we set the learning rate to $0.02$ and use $\alpha=\beta=0.2$. For
FedCoMuon-VR, we set the learning rate to $0.03$, with $\alpha=0.3$,
$\beta=0.8$, $\gamma=0.6$, and $\rho=0.2$.

\subsection{Task-Distributed Meta Learning}
\label{app:maml-settings}

\subsubsection{CNN-Based Meta Learning}
\label{app:maml-cnn-settings}

The architecture of the 7-layer CNN used for CIFAR-10 is summarized in
Table~\ref{tab:seven-layer-cnn}.

\begin{table}[H]
	\centering
	\begin{tabular}{lcccc}
		\toprule
		\textbf{Layer Type} & \textbf{Output Size} & \textbf{Kernel Size}
		& \textbf{Stride} & \textbf{Activation} \\
		\midrule
		Input       & $32\times32\times3$    & --         & -- & --      \\
		Convolution & $30\times30\times96$   & $3\times3$ & 1  & ReLU    \\
		Convolution & $14\times14\times96$   & $3\times3$ & 2  & ReLU    \\
		Convolution & $14\times14\times196$  & $1\times1$ & 1  & ReLU    \\
		Convolution & $14\times14\times10$   & $1\times1$ & 1  & ReLU    \\
		Flatten     & $1{,}960$              & --         & -- & --      \\
		Dense       & $1{,}000$              & --         & -- & ReLU    \\
		Dense       & $1{,}000$              & --         & -- & ReLU    \\
		Output      & $10$                   & --         & -- & --      \\
		\bottomrule
	\end{tabular}
	\caption{Architecture of the 7-layer CNN used for CIFAR-10.}
	\label{tab:seven-layer-cnn}
\end{table}

Both the inner and outer learning rates are searched over
$\{0.005,0.01,0.02,0.03,0.05,0.1,0.2,0.5,1\}$.
We set the batch size to 64 for all algorithms. We search $\lambda$ over
$\{0.2,0.5,1,2\}$ and select $\lambda=0.5$. The same hyperparameter search
spaces are used for all heterogeneity levels.

For the 7-layer CNN experiments on CIFAR-10, the outer learning rate is set
to $0.1$ for all methods. FedMAML uses an inner learning rate of $0.03$ for
$\chi=0.3$ and $0.5$, and $0.05$ for $\chi=0.7$. FedMuon, FedMuon-LGA,
FedMuon-BC, and ComFedL use an inner learning rate of $0.05$, whereas
Local-SCGDM, FedCoMuon, and FedCoMuon-VR use $0.01$, $0.03$, and $0.03$,
respectively. For FedMuon, we set $\beta=0.4$. For FedMuon-LGA, we set
$\alpha=0.5$ and $\beta=0.8$, while FedMuon-BC uses $\alpha=0.1$.
Local-SCGDM uses $\alpha=\gamma=0.9$. For FedCoMuon, we set
$\alpha=\beta=0.7$ across all heterogeneity settings. For FedCoMuon-VR,
we set $\alpha=0.1$, $\beta=0.8$, and $\rho=0.2$ for all heterogeneity
settings, while $\gamma$ is set to $0.7$, $0.7$, and $0.9$ for
$\chi=0.3$, $0.5$, and $0.7$, respectively.

\subsubsection{ViT-Tiny-Based Meta Learning}
\label{app:maml-vit-settings}

For ViT-Tiny-based meta learning, we conduct experiments on CIFAR-10 using a
ViT-Tiny model with 12 Transformer blocks, a hidden dimension of 192, three
attention heads, and a patch size of 4. We adopt the same federated setting
and dominant-class data partition as in the CNN experiments and set
$\chi=0.3$.

We set the batch size to 32 for all algorithms.
Both the inner and outer learning rates are searched over
$\{0.001,0.003,0.005,0.01,0.02,0.05,0.1,0.2,0.5\}$.
FedMAML uses inner and outer learning rates of $0.05$ and $0.1$, respectively.
FedMuon, FedMuon-LGA, and FedMuon-BC use an inner learning rate of $0.005$
and an outer learning rate of $0.01$. We set $\beta=0.3$ for FedMuon,
$\alpha=0.5$ and $\beta=0.8$ for FedMuon-LGA, and $\alpha=0.1$ for
FedMuon-BC. ComFedL uses inner and outer learning rates of $0.003$ and $0.1$,
respectively. Local-SCGDM uses inner and outer learning rates of $0.003$ and
$0.05$, with $\alpha=0.3$ and $\gamma=0.9$. FedCoMuon uses inner and outer
learning rates of $0.005$ and $0.01$, with $\alpha=0.9$ and $\beta=0.3$.
FedCoMuon-VR also uses inner and outer learning rates of $0.005$ and $0.01$,
respectively, with $\alpha=0.8$, $\beta=0.8$, $\gamma=0.9$, and
$\rho=0.2$.

\end{document}